\documentclass{article}

% if you need to pass options to natbib, use, e.g.:
\PassOptionsToPackage{numbers, compress}{natbib}
\usepackage{amsmath,amssymb,amsthm,fullpage,mathrsfs,pgf,tikz,caption,subcaption,mathtools,booktabs}
\usepackage[ruled,vlined,linesnumbered,noresetcount]{algorithm2e}
\usepackage[margin=1in]{geometry}
\usepackage{natbib}
\bibliographystyle{unsrtnat}
\usepackage{amsmath,amssymb,amsthm,mathtools}
\newtheorem{theorem}{Theorem}[section]
\newtheorem{corollary}[theorem]{Corollary}
\newtheorem{proposition}[theorem]{Proposition}

\newtheorem{fact}[theorem]{Fact}
\newtheorem{definition}[theorem]{Definition}

\newtheorem{observation}[theorem]{Observation}

\title{The Power of Comparisons for Actively Learning Linear Classifiers}

% The \author macro works with any number of authors. There are two commands
% used to separate the names and addresses of multiple authors: \And and \AND.
%
% Using \And between authors leaves it to LaTeX to determine where to break the
% lines. Using \AND forces a line break at that point. So, if LaTeX puts 3 of 4
% authors names on the first line, and the last on the second line, try using
% \AND instead of \And before the third author name.

\author{%
  Max~Hopkins\thanks{Department of Computer Science and Engineering, UCSD, California, CA 92092. Email: \texttt{nmhopkin@eng.ucsd.edu}. Supported by NSF Award DGE-1650112},
  Daniel~Kane\thanks{Department of Computer Science and Engineering / Department of Mathematics, UCSD, California, CA 92092. Email: \texttt{dakane@eng.ucsd.edu}. Supported by NSF CAREER Award ID 1553288 and a Sloan fellowship},
    Shachar~Lovett\thanks{Department of Computer Science and Engineering, UCSD, California, CA 92092. Email: \texttt{slovett@cs.ucsd.edu}. Supported by NSF CAREER award 1350481, CCF award 1614023 and a
Sloan fellowship}
}

\begin{document}
% \nipsfinalcopy is no longer used

\maketitle
% Given an underlying function $f$ s.t. $c(x) = sgn(f_Q(x))$, by allowing the learner to compare two points $x,x' \in X$ via $sgn(f_Q(x)-f_Q(x'))$, the learner could once again use exponentially fewer queries. s

\begin{abstract}
  In the world of big data, large but costly to label datasets dominate many fields. Active learning, a semi-supervised alternative to the standard PAC-learning model, was introduced to explore whether adaptive labeling could learn concepts with exponentially fewer labeled samples. While previous results show that active learning performs no better than its supervised alternative for important concept classes such as linear separators, we show that by adding weak distributional assumptions and allowing comparison queries, active learning requires exponentially fewer samples. Further, we show that these results hold as well for a stronger model of learning called Reliable and Probably Useful (RPU) learning. In this model, our learner is not allowed to make mistakes, but may instead answer ``I don't know.'' While previous negative results showed this model to have intractably large sample complexity for label queries,  we show that comparison queries make RPU-learning at worst logarithmically more expensive in both the passive and active regimes.
\end{abstract}
\newpage
\section{Introduction}
In recent years, the availability of big data and the high cost of labeling has lead to a surge of interest in \textit{active learning}, an adaptive, semi-supervised learning paradigm. In traditional active learning, given an instance space $X$, a distribution $D$ on $X$, and a class of concepts $c: X \to \{0,1\}$, the learner receives unlabeled samples x from $D$ with the ability to query an oracle for the labeling $c(x)$. Classically our goal would be to minimize the number of samples the learner draws before approximately learning the concept class with high probability (PAC-learning). Instead, active learning assumes unlabeled samples are inexpensive, and rather aims to minimize expensive queries to the oracle. While active learning requires exponentially fewer labeled samples than PAC-learning for simple classes such as thresholds in one dimension, it fails to provide asymptotic improvement for classes essential to machine learning such as linear separators \cite{Sanjoy}.
\\
\\
However, recent results point to the fact that with slight relaxations or additions to the paradigm, such concept classes can be learned with exponentially fewer queries. In 2013, Balcan and Long \cite{Balcan} proved that this was the case for homogeneous (through the origin) linear separators, as long as the distribution over the instance space $X=\mathbb{R}^d$ was (nearly isotropic) log-concave--a wide range of distributions generalizing common cases such as gaussians or uniform distributions over convex sets. Later, Balcan and Zhang \cite{s-concave} extended this to isotropic s-concave distributions, a diverse generalization of log-concavity including fat-tailed distributions. Similarly, El-Yaniv and Wiener \cite{El-Yaniv} proved that non-homogeneous linear separators can be learned with exponentially fewer queries over gaussian distributions with respect to the accuracy parameter, but require exponentially more queries in the dimension of the instance space $X$, making their algorithm intractable in high dimensions.
\\
\\
Kane, Lovett, Moran, and Zhang (KLMZ) \cite{KLMZ} proved that the non-homogeneity barrier could be broken for general distributions in two dimensions by empowering the oracle to compare points rather than just label them. Queries of this type are called comparison queries, and are notable not only for their increase in computational power, but for their real world applications such as in recommender systems \cite{rec} or for increasing accuracy over purely label-based techniques \cite{xu2017noise}. Our work adopts a mixture of the approaches of Balcan, Long, Zhang, and KLMZ. By leveraging comparison queries, we show that non-homogeneous linear separators may be learned in exponentially fewer samples as long as the distribution satisfies weak concentration and anti-concentration bounds, conditions realized by, for instance, (not necessarily isotropic) s-concave distributions. Further, by introducing a new average case complexity measure, \textit{average inference dimension}, that extends KLMZ's techniques to the distribution dependent setting, we prove that comparisons provide significantly stronger learning guarantees than the PAC-learning paradigm. 
\\
\\
In the late 80's, Rivest and Sloan \cite{Rivest} proposed a competing model to PAC-learning called Reliable and Probably Useful (RPU) learning. This model, a learning theoretic formalization of \textit{selective classification} introduced by Chow \cite{Chow} over two decades prior, does not allow the learner to make mistakes, but instead allows the answer ``I don't know,'' written as ``$\perp$''. Here, error is measured not by the amount of misclassified examples, but by the measure of examples on which our learner returns $\perp$. RPU-learning was for the most part abandoned by the early 90's in favor of PAC-learning as Kivinen \cite{kivinen2014reliable, Kivinen, Kivinen2} proved the sample complexity of RPU-learning simple concept classes such as rectangles required an exponential number of samples even under the uniform distribution. However, the model was recently re-introduced by El-Yaniv and Wiener \cite{El-Yaniv}, who termed it \textit{perfect selective classification}. El-Yaniv and Wiener prove a connection between Active and RPU-learning similar to the strategy employed by KLMZ \cite{KLMZ} (who refer to RPU-learners as ``confident'' learners). We extend the lower bound of El-Yaniv and Wiener to prove that actively RPU-learning linear separators with only labels is exponentially difficult in dimension even for nice distributions. On the other hand, through a structural analysis of average inference dimension, we prove that comparison queries allow RPU-learning with nearly matching sample and query complexity to PAC-learning, as long as the underlying distribution is sufficiently nice. This last result has already found further use by Hopkins, Kane, Lovett, and Mahajan \cite{hopkins2020noise}, who use our analysis of average inference dimension to extend their comparison-based algorithms for robustly learning non-homogeneous hyperplanes to higher dimensions.
\subsection{Background and Related Work}
\subsubsection{PAC-learning}
Probably Approximately Correct (PAC)-learning is a framework for learning classifiers over an instance space introduced by Valiant \cite{Valiant} with aid from Vapnik and Chervonenkis \cite{vapnik1974theory}. Given an instance space $X$, label space $Y$, and a concept class $C$ of concepts $c: X \to Y$, PAC-learning proceeds as follows. First, an adversary chooses a hidden distribution $D$ over $X$ and a hidden classifier $c \in C$. The learner then draws labeled samples from $D$, and outputs a concept $c'$ which it thinks is close to $c$ with respect to $D$. Formally, we define closeness of $c$ and $c'$ as the error: 
\[err_D(c',c) = Pr_{x \in D}[c'(x) \neq c(x)].\] 
We say the pair $(X,C)$ is PAC-learnable if there exists a learner $A$ which, using only $n(\varepsilon,\delta)=\text{Poly}(\frac{1}{\varepsilon},\frac{1}{\delta})$ samples\footnote{Formally, $n(\varepsilon,\delta)$ must also be polynomial in a number of parameters of $C$}, for all $\varepsilon,\delta$ picks a classifier $c'$ that with probability $1-\delta$ has at most $\varepsilon$ error from $c$. Formally,
\[
\exists A  \ s.t. \ \forall c \in C, \forall D, Pr_{S \sim D^{n(\varepsilon,\delta)}}[err_D(A(S),c) < \varepsilon] \geq 1-\delta.
\]
The goal of PAC-learning is to compute the sample complexity $n(\varepsilon,\delta)$ and thereby prove whether certain pairs $(X,C)$ are efficiently learnable. In this paper, we will be concerned with the case of binary classification, where $Y=\{0,1\}$. In addition, in the case that $C$ is linear separators we instead write our concept classes as the sign of a family $H$ of functions $h: X \to \mathbb{R}$. Instead of $(X,C)$, we write the \textit{hypothesis class} $(X,H)$, and each $h \in H$ defines a concept $c_h(x) = sgn(h(x))$. The sample complexity of PAC-learning is characterized by the VC dimension \cite{VC, Blumer,hanneke2016optimal} of $(X,H)$ which we denote by $k$, and is given by:
\begin{align*}
n(\varepsilon,\delta) &=\theta\left(\frac{k+\log(\frac{1}{\delta})}{\varepsilon} \right).
\end{align*}
\subsubsection{RPU-learning}
Reliable and Probably Useful (RPU)-learning is a stronger variant of PAC-learning introduced by Rivest and Sloan \cite{Rivest}, in which the learner is reliable: it is not allowed to make errors, but may instead say ``I don't know'' (or for shorthand, ``$\perp$''). Since it is easy to make a reliable learner by simply always outputting ``$\perp$'', our learner must be useful, and with high probability cannot output ``$\perp$'' more than a small fraction of the time. Let $A$ be a reliable learner, we define the error of $A$ on a sample $S$ with respect to $D,c$ to be 
\[
err_D(A(S),c) = Pr_{x \sim D}[A(S)(x) = \perp].
\] 
We call $1-err_D(A(S),c)$ the \textit{coverage} of the learner $A$, denoted $C_D(A(S))$, or just $C(A)$ when clear from context.
Finally, we say the pair $(X,C)$ is RPU-learnable if $\forall \varepsilon, \delta$, there exists a reliable learner $A$ which in  $n(\varepsilon,\delta)=\text{Poly}(\frac{1}{\varepsilon},\frac{1}{\delta})$ samples has error $\leq \varepsilon$ with probability $\geq 1-\delta$:
\[
\exists A \ s.t. \ \forall c \in C, \forall D, Pr_{S \sim D^{n(\varepsilon,\delta)}}[err_D(A(S),c) \leq \varepsilon] \geq  1-\delta
\]
RPU-learning is characterized by the VC dimension of certain intersections of concepts \cite{Kivinen}. Unfortunately, many simple cases turn out to be not RPU-learnable (e.g. rectangles in $[0,1]^d$, $d\geq 2$ \cite{kivinen2014reliable}), with even relaxations having exponential sample complexity \cite{Kivinen2}. 
\subsubsection{Passive vs Active Learning}
PAC and RPU-learning traditionally refer to supervised learning, where the learning algorithm receives pre-labeled samples. We call this paradigm passive learning. In contrast, active learning refers to the case where the learner receives unlabeled samples and may adaptively query a labeling oracle. Similar to the passive case, for active learning we study the query complexity  $q(\varepsilon,\delta)$, the minimum number of queries to learn some pair $(X,C)$ in either the PAC or RPU learning models. The hope is that by adaptively choosing when to query the oracle, the learner may only need to query a number of samples logarithmic in the sample complexity.
\\
\\
We will discuss two paradigms of active learning: pool-based active learning, and membership query synthesis (MQS) \cite{pool, angluin1988queries}. In the former, the learner has access to a pool of unlabeled data and may request that the oracle label any point. This model matches real-world scenarios where learners have access to large, unlabeled datasets, but labeling is too expensive to use passive learning (e.g. medical imagery). Membership query synthesis allows the learner to synthesize points in the instance space and query their labels. This model is the logical extreme of the pool-based model where our pool is the entire instance space. Because we will be considering learning with a fixed distribution, we will slightly modify MQS: the learner may only query points in the support of the distribution\footnote{We note that in this version of the model, the learner must know the support of the distribution. Since we only use the model for lower bounds, we lose no generality by making this assumption.}. This is the natural specification to distribution dependent learning, as it still models the case where our pool is as large as possible.
\subsubsection{The Distribution Dependent Case}
While PAC and RPU-learning were traditionally studied in the worst-case scenario over distributions, data in the real world is often drawn from distributions with nice properties such as concentration and anti-concentration bounds. As such, there has been a wealth of research into distribution-dependent PAC-learning, where the model has been relaxed only in that some distributional conditions are known. Distribution dependent learning has been studied in both the passive and the active case \cite{Balcan, Long, Long2, Hanneke}. Most closely related to our work, Balcan and Long \cite{Balcan} proved new upper bounds on active and passive learning of homogeneous (through the origin) linear separators in 0-centered log-concave distributions. Later, Balcan and Zhang \cite{s-concave} extended this to isotropic s-concave distributions. We directly extend the original algorithm of Balcan and Long to non-homogeneous linear separators via the inclusion of comparison queries, and leverage the concentration results of Balcan and Zhang to provide an inference based algorithm for learning under general s-concave distributions.
\subsubsection{The Point Location Problem}
Our results on RPU-learning imply the existence of simple linear decision trees (LDTs) for an important problem in computer science and computational geometry known as the point location problem. Given a set of $n$ hyperplanes in $d$ dimensions, called a \textit{hyperplane arrangement} of size $n$ and denoted by $H=\{h_1,\ldots,h_n\}$, it is a classic result that $H$ partitions $\mathbb{R}^d$ into $O(n^d)$ cells. The point location problem is as follows:
\begin{definition}[Point Location Problem]
Given a hyperplane arrangement $H=\{h_1,\ldots,h_n\}$ and a point $x$, both in $\mathbb{R}^d$, determine in which cell of $H$ $x$ lies.
\end{definition}
Instances of this problem show up throughout computer science, such as in $k$-sum, subset-sum, knapsack, or any variety of other problems \cite{K-sum}. The best known depth for a linear decision tree solving the point location problem is from a recent work of Hopkins, Kane, Lovett, and Mahajan \cite{hopkins2020point}, who proved the existence of a nearly optimal $\tilde{O}(d\log(n))$ depth LDT for arbitrary $H$ and $x$. The caveat of this work is that the LDT may use arbitrary linear queries, which may be too powerful of a model in practice. Kane, Lovett, and Moran \cite{KLM} offer an $O(d^4\log(d)\log(n))$ depth LDT restricting the model to generalized comparison queries, queries of the form $sgn(a\langle h_1,x \rangle -b\langle h_2,x \rangle)$ for a point $x$ and hyperplanes $h_1,h_2$. These queries are nice as they preserve structural properties of the input $H$ such as sparsity, but they still suffer from over-complication--any $H$ allows an infinite set of queries.
\\
\\
KLMZ's \cite{KLMZ} original work on inference dimension showed that in the worst case, the depth of a comparison LDT for point location is $\Omega(n)$. However, by restricting $H$ to have good margin or bounded bit complexity, they build a comparison LDT of depth $\tilde{O}(d\log(d)\log(n))$, which comes with the advantage of drawing from a finite set of queries for a given problem instance. Our work provides another result of this flavor: we will prove that if $H$ is drawn from a distribution with weak restrictions, for large enough $n$ there exists a comparison LDT with expected depth $\tilde{O}(d\log^2(d)\log^2(n))$.

\subsection{Our Results}
\subsubsection{Notation}
We begin by introducing notation for our learning models. For a distribution $D$, an instance space $X\subseteq \mathbb{R}^d$, and a hypothesis class $H: X \to \mathbb{R}$, we write the triple $(D,X,H)$ to denote the problem of learning a hypothesis $h \in H$ with respect to $D$ over $X$. When $D$ is the uniform distribution over $S \subseteq X$, we will write $(S,X,H)$ for convenience. We will further denote by $B^d$ the unit ball in $d$ dimensions, and by $H_d$ hyperplanes in $d$ dimensions. Given $h \in H$ and a point $x \in X$, a \emph{label query} determines $\text{sign}(h(x))$; given $x,x' \in X$, a \emph{comparison query} determines $\text{sign}(h(x)-h(x'))$.
\\
\\
In addition, we will separate our models of learnability into combinations of three classes Q,R, and S, where Q $\in \{\text{Label, Comparison}\}$, R $ \in \{ \text{Passive, Pool, MQS} \}$, and S $\in \{ \text{PAC, RPU} \}$. Informally, we say an element $Q$ defines our query type, an element in $R$ our learning regime, and an element in $S$ our learning model. Learnability of a triple is then defined by the combination of any choice of query, regime, and model, which we term as the $Q$-$R$-$S$ learnability of $(D,X,H)$. Note that in Comparison-learning we have both a labeling and comparison oracle.
\\
\\
Finally, we will discuss a number of different measures of complexity for $Q$-$R$-$S$ learning triples. For passive learning, we will focus on the sample complexity $n(\varepsilon,\delta)$. For active learning, we will focus on the query complexity $q(\varepsilon,\delta)$. In both cases, we will often drop $\delta$ and instead give bounds on the \textit{expected} sample/query complexity for error $\varepsilon$ denoted $\mathbb{E}[n(\varepsilon)]$ (or $q(\varepsilon)$ respectively), the expected number of samples/queries needed to reach $\varepsilon$ error. A bound for probability $1-\delta$ then follow with $\log(1/\delta)$ repetitions by Chernoff. In the case of a finite instance space $X$ of size $n$, we denote the expected query complexity of perfectly learning $X$ as $\mathbb{E}[q(n)]$.
\\
\\
As a final note, we will at times use a subscript $d$ in our asymptotic notation to suppress factors only dependent on dimension.
\subsubsection{PAC-Learning}
To show the power of active learning with comparison queries in the PAC-learning model, we will begin by proving lower bounds. In particular, we show that neither active learning nor comparison queries alone provide a significant speed-up over passive learning. In order to do this, we will assume the stronger MQS model, as lower bounds here transfer over to the pool-based regime.
\begin{proposition}
\label{Label-MQS-PAC}
For small enough $\varepsilon$, and $\delta = \frac{1}{2}$, the query complexity of Label-MQS-PAC learning $(B^d,\mathbb{R}^d,H_d)$ is:
\[
q(\varepsilon,1/2) = \Omega_d \left( \left ( \frac{1}{\varepsilon} \right )^{\frac{d-1}{d+1}}\right).
\]
\end{proposition}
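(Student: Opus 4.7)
The plan is to exhibit a family of $N = \Theta_d(\varepsilon^{-(d-1)/(d+1)})$ hyperplanes in $H_d$ whose disagreement regions on $B^d$ are pairwise disjoint sets of uniform measure $\Theta(\varepsilon)$, and then argue that distinguishing them under a uniformly random target requires $\Omega(N)$ label queries, even in the MQS model.

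Fix $t = c_d \varepsilon^{2/(d+1)}$ for a suitable dimensional constant $c_d$, and for each $v \in S^{d-1}$ consider the spherical cap $R_v := \{x \in B^d : \langle v, x\rangle > 1-t\}$. A direct integration gives $\Pr_{x \sim B^d}[x \in R_v] = \Theta_d(t^{(d+1)/2})$, so $c_d$ can be tuned so this probability equals $\varepsilon$. Two caps $R_v$ and $R_{v'}$ are disjoint whenever the angle between $v$ and $v'$ exceeds $2\arccos(1-t) \approx 2\sqrt{2t}$, so a standard sphere-packing argument produces $N = \Omega_d(t^{-(d-1)/2}) = \Omega_d(\varepsilon^{-(d-1)/(d+1)})$ directions $v_1,\ldots,v_N$ whose caps $R_i := R_{v_i}$ are pairwise disjoint. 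Define $h_i(x) := \langle v_i, x\rangle - (1-t) \in H_d$; disjointness of the caps guarantees that each $x \in B^d$ lies in $R_i$ for at most one $i$, while $\mathrm{err}_D(h_i, h_j) = \Pr_D[R_i \triangle R_j] = 2\varepsilon$ for all $i \neq j$.

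For the lower bound, draw the target $h_{i^*}$ uniformly from $\{h_1,\ldots,h_N\}$; by Yao's principle it suffices to consider any deterministic MQS learner of query budget $q$. Examine the root-to-leaf path in the learner's decision tree on which every answer is $-1$: along this path the learner issues a fixed sequence $x_1,\ldots,x_q \in B^d$, and since each $x_k$ meets at most one cap, the set $I := \{i : \exists k,\, x_k \in R_i\}$ has $|I| \leq q$. If $i^* \in I$, some query returns $+1$, pinning down $i^*$ and letting the learner succeed; if $i^* \notin I$, every answer is $-1$, so the learner reaches a fixed terminal node and outputs one hypothesis $\hat h$, which by the triangle inequality and the pairwise $2\varepsilon$ separation can have error below $\varepsilon$ against at most one $h_i$. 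Consequently
\[
\Pr[\mathrm{success}] \;\leq\; \frac{|I| + 1}{N} \;\leq\; \frac{q+1}{N},
\]
so insisting $\Pr[\mathrm{success}] \geq 1/2$ forces $q \geq N/2 - 1 = \Omega_d(\varepsilon^{-(d-1)/(d+1)})$.

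The only delicate ingredients are geometric: verifying the cap volume $\Theta_d(t^{(d+1)/2})$ and the spherical packing $\Omega_d(t^{-(d-1)/2})$, both classical. The learning-theoretic content then collapses to the single observation that a label query can return $+1$ only when the queried point lies in the (unique) cap of the target, so $q$ queries can rule out at most $O(q)$ of the $N$ equally likely candidates---leaving most of the probability mass of the posterior concentrated at the ``all-negative'' leaf of the decision tree, which is where MQS adaptivity fails to help.
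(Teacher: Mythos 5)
Your proof is correct and follows essentially the same packing-plus-Yao strategy as the paper: pack $\Omega_d(\varepsilon^{-(d-1)/(d+1)})$ disjoint spherical caps of measure $\Theta(\varepsilon)$, let the adversary choose one uniformly at random, and observe that each label query can rule out at most one cap. Your write-up is if anything tighter than the paper's, since the ``all-negative root-to-leaf path'' argument makes rigorous the paper's looser statement that a bounded-budget learner can only uncover the special cap with small probability and must otherwise commit to a fixed hypothesis that is $\varepsilon$-close to at most one of the pairwise $2\varepsilon$-separated candidates.
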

Thus without enriched queries, active learning fails to significantly improve over passive learning even over a nice distributions. Likewise, adding comparison queries alone also provides little improvement.
\begin{proposition}
\label{prop:comp-pass-pac}
For small enough $\varepsilon$, and $\delta = \frac{3}{8}$, the sample complexity of Comparison-Passive-PAC learning $(B^d,\mathbb{R}^d,H_d)$ is:
\[
n(\varepsilon,3/8) = \Omega \left( \frac{1}{\varepsilon} \right).
\]
\end{proposition}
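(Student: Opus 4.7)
The plan is to exhibit two linear separators whose labels and comparisons produce nearly identical observations on a random sample, and then run a standard ``no sample lands in the disagreement region'' argument. The key observation is that two parallel hyperplanes $h_1(x) = \langle w, x\rangle - b_1$ and $h_2(x) = \langle w, x\rangle - b_2$ with the same normal $w$ satisfy
\[
\text{sign}(h_i(x) - h_i(x')) = \text{sign}(\langle w, x - x'\rangle),
\]
which is independent of $i$. Thus comparison queries provide \emph{no} information that distinguishes $h_1$ from $h_2$, and the learner is effectively reduced to label queries alone.

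First I would fix a unit vector $w$ (say $w = e_1$) and use continuity of volume to choose offsets $b_1 < b_2$ so that the strip $S = \{x \in B^d : b_1 \leq \langle w, x\rangle < b_2\}$ has measure exactly $2\varepsilon$ under the uniform distribution on $B^d$; this is possible for all sufficiently small $\varepsilon$. The two hypotheses $h_1, h_2$ then disagree precisely on $S$, a set of measure $2\varepsilon$.

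Next I would let the adversary draw the true hypothesis uniformly at random from $\{h_1, h_2\}$ and hand the learner $N$ i.i.d.\ labeled samples from the uniform distribution on $B^d$, together with free access to comparison queries on those samples. Let $E$ be the event that none of the $N$ samples falls in $S$. Conditioned on $E$, the learner's entire transcript (labels and comparisons) has the same distribution under $h_1$ and under $h_2$, so its output is independent of the truth and is therefore wrong with probability at least $1/2$. A wrong output misclassifies all of $S$, incurring error $2\varepsilon > \varepsilon$.

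Finally, $\Pr[E] = (1 - 2\varepsilon)^N \geq 1 - 2\varepsilon N$, so the overall failure probability is at least $\tfrac{1}{2}(1 - 2\varepsilon N)$, which exceeds $3/8$ whenever $N \leq \tfrac{1}{8\varepsilon}$. This yields $n(\varepsilon, 3/8) = \Omega(1/\varepsilon)$. There is no serious obstacle: the only real content is the cancellation of the offset in comparison queries for parallel hypotheses, after which the rest is a textbook coupon-style lower bound. A minor care point is to verify the volume-continuity claim for the strip measure, which is straightforward since the ball's marginal in direction $w$ has a continuous density.
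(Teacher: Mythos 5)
Your proof takes essentially the same route as the paper's: two parallel hyperplanes whose comparison responses coincide (since the offset cancels), an indistinguishability event $E$ that no sample falls in the disagreement region of measure $2\varepsilon$, and the bound $\Pr[\neg E]\leq 2\varepsilon N$ giving $N=\Omega(1/\varepsilon)$. The paper realizes the disagreement region as a spherical cap cut by $h_{2\varepsilon}$ together with the parallel tangent hyperplane, rather than a strip between two interior hyperplanes; this is cosmetic. One sentence in your write-up needs tightening: you claim ``a wrong output misclassifies all of $S$, incurring error $2\varepsilon$,'' which implicitly assumes the learner outputs one of $h_1,h_2$. It need not --- it could output the hyperplane bisecting $S$, achieving error exactly $\varepsilon$ against both. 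The step the paper makes explicit, and that you should insert, is the pigeonhole observation: any fixed hypothesis $\hat{h}$ agrees with $h_1$ on some part of $S$ and with $h_2$ on the complement within $S$, so at least one of these parts has mass $\geq\varepsilon$, and hence $\hat{h}$ has error $\geq\varepsilon$ against at least one of $h_1,h_2$. Since the truth is uniform over $\{h_1,h_2\}$, the conditional failure probability is still $\geq 1/2$, and the rest of your calculation goes through.
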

Now we can compare the query complexity of active learning with comparisons to the above. For our upper bound, we will assume the pool-based model with a Poly$(1/\varepsilon,\log(1/\delta))$ pool size, as upper bounds here transfer to the MQS model. Our algorithm for Comparison-Pool-PAC learning combines a modification of Balcan and Long's \cite{Balcan} learning algorithm with noisy thresholding to provide an exponential speed-up for non-homogeneous linear separators.
\begin{theorem}
\label{Comparison-Pool-PAC}
Let $D$ be a log-concave distribution over $\mathbb{R}^d$. Then the query complexity of Comparison-Pool-PAC learning $(D, \mathbb{R}^d,H_d)$ is
\[
q(\varepsilon, \delta) = \tilde{O}\left( \left(d+\log\left(\frac{1}{\delta}\right)\right)\log\left(\frac{1}{\varepsilon}\right)\right).
\]
\end{theorem}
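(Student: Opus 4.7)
The plan is to reduce the non-homogeneous case to two separate subproblems: learning the normal direction $w$ of the hyperplane $h(x) = \langle w, x\rangle + b$, and then learning the threshold $b$. Comparison queries are the key tool that makes this decomposition possible, since $\text{sign}(h(x) - h(x')) = \text{sign}(\langle w, x - x'\rangle)$ depends only on $w$ and not on $b$. Thus every comparison query effectively gives a label query for the homogeneous separator $w$ applied to the difference $x - x'$.

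First I would reduce to the homogeneous case. Sampling pairs $(x_i, x_j)$ from the pool and looking at differences $x_i - x_j$ yields samples from the symmetrized distribution $D - D$. Since $D$ is log-concave, so is $D - D$ (convolution of log-concave densities is log-concave), and it is symmetric about the origin. I would then feed these pairs into the Balcan-Long algorithm for actively PAC-learning homogeneous half-spaces under log-concave distributions, which achieves query complexity $\tilde{O}((d + \log(1/\delta'))\log(1/\varepsilon'))$. Choosing $\varepsilon'$ small enough relative to the final target $\varepsilon$ (a polynomially small factor suffices, contributing only a constant-factor blowup inside the $\log$), this returns an estimate $\hat{w}$ whose induced homogeneous classifier agrees with $w$ on all but an $\varepsilon'$-fraction of points from $D-D$. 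A standard change-of-measure argument converts this into the statement that, under $D$ itself, the projections $\langle \hat{w}, x\rangle$ are monotone in $\langle w, x\rangle$ on all but an $O(\varepsilon)$-fraction of $x$.

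Next I would learn the threshold $b$ via noisy binary search on the pool. Sort the pool by $\langle \hat{w}, x\rangle$, and use label queries to locate a threshold $\hat{b}$ such that the resulting classifier $\text{sign}(\langle \hat{w}, x\rangle + \hat{b})$ disagrees with the true labeling on at most an $\varepsilon$-fraction of the pool. Because $\hat{w} \neq w$, the labels along the sorted order are not perfectly monotone; they are monotone up to an $O(\varepsilon)$-fraction of ``noisy'' points. Using a noisy binary search procedure (e.g.\ Feige--Raghavan--Peleg--Upfal style), one can still localize $\hat{b}$ to within $\varepsilon$ of the optimal threshold in $\tilde{O}(\log(1/\varepsilon) + \log(1/\delta))$ label queries. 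A log-concavity-based anti-concentration bound ensures that shifts of the threshold by at most $\varepsilon$ along the sorted order change the error by at most $O(\varepsilon)$.

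Finally I would union-bound the error contributions: the direction error $O(\varepsilon)$ from Balcan--Long, the threshold error $O(\varepsilon)$ from noisy binary search, and the pool-vs-distribution error from standard uniform convergence with a pool of size $\mathrm{poly}(1/\varepsilon, \log(1/\delta))$. Summing the query complexities from the two phases yields the claimed $\tilde{O}((d + \log(1/\delta))\log(1/\varepsilon))$ bound. The main obstacle I anticipate is the noisy thresholding step: specifically, controlling the noise rate on the induced 1D problem so that it remains small enough throughout the binary search, which requires a careful quantitative version of the anti-concentration properties of log-concave distributions to translate ``$\hat{w}$ close to $w$ in angle'' into ``most points have a correct 1D order.''
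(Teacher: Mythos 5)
Your proposal follows essentially the same route as the paper's proof: reduce to the homogeneous case by observing $\text{sign}(h(x)-h(x'))=\text{sign}(\langle w,x-x'\rangle)$, feed differences $x-x'\sim D-D$ into the Balcan--Long algorithm (using that $D-D$ is log-concave) to learn the direction, and then recover the threshold via a one-dimensional search. This is exactly the decomposition the paper uses.

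The one place where you reach for a different tool is the thresholding step, and I would flag this as a potential soft spot rather than a genuine alternative. You propose Feige--Raghavan--Peleg--Upfal-style noisy binary search, but that framework is designed for stochastic noise (each query independently flipped with probability $p<1/2$), whereas here the ``noise'' is deterministic and adversarial: the points where $\text{sign}(\langle\hat w,x\rangle+b)\neq\text{sign}(\langle w,x\rangle+b)$ are a fixed set determined by the angle between $\hat w$ and $w$. What actually saves the argument -- and what the paper proves directly -- is a spatial confinement statement, not a noise-rate statement: using the angle bound from Balcan--Long (tightened to $O(\varepsilon/\log(1/\varepsilon))$ so it survives a union bound over the $O(1/\varepsilon)$ sample) together with the concentration of isotropic log-concave measure, one shows that with probability $2/3$ \emph{every} disagreement point in the sample has $|\langle\hat w,x\rangle+b|<\varepsilon/8$, while simultaneously there exist correctly-labeled ``anchor'' points within measure $\varepsilon/8$ of the threshold on each side. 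Given that, a plain (non-noisy) binary search returning any consistent threshold already lands within $O(\varepsilon)$ error; the $\delta$ dependence is then obtained by repeating $O(\log(1/\delta))$ times and taking the median shift. Your instinct that ``a careful quantitative version of anti-concentration'' is the missing ingredient is right; I'd just emphasize that the right formulation is confinement of the disagreement region near the threshold, not a per-query noise bound, and once you have confinement you don't need FRPU machinery at all.
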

Kulkarni, Mitter, and Tsitsiklis \cite{lowerbound} (combined with an observation from \cite{Balcan}) also give a lower bound of $d\log(1/\varepsilon)$ for log-concave distributions for arbitrary binary queries (and thus for our setting), so Theorem~\ref{Comparison-Pool-PAC} is near tight in dimension and error. It should be noted, however, that to cover non-isotropic distributions, Theorem~\ref{Comparison-Pool-PAC} must know the exact distribution $D$. This restriction becomes unnecessary if the distribution is promised to be isotropic.
\subsection{RPU-Learning}
In the RPU-learning model, we will first confirm that passive learning with label queries is intractable information theoretically, and continue to show that active learning alone provides little improvement. Unlike in PAC-learning however, we will show that comparisons in this regime provide a significant improvement in not only active, but also passive learning.
\begin{proposition}
\label{Label-Passive-RPU}
The expected sample complexity of Label-Passive-RPU learning $(B^d,\mathbb{R}^d,H_d)$ is:
\[
\mathbb{E}[n(\varepsilon)] = \tilde{\Theta}_d\left( \left( \frac{1}{\varepsilon} \right)^{\frac{d+1}{2}} \right ).
\]
\end{proposition}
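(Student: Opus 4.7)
The plan is to exhibit the essentially unique optimal reliable learner and reduce both bounds to estimating the expected measure of its abstention region.

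\textbf{Optimal reliable learner.} Given a labeled sample $S$, I would let $V(S) \subseteq H_d$ denote the version space of hyperplanes consistent with $S$, and define the \emph{undetermined region}
\[
U(S) = \{x \in B^d : \exists\, h_1, h_2 \in V(S) \text{ with } \mathrm{sign}(h_1(x)) \neq \mathrm{sign}(h_2(x))\}.
\]
The reliable learner that outputs the label forced by $V(S)$ whenever $V(S)$ is unanimous on $x$, and $\perp$ otherwise, is optimal: on any $x \in U(S)$, committing to a label would contradict some $h \in V(S)$. Hence both bounds reduce to estimating $\mathbb{E}_S[\mu(U(S))]$, where $\mu$ is the uniform measure on $B^d$; the expected sample complexity is the smallest $n$ with $\mathbb{E}_S[\mu(U(S))] \leq \varepsilon$.

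\textbf{Geometric characterization.} Next, I would lift $x \in \mathbb{R}^d$ to $\tilde x = (x,1) \in \mathbb{R}^{d+1}$ so that non-homogeneous hyperplanes become homogeneous. A short Farkas argument then shows that a positively labeled $x$ lies in $U(S)$ iff $\tilde x$ is \emph{not} in the convex cone generated by $\{\tilde s : s \in S^+\} \cup \{-\tilde s : s \in S^-\}$ (and symmetrically for negatively labeled $x$). In particular $\mathrm{conv}(S^+)$ always belongs to the determined region, yielding the containment
\[
U(S) \cap \{h > 0\} \subseteq (B^d \cap \{h > 0\}) \setminus \mathrm{conv}(S^+),
\]
and symmetrically on the negative side.

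\textbf{Matching bounds via R\'enyi--Sulanke.} For the upper bound, I would invoke R\'enyi--Sulanke-type estimates: the expected volume of a spherical cap of $B^d$ missed by the convex hull of $m$ uniform samples inside it is $\Theta_d(m^{-2/(d+1)})$ (the flat face contributes only $O(m^{-1})$, dominated for $d \geq 2$). Since $|S^\pm|$ concentrate around means proportional to $n$, this yields $\mathbb{E}_S[\mu(U(S))] = \tilde O_d(n^{-2/(d+1)})$ and hence $n = \tilde O_d(\varepsilon^{-(d+1)/2})$. For the lower bound, I would fix $h$ to pass through the origin and identify a subregion near the curved boundary, of depth $\Theta(n^{-2/(d+1)})$, that with constant probability contains no positive sample \emph{and} cannot be reached by any cone extension through $\mathrm{conv}(S^-)$, hence lies in $U(S)$ and contributes the matching $\Omega_d(n^{-2/(d+1)})$.

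The hardest step will be this last geometric claim: the cone structure contributed by negative samples can \emph{a priori} shrink $U(S)$ below the naive convex-hull gap, so one must show that such cone extensions can rescue only a constant fraction of the R\'enyi--Sulanke shell near the curved boundary. This is exactly what pins down the exponent $(d+1)/2$; the polylogarithmic slack hidden inside $\tilde\Theta_d$ arises from the Chernoff control of $|S^\pm|$ and from passing between expected and high-probability error.
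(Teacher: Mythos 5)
Your optimal-learner characterization and the lifting/Farkas description of $U(S)$ are correct, and your upper bound is essentially the paper's: bound the abstention region by $B^d\setminus\bigl(\mathrm{conv}(S^+)\cup\mathrm{conv}(S^-)\bigr)$ and apply R\'enyi--Sulanke-type estimates (the paper cites the fact $E(K,n)=1-c(d)n^{-2/(d+1)}$ holding for any convex body $K$, which covers both half-ball pieces). Minor caveat: passing from ``expected abstention $\le\varepsilon$'' to ``$n(\varepsilon,\delta)$ for constant $\delta$'' needs a Markov/repetition step; the paper does this explicitly and loses a $\log(1/\varepsilon)$ factor, hidden in the $\tilde\Theta$.

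The lower bound is where you have a real gap, and you flagged it yourself. Fixing $h$ through the origin is the wrong choice: with roughly balanced $S^+$ and $S^-$, the cone extensions $c^+ + t(c^+-c^-)$ (equivalently, $x$ is inferred positive iff some segment $[x,c^-]$ with $c^-\in\mathrm{conv}(S^-)$ passes through $\mathrm{conv}(S^+)$) genuinely can rescue most of the R\'enyi--Sulanke shell near the curved boundary on the positive side --- for a point $x$ just outside $\mathrm{conv}(S^+)$ near the pole, segments to $\mathrm{conv}(S^-)$ traverse nearly the whole upper half-ball and are quite likely to hit $\mathrm{conv}(S^+)$. So the claim that ``cone extensions rescue only a constant fraction'' is not just the hardest step, it is plausibly false for this choice of $h$, and in any case you give no argument. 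The paper sidesteps this entirely: the adversary mixes (via Yao) a hyperplane \emph{tangent} to $B^d$ (all-positive labeling, chosen with probability $1-\delta$) with a uniformly random one. When the tangent hyperplane is chosen, $S^-=\emptyset$, the lifted cone is $\mathrm{cone}\{\tilde s: s\in S\}$, its slice at $z_{d+1}=1$ is exactly $\mathrm{conv}(S)$, so $U(S)=B^d\setminus\mathrm{conv}(S)$ with no cone extensions at all, and the convex-hull volume estimate gives the lower bound directly. If you replace your origin-centered $h$ by a tangent one and keep the rest of your framework, the difficulty disappears.
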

Thus we see that RPU-learning linear separators is intractable for large dimension. Further, active learning with label queries is of the same order of magnitude.
\begin{proposition}
\label{Label-MQS-RPU}
For all $\delta < 1$, the query complexity of Label-MQS-RPU learning $(B^d,\mathbb{R}^d,H_d)$ is:
\[
q(\varepsilon,\delta) = \Omega_d\left( \left( \frac{1}{\varepsilon} \right)^{\frac{d-1}{2}} \right ).
\]
\end{proposition}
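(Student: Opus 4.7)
The plan is a Yao-style adversary argument against a hard family of $N = \Theta_d(\varepsilon^{-(d-1)/2})$ hyperplanes whose positive caps in $B^d$ are pairwise disjoint and together just barely carry mass more than $\varepsilon$.

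\textbf{Construction.} I would fix a cap height $s$ and pick a maximal packing $v_1,\ldots,v_N \in S^{d-1}$ with angular spacing calibrated so that the caps $C_i = \{x\in B^d : \langle v_i,x\rangle > 1-s\}$ are pairwise disjoint. A standard ball-volume computation gives $\Pr_{x\sim B^d}[x\in C_i] = \Theta_d(s^{(d+1)/2})$, while such a packing has size $N = \Theta_d(s^{-(d-1)/2})$. I would then tune $s$ so that each cap has mass $m = c\varepsilon/N$ for a suitable constant $c>2$; solving this system yields $N = \Theta_d(\varepsilon^{-(d-1)/2})$ and $m = \Theta_d(\varepsilon^{(d+1)/2})$. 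Set $h_i(x) := \langle v_i,x\rangle - (1-s)$ and $\mathcal{F} = \{h_1,\ldots,h_N\}\subset H_d$.

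\textbf{Yao argument and coverage obstruction.} Put the uniform prior on $\mathcal{F}$ and fix any deterministic Label-MQS learner making $q$ queries. Follow the adversary transcript in which every answer is $0$: the resulting queries $x_1,\ldots,x_q$ are then deterministic, and since the caps are disjoint each $x_k$ lies in at most one $C_i$, giving a set of ``eliminated'' indices $S = \{i:\exists k,\ x_k\in C_i\}$ with $|S|\leq q$. For every $i^*\notin S$ this all-zero transcript is genuinely consistent with the target $h_{i^*}$, so the learner outputs a single fixed classifier $A_0$ whose version space contains $\{h_i:i\notin S\}$. For any $x\in C_i$ with $i\notin S$ we have $h_i(x)=1$ but $h_j(x)=0$ for every other $j\notin S$ (by disjointness), so $A_0$ must output $\perp$. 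Hence its $\perp$-region contains $\bigcup_{i\notin S} C_i$, of mass at least $(N-q)m$, which strictly exceeds $\varepsilon$ once $q\leq N/2$. Since the uniform prior places mass at least $(N-q)/N$ on targets in $S^c$, the learner fails with at least this probability; requiring RPU success with probability $\geq 1-\delta$ therefore forces $q = \Omega_d((1-\delta)\,\varepsilon^{-(d-1)/2})$ for any fixed $\delta<1$, which is the claimed bound.

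\textbf{Main obstacle.} The delicate step is the quantitative calibration. The exponent $(d-1)/2$ emerges from simultaneously balancing $N\cdot m\asymp \varepsilon$ (so that losing a constant fraction of the surviving caps already exhausts the RPU budget) against the packing equation $N \asymp m^{-(d-1)/(d+1)}$ (so that $N$ disjoint caps of mass $m$ actually fit on $S^{d-1}$). Any other choice of cap size either gives too few hyperplanes to force many queries, or too little disagreement among survivors to violate coverage. Once the construction is calibrated correctly and disjointness of the caps is verified via the angular packing, the version-space argument above is essentially mechanical.
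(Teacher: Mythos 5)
Your proof is correct and takes a genuinely different route from the paper's. The paper's argument is a corollary of its passive RPU lower bound: it cites a result of Bárány/Kenderov on the \emph{maximum} volume of a polytope inscribed in $B^d$ on $n$ vertices, which is $1-\Theta_d(n^{-2/(d-1)})$, and reuses the passive adversary (with probability $1-\delta$ pick a hyperplane tangent to the ball, else pick a uniformly random one), noting that the learner cannot reliably classify outside the convex hull of the points it queried. Your proof instead builds an explicit hard family of $N$ hyperplanes with disjoint positive caps and runs a clean counting argument: under the all-zero transcript each query kills at most one cap, so the surviving disagreement region has mass $(N-q)m$. This is self-contained (no inscribed-polytope result needed), and of course yields the same exponent since both arguments encode the same cap-packing geometry: the inscribed polytope of $n$ vertices must miss $\Theta(n)$ disjoint caps of mass $\Theta(n^{-(d+1)/(d-1)})$ each. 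One small cleanup: your choice $c>2$ and the threshold $q\le N/2$ only directly force failure probability $>1/2$, so as written the contradiction only lands for $\delta<1/2$. To get all $\delta<1$ as claimed, take $c$ depending on $\delta$ (e.g.\ $c\ge 2/(1-\delta)$), which makes the constant in the final $\Omega_d$ bound $\delta$-dependent but still strictly positive; the rest of the argument goes through unchanged.
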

These two bounds are a generalization of the technique employed by El-Yaniv and Wiener \cite{El-Yaniv} to prove lower bounds for a specific algorithm, and apply to any learner. We further show that this bound is tight up to a logarithmic factor. For passive RPU-learning with comparison queries, we will simply inherit the lower bound from the PAC model (Proposition \ref{prop:comp-pass-pac}). 
\begin{corollary}
For small enough $\varepsilon$, and $\delta = \frac{3}{8}$, any algorithm that Comparison-Passive-RPU learns $(B^d,\mathbb{R}^d,H_d)$ must use at least
\[
n(\varepsilon,3/8) = \Omega \left( \frac{1}{\varepsilon} \right)
\]
samples.
\end{corollary}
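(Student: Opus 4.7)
The plan is to observe that RPU-learning is at least as hard as PAC-learning in the same query/regime setting, and then invoke Proposition~\ref{prop:comp-pass-pac} as a black box. Concretely, I will argue that any Comparison-Passive-RPU learner $A$ for $(B^d, \mathbb{R}^d, H_d)$ with sample complexity $n(\varepsilon, 3/8)$ can be converted into a Comparison-Passive-PAC learner $A'$ with the same sample complexity and the same $(\varepsilon, 3/8)$ guarantee, so the $\Omega(1/\varepsilon)$ lower bound transfers.

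The conversion is standard: given the sample $S$ (including comparison oracle responses), let $A'$ run $A$ and produce the classifier $A(S) : \mathbb{R}^d \to \{0, 1, \perp\}$; then define $A'(S)$ to agree with $A(S)$ wherever $A(S) \neq \perp$, and to output $0$ (an arbitrary fixed label) otherwise. The key observation is that since $A$ is reliable, the set of points on which $A(S)$ disagrees with the true concept $c$ is contained in the set on which $A(S)$ outputs $\perp$. Hence for every realization of $S$,
\[
\mathrm{err}_D^{\mathrm{PAC}}(A'(S), c) \;\leq\; \Pr_{x \sim D}[A(S)(x) = \perp] \;=\; \mathrm{err}_D^{\mathrm{RPU}}(A(S), c).
\]
Therefore, if $A$ RPU-learns with error at most $\varepsilon$ with probability $\geq 5/8$, then $A'$ PAC-learns with error at most $\varepsilon$ with probability $\geq 5/8$, using the same number of samples.

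Applying Proposition~\ref{prop:comp-pass-pac} to $A'$ then forces $n(\varepsilon, 3/8) = \Omega(1/\varepsilon)$ for $A$ as well, which is exactly the claimed bound. There is essentially no obstacle here — the only thing to verify carefully is that the conversion preserves the query regime: $A'$ consults the same labeled sample and the same comparison responses that $A$ does, so it is indeed a Comparison-Passive learner, and the reduction goes through without any loss in parameters.
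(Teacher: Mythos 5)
Your proposal is correct and matches the paper's intent exactly: the paper states that the Comparison-Passive-RPU lower bound is ``simply inherited'' from Proposition~\ref{prop:comp-pass-pac} and gives no further proof, and you have filled in the standard RPU-to-PAC reduction (replace $\perp$ by an arbitrary label, observe that reliability forces the PAC error to be at most the RPU error) that makes the inheritance precise.
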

Note that unlike for label queries, this lower bound is not exponential in dimension. In fact, we will show that this bound is tight up to a linear factor in dimension, and further that employing comparison queries in general shifts the RPU model from being intractable to losing only a logarithmic factor over PAC-learning in both the passive and active regimes. We need one definition: two distributions $D,D'$ over $\mathbb{R}^d$ are affinely equivalent if there is an invertible affine map $f:\mathbb{R}^d \to \mathbb{R}^d$ such that $D(x)=D'(f(x))$.

\begin{theorem}
\label{intro-RPU}
Let $D$ be a distribution over $\mathbb{R}^d$ that is affinely equivalent to a distribution $D'$ over $\mathbb{R}^d$, for which the following holds:
\begin{enumerate}
\item $\forall \alpha > 0$, $Pr_{x \sim D'}[||x|| > d\alpha] \leq \frac{c_1}{\alpha}$
\item $\forall \alpha > 0,$ $\langle v, \cdot \rangle + b \in H_d$, $Pr_{x \sim D'}[|\langle x,v \rangle + b | \leq \alpha] \leq c_2\alpha$
\end{enumerate}
The sample complexity of Comparison-Passive-RPU-learning $(D,\mathbb{R}^d,H_d)$ is:
\[
n(\varepsilon,\delta) = \tilde{O} \left ( \frac{d\log \left( \frac{1}{\delta} \right )\log \left( \frac{1}{\varepsilon} \right )}{\varepsilon}\right ),
\]
and the query complexity of Comparison-Pool-RPU learning $(D,\mathbb{R}^d,H_d)$ is:
\[
q(\varepsilon,\delta) = \tilde{O}\left( d\log \left( \frac{1}{\delta} \right)\log^2 \left(\frac{1}{\varepsilon} \right)  \right ).
\]
Note that the constants have logarithmic dependence on $c_1$ and $c_2$.
\end{theorem}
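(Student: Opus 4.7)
The plan is to reduce the theorem to a bound on what the paper calls the \emph{average inference dimension} of the triple $(D,\mathbb{R}^d,H_d)$, and then invoke the structural reduction from this quantity to passive and active RPU-learning complexity.

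First, I would discharge the affine equivalence assumption. Labels and comparison queries for linear classifiers are invariant under invertible affine transformations of $\mathbb{R}^d$: if $f$ is an affine bijection and we set $h(y) := h'(f^{-1}(y))$, then $h \in H_d \iff h' \in H_d$, and any label or comparison query on $x$ under $h$ is equivalent to the same query on $f(x)$ under $h'$. Thus any RPU-learner for $(D',\mathbb{R}^d,H_d)$ lifts to one for $(D,\mathbb{R}^d,H_d)$ at zero extra cost. Without loss of generality I then assume $D$ itself satisfies conditions (1) and (2).

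The main technical task is to show that the average inference dimension of $(D,\mathbb{R}^d,H_d)$ is $\tilde{O}(d\log n)$: for a random sample of $n$ points from $D$ and any hidden $h \in H_d$, the expected number of sample points whose labels cannot be inferred from comparison and label queries on the rest is $\tilde{O}(d\log n)$. The intuition is that $x$'s label is forced whenever $x$ is sandwiched, in the $h$-order, between two sample points of the same label, so the only points that can fail to be inferable either lie in a thin slab $\{|h(\cdot)| \leq \alpha\}$ around the decision boundary or are $h$-extremal within a small cluster. Condition (1) confines all but a $c_1/\alpha$ fraction of mass to a ball of radius $d\alpha$, which bounds the range of $h$ over the sample; condition (2) then directly controls the slab measure by $c_2\alpha$. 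Choosing $\alpha$ as a polynomial in $n$ and summing over dyadic scales of $\alpha$ yields the $\tilde{O}(d\log n)$ bound with only logarithmic dependence on $c_1,c_2$.

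Given this, I would apply the general reduction (established earlier in the paper following the KLMZ template) from average inference dimension $k(n)$ to RPU complexity. Since $k(n) = \tilde{O}(d\log n)$ is polylogarithmic, the passive reduction immediately yields sample complexity $\tilde{O}(d\log(1/\varepsilon)\log(1/\delta)/\varepsilon)$. For the active (pool-based) bound, I would run the passive learner in $\log(1/\varepsilon)$ doubling phases, at each stage drawing a fresh pool and only querying those points whose labels are not yet inferable from the earlier queries; a standard boosting/halving argument converts the $1/\varepsilon$ factor into an extra $\log(1/\varepsilon)$ and preserves the polylog blowup, giving $\tilde{O}(d\log^2(1/\varepsilon)\log(1/\delta))$ queries.

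The main obstacle is the inference dimension bound itself under such \emph{weak} concentration: the tail decay in (1) is only polynomial, so one cannot simply restrict to a bounded-support case as in previous KLMZ analyses. I would therefore truncate $D$ at a radius $R = \text{poly}(d,n)$, use (1) to show the lost mass contributes at most $\tilde{O}(1)$ expected non-inferable points, and then apply the slab/anti-concentration argument inside the truncation. Carefully balancing the truncation error against the slab error against the target $\tilde{O}(d\log n)$ is where the constants and log factors are born, and is the place the proof's work is concentrated.
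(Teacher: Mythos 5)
Your high-level plan---reduce to an average inference dimension bound and then run the KLMZ-style reduction---matches the structure of the paper's proof, and your handling of affine invariance is exactly right. But the core technical claim you propose is both \emph{a different quantity} from what the paper bounds and \emph{not established} by the argument you sketch, and the gap is real.

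First, you define ``average inference dimension'' as the \emph{expected number} of sample points that cannot be inferred from the rest under the fixed hidden $h$, and aim to show this is $\tilde{O}(d\log n)$. The paper's Definition of average inference dimension is instead a tail bound: $g(n)$ is an upper bound on $\Pr_{S\sim D^n}[\nexists\, x : S\setminus\{x\}\rightarrow_h x]$, a ``no point is inferable'' event. The main structural result (Theorem~\ref{thm:AID}) proves $g(n)\leq 2^{-\Omega(n^2/(d\log d))}$, which via Observation~\ref{avg-worst} (a union bound over $\binom{n}{k}$ size-$k$ subsets) yields that a random sample of size $n$ has \emph{worst-case-over-$h$} inference dimension $\tilde{O}(d\log n)$ with probability $\geq 1-1/n$. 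Your expectation bound, even if true, does not give this: an expectation of $\tilde{O}(d\log n)$ uninferable points plus Markov only gives a constant-probability guarantee, which is too weak to drive the KLMZ boosting/halving analysis behind the active (Comparison-Pool-RPU) query bound, and also does not let you union-bound over hyperplanes. The paper needs the for-all-$h$ statement because the boosting argument invokes small inference dimension of the realized pool, a quantity quantified over all of $H_d$, not just the hidden classifier.

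Second, and more seriously, the mechanism you give for why slab points are the only uninferable ones does not hold. Your sandwiching intuition---``$x$'s label is forced whenever $x$ is sandwiched in the $h$-order between two same-labeled points''---would require knowing comparison answers involving $x$ itself; but inference in this framework is defined only from $Q(S\setminus\{x\})$, which contains no query touching $x$. To force $x$'s label you must show every $h'$ in $H_d$ consistent with $Q(S\setminus\{x\})$ agrees on $\mathrm{sgn}(h'(x))$, which is a geometric LP statement, not an ordering statement. The paper makes this precise by invoking the KLMZ minimal-ratio inference dimension bound (Theorem~\ref{KLMZ}: if the surviving points have minimal ratio $\eta$, the inference dimension is $\leq 10d\log(d+1)\log(2\eta^{-1})$), and shows that after removing the furthest and closest $n/3$ points the remainder has minimal ratio $\eta = 2^{-\Theta(n/(d\log d))}$ \emph{simultaneously for all} $h$ with probability $1 - 2^{-\Omega(n^2/(d\log d))}$. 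Establishing this for all $h$ requires a $\gamma$-net over strips (roughly $(3\gamma^{-1})^{d}$ directions times $4\gamma^{-3/2}$ offsets) plus a union bound over size-$n/3$ subsets, balanced against the $\gamma^{n/6}$ decay from anti-concentration; this net is exactly the step your sketch skips by tying everything to a single $h$. Your truncation step and the logarithmic dependence on $c_1, c_2$ are in the right spirit, but without the minimal-ratio reduction and the hyperplane net, there is no path from the slab measure to an inference-dimension statement, and hence no path to either the passive or active conclusion.
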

It is worth noting that Theorem~\ref{intro-RPU} is also computationally efficient, running in time Poly($d,1/\varepsilon,\log(1/\delta)$). The distributional assumptions are satisfied by a wide range of distributions, including (based upon concentration results from \cite{s-concave}) the class of s-concave distributions for $s\geq-\frac{1}{2d+3}$--this removes the requirement of isotropy from the label-only learning algorithms for homogeneous hyperplanes of \cite{s-concave}.

We view Theorem~\ref{intro-RPU} and its surrounding context as this work's main technically novel contribution. In particular, to prove the result, we introduce a new average-case complexity measure called average inference dimension that extends the theory of inference dimension from \cite{KLMZ} (See Section~\ref{sec:techniques:ID}). In addition, by dint of this framework, our analysis implies the following result for the point location problem as well.
\begin{theorem}
\label{point-location}
Let $D$ be a distribution satisfying the criterion of Theorem~\ref{intro-RPU}, $x \in \mathbb{R}^d$, and $h_1,\ldots,h_n \sim D$. Then for large enough $n$ there exists an LDT using only label and comparison queries solving the point location problem with expected depth
\[
\tilde{O}(d\log^2(n))
\]
\end{theorem}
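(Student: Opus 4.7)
The plan is to prove Theorem~\ref{point-location} via a duality reduction to Comparison-Pool-RPU learning, after which Theorem~\ref{intro-RPU} (or, more precisely, the average inference dimension machinery that powers it) yields the bound on expected depth. Concretely, I view each hyperplane $h_i = \langle a_i, \cdot\rangle - b_i$ as the point $(a_i, b_i) \in \mathbb{R}^{d+1}$ and the query point $x$ as the homogeneous hyperplane $H_x = \{(a,b) : \langle a, x\rangle = b\}$. Then $\mathrm{sign}(h_i(x))$ is exactly the side of $H_x$ on which $(a_i,b_i)$ lies, and the label/comparison queries $\mathrm{sign}(h_i(x))$ and $\mathrm{sign}(h_i(x) - h_j(x))$ available to the LDT correspond precisely to the label and comparison queries used by a Comparison-Pool-RPU learner on samples $(a_i,b_i)$ with hidden concept $H_x$. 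The concentration and anti-concentration hypotheses on $D$ transfer to the dualized distribution on $\mathbb{R}^{d+1}$, so the preconditions of Theorem~\ref{intro-RPU} are satisfied.

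Given this reduction, I would simulate the Comparison-Pool-RPU algorithm from Theorem~\ref{intro-RPU} on the pool $\{h_1,\ldots,h_n\}$ against the hidden concept $H_x$, setting $\varepsilon = \Theta(1/n)$ and $\delta = \Theta(1/n)$. This produces a reliable classifier $f$ using $\tilde{O}(d\log^2(n))$ queries in expectation. To convert $f$ into a full solution to point location, I would issue a direct label query for any pool point on which $f$ outputs $\perp$. Since $f$ has $\perp$-measure at most $\varepsilon$ under $D$ and the pool is drawn i.i.d. from $D$, the expected number of such cleanup queries is $n\varepsilon = O(1)$, and the rare failure event (of probability $\delta$) can be handled by brute-forcing all $n$ labels at an additive expected cost of $n\delta = O(1)$. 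The ``large enough $n$'' hypothesis is what ensures the average inference dimension-based analysis kicks in at the required rate.

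The main obstacle is justifying the cleanup step: the distributional RPU guarantee bounds $\Pr_{y\sim D}[f(y)=\perp]$, but the pool samples are not independent of $f$, since $f$ is trained on them. The cleanest way to close this gap, and what I expect the actual proof to use, is to step inside Theorem~\ref{intro-RPU} and work directly with the average inference dimension of the dualized hypothesis class: that notion is designed to count queries needed to perfectly classify a random sample, which is exactly what point location asks for, and should yield the $\tilde{O}(d\log^2(n))$ bound without any distribution-to-sample translation. A fully black-box alternative would be to run the algorithm with a slightly smaller $\varepsilon$ (still polylogarithmic in $n$) and use exchangeability or a leave-one-out argument to bound the expected number of $\perp$-labelled pool points, after which the same cleanup argument closes the proof.
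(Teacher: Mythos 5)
Your second route is essentially the paper's proof. The paper applies Theorem~\ref{thm:AID} (superexponentially small average inference dimension) together with Observation~\ref{avg-worst} and Corollary~\ref{super-exponential-learning}, which already bound the expected query complexity of actively learning a \emph{finite sample} $S \sim D^n$ — under the duality you describe, that is precisely the depth of an LDT for point location — and then invokes the derandomization technique of \cite{K-sum} (Theorem~1.8) to replace the randomized subsampling of the KLMZ boosting procedure with a deterministic LDT.

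Your first route, as you yourself flag, does not close cleanly: the RPU coverage guarantee is a statement about $\Pr_{y\sim D}[f(y)=\perp]$ for a \emph{fresh} $y$, not about the pool points $f$ was trained on, so the expected-number-of-cleanup-queries calculation $n\varepsilon = O(1)$ is not justified without an exchangeability argument of the kind you gesture at. There is also a parameter-counting issue: with $\varepsilon,\delta = \Theta(1/n)$, Theorem~\ref{intro-RPU} gives query complexity $\tilde{O}(d\log^2(1/\varepsilon)\log(1/\delta)) = \tilde{O}(d\log^3 n)$, a factor $\log n$ worse than the target. The finite-sample route avoids both problems, because Corollary~\ref{super-exponential-learning} bounds the expected number of queries to label \emph{all} of $S$ directly as $2f_Q(O(\log n))\log n = \tilde{O}(d\log^2 n)$; the residual failure event (sample has large inference dimension) contributes $n\cdot g(k)\binom{n}{k}$, which is negligible. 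The only piece missing from your write-up is the final derandomization step needed to state the result as the existence of a (deterministic) LDT rather than a randomized procedure.
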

For ease of viewing, we summarize our main results on expected sample/query complexity in Tables~\ref{t1} and \ref{t2} for the special case of the uniform distribution over the unit ball. The only table entries not novel to this work are the Label-Passive-PAC bounds \cite{Long,hanneke2016optimal}, and the lower bound on Comparison-Pool/MQS-PAC learning \cite{Balcan,lowerbound}. Note also that lower bounds for PAC learning carry over to RPU learning.
\begin{table}[h!]
  \begin{center}
    \caption{Expected sample and query complexity for PAC learning $(B^d,\mathbb{R}^d,H_d)$.}
    \label{t1}
    \begin{tabular}{llll}
      \toprule % <-- Toprule here
      \textbf{PAC} & Passive & Pool & MQS\\
      \midrule % <-- Midrule here
    Label & $\Theta \left( \frac{d}{\varepsilon} \right)$ \cite{Long,hanneke2016optimal} & $\Omega_d\left(\left(\frac{1}{\varepsilon}\right)^{\frac{d-1}{d+1}}\right)$&$\Omega_d\left(\left(\frac{1}{\varepsilon}\right)^{\frac{d-1}{d+1}}\right)$\\
Comparison& $\Omega \left( \frac{1}{\varepsilon} \right)$ & $\tilde{\Theta}\left(d\log\left(\frac{1}{\varepsilon}\right)\right)$  & $\tilde{\Theta}\left(d\log\left(\frac{1}{\varepsilon}\right)\right)$\cite{Balcan,lowerbound} \\
      \bottomrule % <-- Bottomrule here
    \end{tabular}
  \end{center}
\end{table}

\begin{table}[h!]
  \begin{center}
    \caption{Expected sample and query complexity for RPU learning $(B^d,\mathbb{R}^d,H_d)$.}
    \label{t2}
    \begin{tabular}{llll}
      \toprule % <-- Toprule here
      \textbf{RPU} & Passive & Pool & MQS\\
      \midrule % <-- Midrule here
Label & $\tilde{\Theta}_d\left( \left (\frac{1}{\varepsilon} \right )^{\frac{d+1}{2}} \right)$ & $\tilde{\Omega}_d\left(\left(\frac{1}{\varepsilon}\right)^{\frac{d-1}{2}}\right)$ & $\tilde{\Omega}_d\left(\left(\frac{1}{\varepsilon}\right)^{\frac{d-1}{2}}\right)$ \\
Comparison& $\tilde{O}\left(\frac{d}{\varepsilon}\right)$ & $\tilde{O}\left(d\log^2\left(\frac{1}{\varepsilon}\right)\right)$ & $\tilde{O}\left(d\log^2\left(\frac{1}{\varepsilon}\right)\right)$  \\
      \bottomrule % <-- Bottomrule here
    \end{tabular}
  \end{center}
\end{table}

\subsection{Our Techniques}
\subsubsection{Lower Bounds: Caps and Polytopes}
Our lower bounds for both the PAC and RPU models rely mainly on high-dimensional geometry. For PAC-learning, we consider spherical caps, portions of $B^d$ cut off by a hyperplane. Our two lower bounds, Label-MQS-PAC, and Comparison-Passive-PAC, consider different aspects of these objects. The former (Proposition~\ref{Label-MQS-PAC}) employs a packing argument: if an adversary chooses a hyperplane uniformly among a set defining some packing of (sufficiently large) caps, the learner is forced to query a point in many of them in order to distinguish which is labeled negatively. The latter bound (Proposition~\ref{prop:comp-pass-pac}), follows from an indistinguishability argument: if an adversary chooses just between one hyperplane defining some (sufficiently large) cap, and the corresponding parallel hyperplane tangent to $B^d$, the learner must draw a point near the cap before it can distinguish between the two.

For the RPU-learning model, our lower bounds rely on the average and worst-case complexity of polytopes. For Label-Passive-RPU learning (Propositions~\ref{Label-Passive-RPU}), we consider random polytopes, convex hulls of samples $S \sim D^n$, whose complexity $E(D,n)$ is the expected probability mass across samples of size $n$. In this regime, we consider an adversary who, with high probability, picks a distribution in which almost all samples are entirely positive. As a result, the learner cannot infer any point outside of the convex hull of their sample, which bounds their expected coverage by $E(D,n)$. For Label-MQS-RPU learning (Proposition \ref{Label-MQS-RPU}), the argument is much the same, except we consider the maximum probability mass over polytopes rather than the expectation.
These techniques are generalizations of the algorithm specific lower bounds given by El-Yaniv and Wiener \cite{El-Yaniv}, who also consider random polytope complexity.
\subsubsection{Upper Bounds: Average Inference Dimension}\label{sec:techniques:ID}
We focus in this section on techniques used to prove our RPU-learning upper bounds, which we consider our most technically novel contribution. To prove our Comparison-Pool-RPU learning upper bound (and corresponding point location result), Theorems~\ref{intro-RPU} and \ref{point-location}, we introduce a novel extension to the inference dimension framework of KLMZ \cite{KLMZ}. Inference dimension is a combinatorial complexity measure that characterizes the distribution independent query complexity of active learning with enriched queries. KLMZ show, for instance, that linear separators in $\mathbb{R}^2$ may be  Comparison-Pool-PAC learned in only $\tilde{O}(\log(\frac{1}{\varepsilon}))$ queries, but require $\Omega\left(\frac{1}{\varepsilon}\right)$ queries in $3$ or more dimensions.

Given a hypothesis class $(X,H)$, and a set of binary queries $Q$ (e.g.\ labels and comparisons), denote the answers to all queries on $S \subseteq X$ by $Q(S)$. Inference dimension examines the size of $S$ necessary to \textit{infer} another point $x \in X$, where $S$ infers the point $x$ under $h$, denoted
\[
S \rightarrow_h x,
\] 
if $Q(S)$ under $h$ determines the label of $x$. As an example, consider $H$ to be linear separators in $d$ dimensions, $Q$ to be label queries, and our sample to be $d+1$ positively labeled points under some classifier $h$ in convex position. Due to linearity, any point inside the convex hull of $S$ is inferred by $S$ under $h$. 

Then in greater detail, the inference dimension of $(X,H)$ is the minimum $k$ such that in any subset of $X$ of size $k$, at least one point can be inferred from the rest:
\begin{definition}[Inference Dimension \cite{KLMZ}]
The inference dimension of $(X,H)$ with query set $Q$ is the smallest $k$ such that for any subset $S \subset X$ of size $k$, $\forall h \in H$, $\exists x \in S$ s.t. $Q(S - \{x\})$ infers $x$ under $h$.
\end{definition}

KLMZ show that finite inference dimension implies distribution independent query complexity that is logarithmic in the sample complexity. On the other hand, they prove a lower bound showing that PAC learning classes with infinite inference dimension requires at least $\Omega(1/\varepsilon)$ queries.

To overcome this lower bound (which holds for linear separators in three or more dimensions), we introduce a distribution dependent version of inference dimension which examines the probability that a sample contains no point which can be inferred from the rest.

\begin{definition}[Average Inference Dimension]
We say $(D, X, H)$ has average inference dimension $g(n)$, if: 
\[
\forall h\in H, Pr_{S \sim D^n}[ \nexists x \ \text{s.t.} \ S - \{x\} \rightarrow_h x] \leq g(n).
\]
\end{definition}

Theorems~\ref{intro-RPU} and \ref{point-location} follow from the fact that small average inference dimension implies that finite samples will have low inference dimension with good probability (Observation~\ref{avg-worst}). Our main technical contribution lies in proving a structural result (Theorem~\ref{thm:AID}): that the average inference dimension of $(D,\mathbb{R}^d,H_d)$ with respect to comparison queries is superexponentially small, $2^{-\Omega_d(n^2)}$, as long as $D$ satisfies the weak distributional requirements outlined in Theorem~\ref{intro-RPU}.

\section{Background: Inference Dimension}
Before proving our main results, we detail some additional background in inference dimension that is necessary for our RPU-learning techniques. First, we review the inference-dimension based upper and lower bounds of KLMZ \cite{KLMZ}. Let $f_Q(k)$ be the number of oracle queries required to answer all queries on a sample of size $k$ in the worst case (e.g. $f_Q(k)=O(k \log(k))$ for comparison queries via sorting). Finite inference dimension implies the following upper bound:
\begin{theorem}[\cite{KLMZ}]
\label{KLMZ-active}
Let $k$ denote the inference dimension of $(X, H)$ with query set $Q$. Then the expected query complexity of $(X,H)$ for $|X|=n$ is:
\[
\mathbb{E}[q(n)] \leq 2f_Q(4k)\log(n).
\]
\end{theorem}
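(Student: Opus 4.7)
The plan is to give an explicit active learning algorithm and analyze it via a recursive halving argument. In each round $i$, starting from the current unlabeled pool $X_i$ with $X_0 = X$, the algorithm samples a subset $S_i \subseteq X_i$ of size $4k$ uniformly at random (if $|X_i| \leq 4k$ the algorithm instead queries all of $X_i$ and terminates), issues the $f_Q(4k)$ oracle queries needed to compute $Q(S_i)$, and uses those answers to label every $x \in X_i \setminus S_i$ for which $S_i \rightarrow_h x$. The next pool $X_{i+1}$ is the set of points in $X_i \setminus S_i$ that were not inferred. Each round costs $f_Q(4k)$ queries, so bounding $\mathbb{E}[q(n)]$ reduces to bounding the expected number of rounds by $2\log_2(n)$.

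The combinatorial heart of the proof is the claim that $\mathbb{E}[|X_{i+1}| \mid X_i] \leq |X_i|/4$ for any fixed $h \in H$. Conditioning on $X_i$, I would form the random $(4k+1)$-subset $S' \subseteq X_i$ obtained by adjoining a uniformly random extra point $x \in X_i \setminus S_i$ to $S_i$. By exchangeability, the probability that $x$ is not inferred by $S_i$ equals the probability that a uniformly chosen $y \in S'$ is not inferred by $S' \setminus \{y\}$. Let $T$ be the set of $y \in S'$ for which $S' \setminus \{y\}$ fails to infer $y$ under $h$. If $|T| \geq k$, choosing any $T' \subseteq T$ of size $k$, the definition of inference dimension produces some $z \in T'$ with $T' \setminus \{z\} \rightarrow_h z$; monotonicity of inference (enlarging the sample preserves every existing deduction, since $Q(T' \setminus \{z\}) \subseteq Q(S' \setminus \{z\})$) then upgrades this to $S' \setminus \{z\} \rightarrow_h z$, contradicting $z \in T$. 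Hence $|T| \leq k-1$, giving $\Pr[y \in T] \leq (k-1)/(4k+1) < 1/4$, and linearity of expectation over $x \in X_i \setminus S_i$ delivers the claim.

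Feeding this into Markov's inequality yields $\Pr[|X_{i+1}| \geq |X_i|/2 \mid X_i] \leq 1/2$, so in expectation two rounds suffice to at least halve the pool; the standard recursion $T(n) \leq 2 + T(n/2)$ then caps the expected number of rounds at $2\log_2(n)$, for a total expected query cost of $2 f_Q(4k)\log(n)$. The step I expect to require the most care is the peeling/exchangeability argument pinning down $(k-1)/(4k+1)$, since it hinges on a clean use of monotonicity of inference applied to the self-referential set $T$; everything else is a routine Markov-plus-recursion calculation.
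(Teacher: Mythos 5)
Your proof is correct and reconstructs the standard KLMZ boosting argument that the paper cites: sample $4k$ points per round, use the adjoin-and-swap exchangeability trick together with the definition of inference dimension (plus monotonicity of inference under supersets) to bound the expected uninferred fraction by $(k-1)/(4k+1)<1/4$, upgrade via Markov to a probability-$\tfrac12$ halving per round, and close with the $T(n)\le 2+T(n/2)$ recursion. This is essentially the same route as the cited source, so no further comment is needed.
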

Further, infinite inference dimension provides a lower bound:
\begin{theorem}[\cite{KLMZ}]
Assume that the inference dimension of $(X, H)$ with query set $Q$ is $>k$. Then for $\varepsilon = \frac{1}{k}, \delta=\frac{1}{6}$,  the sample complexity of Q-Pool-PAC learning $(X,H)$ is:
\[
q(\varepsilon,1/6) = \Omega(1/\varepsilon).
\]
\end{theorem}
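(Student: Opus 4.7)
The plan is a classical indistinguishability argument built directly on the definition. By contrapositive of the inference dimension definition, ``inference dimension $>k$'' is equivalent to the existence of a witness $S = \{x_1,\ldots,x_k\} \subset X$ of size exactly $k$ together with some $h \in H$ such that for every $x \in S$, the answers $Q(S \setminus \{x\})$ under $h$ fail to determine the label of $x$. Unpacking inference, for each $x \in S$ I can extract a ``twin'' hypothesis $h_x \in H$ that matches $h$ on every query supported inside $S \setminus \{x\}$ yet satisfies $c_{h_x}(x) \neq c_h(x)$. I would take $D$ to be the uniform distribution on $S$, so that $\varepsilon = 1/k$ and any output with error less than $\varepsilon$ under $D$ must correctly label all of $S$.

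With this setup, the adversary draws the true hypothesis $h^\star$ uniformly from the $(k+1)$-element family $\mathcal{F} = \{h, h_{x_1}, \ldots, h_{x_k}\}$. Fix any learner using $q$ queries, and for scenario $\sigma \in \{0,1,\ldots,k\}$ let $p_\sigma$ be its success probability (over pool and internal coins) when $h^\star$ is the $\sigma$-th element of $\mathcal{F}$. Since the pool is supported on $S$ and each label or comparison query touches at most two points, the random set $U$ of pool points ever appearing in a query satisfies $|U| \leq 2q$. The crucial observation is that if $x_\sigma \notin U$ on a given execution under scenario $0$, then by the defining property of $h_{x_\sigma}$ every oracle answer is identical to the corresponding answer under scenario $\sigma$; the entire transcript, the learner's coins, the output, and in particular the label assigned to $x_\sigma$ all coincide across the two scenarios. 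But success in scenarios $0$ and $\sigma$ demand contradictory labels on $x_\sigma$, so they are mutually exclusive on such executions.

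From this I would conclude $p_0 + p_\sigma \leq 1 + \Pr[x_\sigma \in U]$ for each $\sigma \geq 1$, and summing over $\sigma$ gives
\[
k p_0 + \sum_{\sigma=1}^{k} p_\sigma \;\leq\; k + \mathbb{E}[|U|] \;\leq\; k + 2q.
\]
If the learner achieved $p_\sigma \geq 5/6$ for every scenario, the left-hand side would be at least $2k \cdot 5/6 = 5k/3$, forcing $q \geq k/3 = \Omega(1/\varepsilon)$. Hence for $q$ strictly below this threshold, some hypothesis in $\mathcal{F}$ defeats the learner with probability exceeding $1/6$, and $q(1/k,1/6) = \Omega(1/\varepsilon)$.

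The main obstacle is keeping the coupling between scenarios precise: one must verify that the pool is drawn independently of $h^\star$ (which holds because $D$ does not depend on $h^\star$) and that ``touched points'' are computed using the same coin flips across coupled scenarios, so that the inference dimension property really produces identical transcripts and hence identical $U$. A minor additional subtlety is that the argument uses that label and comparison queries involve at most two points; it would generalize to any bounded-arity query set with only a constant loss in the hidden constant of the lower bound.
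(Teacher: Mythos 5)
Your proof is correct and is essentially the argument KLMZ give for this lower bound; the paper itself does not reproduce the proof (it only cites \cite{KLMZ}), but your reconstruction — take the size-$k$ witness set $S$ with its hypothesis $h$, build the ``twin'' alternatives $h_{x_\sigma}$, place the uniform distribution on $S$, and then bound $\sum_\sigma (p_0 + p_\sigma)$ via $\mathbb{E}[|U|] \le 2q$ — is the standard inference-dimension lower bound. The one point worth being explicit about, which you handle correctly, is that ``error $<1/k$'' under the uniform distribution on $k$ points forces the learner to label all of $S$ exactly, so the pairwise disjointness of success events on $\{x_\sigma \notin U\}$ goes through.
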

As the name would suggest, the upper bound derived via inference dimension is based upon a reliable learner that infers a large number of points given a small sample. While not explicitly stated in \cite{KLMZ}, it follows from the same argument that finite inference dimension gives an upper bound on the sample complexity of RPU-learning:
\begin{corollary}
Let $k$ denote the inference dimension of $(X, H)$ with query set $Q$. Then the sample complexity of $Q$-Passive-RPU learning $(X,H)$ is:
\[
\mathbb{E}[n(\varepsilon)]= O \left ( \frac{k}{\varepsilon} \right ).
\]
%Further, the expected query complexity of $Q$-Pool-RPU learning $(X,H)$ is:
%\[
%\mathbb{E}[q(\varepsilon,\delta)] \leq 2f_Q(4k)\log(n(\varepsilon,\delta)).
%\]
\end{corollary}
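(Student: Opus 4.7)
The plan is to analyze the natural RPU learner: given a sample $S \sim D^n$ together with its query answers $Q(S)$, output the label forced on a test point $x$ whenever $S \rightarrow_h x$, and output $\perp$ otherwise. Reliability is immediate, since any label the learner outputs must coincide with that assigned by every hypothesis consistent with the observed answers on $S$, and in particular by the true $h$; all the work lies in bounding the expected $D$-mass of points on which the learner abstains.

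To control that mass I would use a leave-one-out symmetrization. Draw $T = (x_1,\ldots,x_{n+1}) \sim D^{n+1}$ and view $x_{n+1}$ as the test point with the first $n$ samples as training data. By exchangeability,
\[
\mathbb{E}_{S \sim D^n,\, x \sim D}\!\left[\mathbf{1}\{S \not\rightarrow_h x\}\right] \;=\; \mathbb{E}_T\!\left[\frac{1}{n+1}\sum_{y \in T}\mathbf{1}\{T\setminus\{y\} \not\rightarrow_h y\}\right],
\]
so it suffices to prove a purely combinatorial claim: for every finite set $T$ with $|T| \geq k$, at most $k-1$ points of $T$ fail to be inferred from the rest of $T$ under $h$.

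I would establish this claim by greedy peeling. So long as the current set has size at least $k$, the definition of inference dimension supplies a point inferable from the other current points, which we delete. The procedure halts with exactly $k-1$ points left, so $(n+1) - (k-1) = n-k+2$ points are peeled. Each peeled $y$ was certified inferable from a subset of $T\setminus\{y\}$, and by monotonicity of inference---enlarging the certifying set can only shrink the family of hypotheses consistent with the queries and therefore preserves any already-forced labels---we conclude $T\setminus\{y\} \rightarrow_h y$. Plugging back into the symmetrization, the expected abstention rate on a fresh draw is at most $(k-1)/(n+1)$, which drops below $\varepsilon$ once $n = \Theta(k/\varepsilon)$, yielding $\mathbb{E}[n(\varepsilon)] = O(k/\varepsilon)$.

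I do not anticipate any serious obstacle; the subtle step worth triple-checking is the monotonicity of $\rightarrow_h$ used in the peeling argument, which is precisely what converts the worst-case inference-dimension guarantee into the average-case coverage bound above.
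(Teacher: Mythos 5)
Your argument is correct and is essentially the same as the paper's (which simply invokes the symmetry/leave-one-out argument of KLMZ): the exchangeability identity over an $(n+1)$-tuple, the greedy peeling showing at most $k-1$ points of any set fail to be inferred from the rest, and the monotonicity of $\rightarrow_h$ that makes the peeling certify full-set inference are precisely the ingredients KLMZ use to get coverage $\geq 1 - O(k/n)$, from which $n = O(k/\varepsilon)$ follows.
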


\section{PAC Learning with Comparison Queries}
In this section we study PAC learning with comparison queries in both the passive and active cases.
\subsection{Lower Bounds}
To begin, we prove that over a uniform distribution on a unit ball, learning linear separators with only label queries is hard.
\begin{proposition}[Restatement of Proposition~\ref{Label-MQS-PAC}]
For small enough $\varepsilon$, and $\delta = \frac{1}{2}$, the query complexity of Label-MQS-PAC learning $(B^d,\mathbb{R}^d,H_d)$ is:
\[
q(\varepsilon,1/2) = \Omega_d \left( \left ( \frac{1}{\varepsilon} \right )^{\frac{d-1}{d+1}}\right).
\]
\end{proposition}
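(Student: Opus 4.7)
The strategy is a packing and indistinguishability argument on spherical caps. First I would construct, for each sufficiently small $\varepsilon$, a family of pairwise disjoint spherical caps $K_1, \ldots, K_N \subset B^d$ each of relative volume exactly $3\varepsilon$. A standard computation shows that a cap of height $t$ (the portion of $B^d$ cut off by a hyperplane at distance $1-t$ from the origin) has relative volume $\Theta_d(t^{(d+1)/2})$ and projects to a $(d-1)$-dimensional disk of angular radius $\Theta(\sqrt{t})$ on $S^{d-1}$. Choosing $t = \Theta_d(\varepsilon^{2/(d+1)})$ gives each cap relative volume $3\varepsilon$, and a packing of $S^{d-1}$ by spherical caps of this angular radius yields $N = \Omega_d(t^{-(d-1)/2}) = \Omega_d(\varepsilon^{-(d-1)/(d+1)})$ disjoint caps.

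Next, I would fix the adversary's distribution over hypotheses. Enumerate the caps and let $h_i \in H_d$ be the hyperplane whose negative halfspace meets $B^d$ in exactly $K_i$; the adversary samples $i \in [N]$ uniformly at random. Because the $K_i$ are pairwise disjoint and each has relative volume $3\varepsilon$, any two distinct hypotheses $h_i$ and $h_j$ differ on $K_i \cup K_j$, so
\[
\Pr_{x \sim B^d}[h_i(x) \neq h_j(x)] = 6\varepsilon.
\]
By the triangle inequality, any classifier the learner outputs can be $\varepsilon$-close to at most one $h_i$.

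Finally, I would carry out the indistinguishability step. By Yao's principle it suffices to bound the expected success of a deterministic learner against this random adversary. Since the $K_i$ are disjoint, each queried point lies in at most one cap, so after $q$ queries at most $q$ caps can produce a ``negative'' answer; symmetry and a union bound over $i$ give probability at most $q/N$ that any query returns negative. Conditioned on all $q$ answers being positive, the posterior on $i$ is uniform over the $\geq N-q$ remaining caps, and by the previous paragraph the learner's output can be $\varepsilon$-close to at most one of them, for success probability at most $1/(N-q)$. Combining, the total success probability is at most $q/N + 1/(N-q)$, which cannot be $\geq 1/2$ unless $q = \Omega(N) = \Omega_d(\varepsilon^{-(d-1)/(d+1)})$. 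The only real obstacle is the cap-packing step itself: matching the classical volume and angular-radius estimates with a packing of $S^{d-1}$ to produce exactly $N = \Omega_d(\varepsilon^{-(d-1)/(d+1)})$ disjoint caps of relative volume $3\varepsilon$ is routine convex geometry, but requires that the $d$-dependent constants be tracked consistently.
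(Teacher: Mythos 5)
Your proposal is correct and follows essentially the same strategy as the paper's proof: pack $\Omega_d(\varepsilon^{-(d-1)/(d+1)})$ disjoint spherical caps of volume $\Theta(\varepsilon)$, let the adversary pick one uniformly at random to be negative, and invoke Yao's principle together with the observation that a deterministic learner making $q$ queries can distinguish at most $q$ of the caps, leaving the rest indistinguishable. Your version is in fact a bit more careful than the paper's at two points — you make explicit the triangle-inequality argument for why the learner's output can be $\varepsilon$-close to at most one candidate $h_i$, and you derive the cap-packing count from the volume and angular-radius scalings rather than citing it — but the underlying argument is the same.
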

\begin{proof}
This follows from a packing argument. It is well known (see e.g.\ \cite{dudley2006ecole}) that for small enough $\varepsilon$, it is possible to pack $\Omega_d\left(\left(\frac{1}{\varepsilon}\right)^{\frac{d-1}{d+1}}\right)$ disjoint spherical caps (intersections of halfspaces with $B^d$) of volume $2\varepsilon$ onto $B^d$.
%The area of a cap of angle $\theta$ is
%\begin{align*}
%Area(Cap) = \frac{1}{2}(\theta-sin(\theta))
%& \geq \frac{1}{2}\left(\frac{\theta^3}{6} - %\frac{\theta^5}{120} \right)
%\end{align*}
%by Taylor expanding $sin(\theta)$. For small enough $\varepsilon$, setting $\theta$ to $3\pi\varepsilon^{1/3}$ then gives that the area of this cap is at least $\pi\varepsilon$, and thus that its measure is at least $\varepsilon$. Since we can pack at least $\lfloor \frac{2\pi}{\theta} \rfloor$ of such caps into the ball, then for small enough $\varepsilon$ we have a packing of at least $\frac{1}{2}\left (\frac{1}{\varepsilon} \right )^{1/3}$ caps with measure greater than $\varepsilon$.
By Yao's Minimax Theorem \cite{yao1977probabilistic}, we may consider a randomized strategy from the adversary such that any deterministic strategy of the learner will fail with constant probability. Consider an adversary which picks one of the disjoint spherical caps to be negative uniformly at random. If the learner queries only $O_d\left(\left (\frac{1}{\varepsilon} \right )^{\frac{d-1}{d+1}}\right)$ points, for a small enough constant and $\varepsilon$, any strategy will uncover the negative cap with at most some constant, say less than $25\%$ probability. Since for small enough $\varepsilon$ there will be at least three remaining caps in which the learner never queried a point, the probability that the learner outputs the correct negative cap (which is necessary to learn up to error $\varepsilon$), is at most $1/3$ due to uniform distribution of the negative cap. Thus alltogether the learner will fail with probability at least 1/2.
\end{proof}
To show that our exponential improvement comes from the use of comparisons in combination with active learning, we will prove that using comparisons coupled with passive learning provides no improvement.
\begin{proposition}[Restatement of Proposition~\ref{prop:comp-pass-pac}]
For small enough $\varepsilon$, and $\delta = \frac{3}{8}$, any algorithm that passively learns $(B^d,\mathbb{R}^d,H_d)$ with comparison queries must use at least
\[
n(\varepsilon,\delta) = \Omega \left( \frac{1}{\varepsilon} \right)
\]
samples.
\end{proposition}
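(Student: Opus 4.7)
My plan is to carry out the indistinguishability strategy foreshadowed in the techniques section, using a pair of parallel hyperplanes with the same normal direction. Fix any unit vector $v\in\mathbb{R}^d$ and choose $b_0$ so that the hyperplane $h_1(x)=\langle v,x\rangle+b_0$ is tangent to $B^d$ on the negative side, making every point of $B^d$ positive under $\mathrm{sign}(h_1)$. Then pick $b_1<b_0$ so that the parallel hyperplane $h_2(x)=\langle v,x\rangle+b_1$ cuts off a spherical cap $K\subseteq B^d$ of volume exactly $\varepsilon$, which is possible for all sufficiently small $\varepsilon$. By Yao's minimax principle, it suffices to exhibit a randomized adversary against which every deterministic learner fails with the claimed probability, and I take the adversary that chooses $h_1$ or $h_2$ uniformly at random.

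The key observation is that the transcript the learner sees is almost independent of the adversary's coin. Because $h_1-h_2 = b_0-b_1$ is a \emph{constant}, we have $h_1(x)-h_1(x')=h_2(x)-h_2(x')$ for every $x,x'\in\mathbb{R}^d$, so comparison queries give identical answers under the two hypotheses for any pair of sample points. Label queries agree whenever the queried point lies outside $K$, since such points are positive under both hypotheses. Consequently, if the sample $S\sim U(B^d)^n$ has empty intersection with $K$, then the entire vector of label and comparison answers is the same under $h_1$ and $h_2$, and the learner's output hypothesis is a deterministic function of $S$ that cannot depend on the adversary's coin. That output must then disagree with one of $h_1,h_2$ on all of $K$, contributing error at least $\mathrm{vol}(K)=\varepsilon$.

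It remains to bound the probability $S$ avoids $K$. Since $K$ has measure $\varepsilon$, $\Pr[S\cap K=\emptyset]=(1-\varepsilon)^n$, which is at least $3/4$ whenever $n\leq c/\varepsilon$ with $c=\ln(4/3)$ and $\varepsilon$ is small enough. Conditioned on this event, the learner fails on the adversary's choice with probability at least $1/2$, so the overall failure probability is at least $3/8$, giving $n(\varepsilon,3/8)=\Omega(1/\varepsilon)$. I do not expect any serious obstacle; the one subtlety to flag is that $h_1$ and $h_2$ must be chosen with the \emph{same} oriented normal so that $h_1-h_2$ is literally a constant, rather than merely a function of constant sign — otherwise comparisons could leak information about which of the two hypotheses is in play. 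Once the construction respects this orientation, the remaining work is the routine probability computation above.
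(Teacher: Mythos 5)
Your approach is the same one the paper takes: a pair of parallel hyperplanes (one tangent, one cutting off a cap), indistinguishability of comparison queries because the difference is a constant, and Yao's principle. Your observation about matching oriented normals so that $h_1 - h_2$ is a literal constant is a nice point that the paper leaves implicit.

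However, there is a genuine gap in your error accounting: you take the cap $K$ to have volume $\varepsilon$ and assert that the learner's output ``must disagree with one of $h_1, h_2$ on all of $K$.'' That is not true in general — the output hypothesis $\hat h$ can cut through $K$, labeling some fraction $p$ of it positive and $1-p$ negative, in which case $\hat h$ disagrees with $h_1$ only on the negative part and with $h_2$ only on the positive part. What is true is that every $x\in K$ contributes to exactly one of the two errors, so $\mathrm{err}(\hat h, h_1)+\mathrm{err}(\hat h, h_2) \geq \mathrm{vol}(K)$, and hence $\max$ of the two errors is only $\geq \mathrm{vol}(K)/2 = \varepsilon/2$. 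In particular, if $p=1/2$ the learner has error exactly $\varepsilon/2 < \varepsilon$ against \emph{both} hypotheses, and your claimed conditional failure probability of $1/2$ collapses to $0$. The paper avoids this by taking the cap to have measure $2\varepsilon$, so that ``at least half of the cap misclassified'' already gives error $\geq \varepsilon$ against one of the two hypotheses. Replacing $\mathrm{vol}(K)=\varepsilon$ with $\mathrm{vol}(K)=2\varepsilon$ (and $(1-2\varepsilon)^n\geq 3/4$ in the tail bound) repairs the argument without affecting the $\Omega(1/\varepsilon)$ conclusion.
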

\begin{proof}
Let $h_{2\varepsilon}$ be a hyperplane which forms a spherical cap $c$ of measure $2\varepsilon$, and $h$ be the parallel hyperplane tangent to this cap. By Yao's Minimax Theorem \cite{yao1977probabilistic}, we consider an adversary which chooses uniformly between $h_{2\varepsilon}$ and $h$. Given $k$ uniform samples from $B^d$, the probability that at least one point lands inside the cap $c$ is $\leq 2k\varepsilon$. Let
\[
k=o \left(\frac{1}{\varepsilon} \right),
\]
then for small enough $\varepsilon$, this probability is $\leq 1/4$. Say no sample lands in $c$, then $h_{2\varepsilon}$ and $h$ are completely indistinguishable by label or comparison queries. Any hypothesis chosen by the learner must label at least half of $c$ positive or negative, and will thus have error $\geq \varepsilon$ with either $h_{2\varepsilon}$ or $h$. Since the distribution over these hyperplanes is uniform, the learner fails with probability at least $50\%$. Thus in total the probability that the learner fails is at least $\frac{3}{4}\cdot\frac{1}{2}=\frac{3}{8}$
\end{proof}
Together, these lower bounds show it is only the combination of active learning and comparison queries which provides an exponential improvement.
\subsection{Upper Bounds}
For completeness, we will begin by showing that Proposition~\ref{Label-MQS-PAC} is tight for $d=2$ before moving to our main result for the section.
\begin{proposition}
The query complexity of Label-MQS-PAC learning $(B^2,\mathbb{R}^2,H_2)$ is:
\[
q(\varepsilon,0)=O\left ( \left (\frac{1}{\varepsilon} \right )^{\frac{1}{3}} \right ).
\]
\end{proposition}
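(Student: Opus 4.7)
The plan is to design a deterministic adaptive MQS algorithm that uses $O(\varepsilon^{-1/3})$ label queries. The algorithm runs in two phases, exploiting the scaling law for spherical caps in $B^2$: a cap cut off by a hyperplane with normalized area $\varepsilon$ has depth $h = \Theta(\varepsilon^{2/3})$ and both chord length and arc length $\Theta(\varepsilon^{1/3})$.

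In the first phase I place $N = \Theta(\varepsilon^{-1/3})$ query points on the boundary $S^1 \subset B^2$ at uniform angular spacing $s$, with $s$ chosen strictly less than the arc length of any cap of measure exactly $\varepsilon$, and query all their labels. By the above arc-length estimate, every cap of measure at least $\varepsilon$ necessarily contains at least one of these boundary points on its arc. Consequently, if all $N$ queries return the same label, then no cap on either side of the target hyperplane has normalized measure exceeding $\varepsilon$, and outputting the constant majority hypothesis incurs error at most $\varepsilon$.

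Otherwise, adjacent boundary queries with differing labels witness two ``transition arcs'', each of length $s$, that respectively contain the two endpoints of the target chord. I then binary-search each transition arc by iteratively querying its midpoint on $S^1$ and restricting to the half on which the label flips. A direct calculation, using the parameterization of a chord by its normal direction $\theta$ and signed distance $b$, shows that two chords of $B^2$ whose endpoints agree up to arc distance $\eta$ have symmetric difference of normalized measure $O(\eta)$, with the worst case realized by near-diameters (for cap-like chords of depth $\Theta(\varepsilon^{2/3})$ the symmetric difference is even smaller, namely $O(\eta \cdot \varepsilon^{1/3})$). Refining each endpoint to precision $\eta = \Theta(\varepsilon)$ therefore suffices, taking $O(\log(s/\eta)) = O(\log(1/\varepsilon))$ queries per transition. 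The output hypothesis is the chord joining the two refined endpoints.

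The total query count is $O(\varepsilon^{-1/3}) + O(\log(1/\varepsilon)) = O(\varepsilon^{-1/3})$, matching the lower bound of Proposition~\ref{Label-MQS-PAC}. The main technical obstacle is coordinating the boundary discretization with the cap geometry: the spacing $s$ must be small enough that every cap of measure $\geq \varepsilon$ is intersected by some boundary query, yet we need $N \sim 1/s = O(\varepsilon^{-1/3})$. The scaling $h \sim \ell^2$ between cap depth and chord length is exactly what makes both demands simultaneously achievable. A secondary technical point is verifying the linear endpoint-to-symmetric-difference bound, which requires a brief case split on whether the two candidate chords intersect inside $B^2$ or not.
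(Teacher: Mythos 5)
Your proposal is correct and follows essentially the same two-phase strategy as the paper: place $\Theta(\varepsilon^{-1/3})$ boundary queries at uniform spacing calibrated to the arc length of an $\varepsilon$-measure cap, output a constant classifier if all labels agree, and otherwise binary-search each of the two transition arcs to precision $\Theta(\varepsilon)$ so that any consistent chord has symmetric difference $O(\varepsilon)$ from the target. The only cosmetic differences are that the paper derives the spacing via the Taylor expansion of the exact polygonal cap-area formula $\tfrac{1}{2}(2\pi/k-\sin(2\pi/k))$ rather than the $h\sim\ell^2$, $A\sim h\ell$ scaling you invoke, and it packages the final error bound as the measure of the "strip" between the two refined arcs rather than as an endpoint-perturbation $\to$ symmetric-difference estimate.
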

\begin{proof}
To begin, we will show that selecting $k=O\left ( \left (\frac{1}{\varepsilon} \right )^{\frac{1}{3}} \right )$ points along the boundary of $B^2$ in a regular fashion (such that their convex hull is the regular $k$ sided polygon) is enough if all such points have the same label. This follows from the fact that each cap created by the polygon has area and thus probability mass
\[
Area(Cap) = \frac{1}{2}(2\pi/k - \sin(2\pi/k)).
\]
Taylor approximating sine shows that picking $k=O\left ( \left (\frac{1}{\varepsilon} \right )^{\frac{1}{3}} \right )$ gives Area(Cap) $< \varepsilon$. If all $k$ points are of the same sign (say 1), a hyperplane can only cut through one such cap, and thus labeling the entire disk 1.
\\
Thus we have reduced to the case where there are one or more points of differing signs. In this scenario, there will be exactly two edges where connected vertices are of different signs, which denotes that the hyperplane passes through both edges. Next, on each of the two caps associated with these edges, we query $O(\log(1/\varepsilon))$ points in order to find the crossing point of the hyperplane via binary search up to an accuracy of $\varepsilon/2$. This reduces the area of unknown labels to the strip connecting these two $< \varepsilon/2$ arcs, which has $< \varepsilon$
probability mass. Picking any consistent hyperplane then finishes the proof.
\end{proof}
Now we will show that active learning with comparison queries in the PAC-learning model exponentially improves over the passive and label regimes. Our work is closely related to the algorithm of Balcan and Long \cite{Balcan}, and relies on using comparison queries to reduce to a combination of their algorithm and thresholding. Our bounds will relate to a general set of distributions called isotropic (0-centered, identity variance) log-concave distributions, distributions whose density function $f$ may be written as $e^{g(x)}$ for some concave function $g$. log-concavity generalizes many natural distributions such as gaussians and convex sets. To begin, we will need a few statements regarding isotropic log-concave distributions proved initially by Lovasz and Vempala \cite{log-concave}, and Klivans, Long, and Tang \cite{Klivans} (here we include additional facts we require for RPU-learning later on).
\begin{fact}[\cite{log-concave,Klivans}]
\label{log-concave}
Let $D$ be an arbitrary log-concave distribution in $\mathbb{R}^d$ with probability density function $f$, and $u,v$ normal vectors of homogeneous hyperplanes. The following statements hold where 3,4,5, and 6 assume $D$ is isotropic:
\begin{enumerate}
\item $D-D$, the difference of i.i.d pairs, is log-concave
\item $D$ may be affinely transformed to an isotropic distribution Iso$(D)$
\item There exists a universal constant $c$ s.t. the angle between any $u$ and $v$, denoted $\theta(u,v)$, satisfies $c\theta(u,v) \leq \Pr_{x \sim D}[sgn(\langle x, v \rangle) \neq sgn(\langle x, u \rangle) ]$
\item $\forall a>0, \Pr_{x \in D}[||x|| \geq a] \leq e^{-\frac{a}{\sqrt{d}}+1}$
\item All marginals of $D$ are isotropic log-concave
\item If $d=1, \Pr_{x \in D}[x \in [a,b]] \leq |b-a|$
\end{enumerate}
\end{fact}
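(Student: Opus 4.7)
The six statements are standard consequences of the theory of log-concave distributions, and the plan is to verify each in turn, with the Prékopa--Leindler inequality as the workhorse.

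For (1), I would observe that if $X,Y \sim D$ are i.i.d.\ then $X - Y$ has density equal to the convolution of the density of $D$ with that of $-D$. Both are log-concave (reflection preserves log-concavity), and by Prékopa--Leindler the convolution of log-concave densities is log-concave, so $D-D$ is log-concave. For (2), let $\mu$ and $\Sigma$ be the mean and covariance of $D$, and apply the affine map $x \mapsto \Sigma^{-1/2}(x - \mu)$; the pushforward has mean $0$ and covariance $I$, and affine images of log-concave densities remain log-concave (a direct change of variables). For (5), log-concavity is preserved under marginalization (Prékopa--Leindler applied to integrating out coordinates), and the marginal of an isotropic distribution onto any coordinate subspace inherits zero mean and identity covariance from the block structure of $I$.

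For (6), the key fact to invoke is that an isotropic log-concave density on $\mathbb{R}$ is bounded above by an absolute constant (in fact by $1$, as in Lovász--Vempala). Integrating over an interval of length $|b-a|$ then gives the stated bound. For (4), the plan is to reduce to a one-dimensional log-concave statement on $\|x\|$ (or, equivalently, to use the standard fact that the density at the mode of an isotropic log-concave distribution is at most $\mathrm{const}^d$, combined with a log-concavity tail argument). Either approach yields the $e^{-a/\sqrt{d}+1}$ tail bound given in \cite{log-concave,Klivans}.

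The most involved step is (3). The plan is to reduce to two dimensions: the event $\mathrm{sgn}(\langle x,v\rangle) \neq \mathrm{sgn}(\langle x,u\rangle)$ depends only on the orthogonal projection of $x$ onto $\mathrm{span}(u,v)$, which by (5) is itself an isotropic log-concave distribution on $\mathbb{R}^2$. In this plane, the disagreement set is a union of two opposite wedges of total angular measure $2\theta(u,v)$ meeting at the origin. To lower-bound its mass by a constant times $\theta(u,v)$, I would pair (a) a uniform lower bound on an isotropic log-concave density in a neighborhood of the origin (a standard fact from Lovász--Vempala, using that the density at the centroid is bounded below by a universal constant) with (b) a uniform upper bound on its support mass near the origin, to show the wedge mass grows linearly in angle. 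The main obstacle is really just bookkeeping these universal constants cleanly; each individual ingredient appears in \cite{log-concave,Klivans}, and the proof is essentially an assembly of their estimates.
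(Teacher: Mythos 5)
This statement is cited in the paper as a black-box fact from Lov\'asz--Vempala \cite{log-concave} and Klivans--Long--Tang \cite{Klivans}; the paper offers no proof of its own, so there is nothing to compare your argument against. On its own merits your sketch is a faithful reconstruction of the standard arguments for items (1), (2), (3), (5), and (6): Pr\'ekopa--Leindler for closure under convolution and marginalization, the affine change of variables for isotropization, the $\max f \le 1$ bound for one-dimensional isotropic log-concave densities (item (6)), and the two-dimensional wedge argument with density bounds near the centroid for item (3) all match the proofs in \cite{log-concave,Balcan}.

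The one place where the sketch would need repair is item (4). You propose to ``reduce to a one-dimensional log-concave statement on $\|x\|$,'' but the random variable $\|X\|$ is not in general log-concave, so this reduction does not go through as written. What is true, and what Lov\'asz--Vempala actually use, is that the \emph{cumulative distribution function} $F(t) = \Pr[\|X\| \le t]$ is log-concave in $t$: apply Pr\'ekopa to the jointly log-concave function $(t,x) \mapsto \mathbb{1}[\|x\| \le t]\, f(x)$ (indicator of the convex set $\{(t,x): \|x\| \le t\}$ times a log-concave density) and marginalize out $x$. Combined with $\mathbb{E}[\|X\|] \le \sqrt{\mathbb{E}[\|X\|^2]} = \sqrt{d}$ and a one-dimensional tail lemma for nonnegative variables with log-concave CDF, this yields the $e^{-a/\sqrt{d}+1}$ bound. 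Your alternative route via a $\mathrm{const}^d$ density bound at the mode is also vaguer than the actual argument and would require a separate concentration lemma to finish; it is not ``equivalent'' to the first approach in any obvious way. These are fixable gaps, but they are gaps.
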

We will additionally need Balcan and Long's \cite{Balcan} query optimal algorithm for label-Pool-PAC learning homogeneous hyperplanes\footnote{This work was later improved to be computationally efficient \cite{awasthi2014power}, but no longer achieved optimal query complexity.}.
\begin{theorem}[Theorem 5 \cite{Balcan}]\label{thm:balcan}
Let $D$ be a log-concave distribution over $\mathbb{R}^d$. The query complexity of Label-Pool-PAC learning $(D, \mathbb{R}^d,H^0_d)$ is
\[
q(\varepsilon, \delta) = O\left( \left(d+\log\left(\frac{1}{\delta}\right)+\log\log\left(\frac{1}{\varepsilon}\right)\right)\log\left(\frac{1}{\varepsilon}\right)\right),
\]
where $H_d^0$ is the class of homogeneous hyperplanes.
\end{theorem}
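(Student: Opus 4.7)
The plan is to follow the margin-based active learning framework implicit in the statement. The algorithm proceeds in $O(\log(1/\varepsilon))$ phases, maintaining at phase $k$ a unit vector $w_k$ satisfying $\theta(w_k, w^\ast) \leq \theta_k$ with $\theta_k = \theta_0 \cdot 2^{-k}$, where $w^\ast$ is the (normal of the) unknown target in $H_d^0$. I would initialize $w_0$ by drawing an unlabeled sample, querying all labels, and solving empirical risk minimization; standard VC arguments for homogeneous halfspaces (VC dimension $d$) give $\theta(w_0, w^\ast) \leq \theta_0$ for a small absolute constant $\theta_0$ using $O(d + \log(1/\delta))$ queries, via Fact~\ref{log-concave}(3) which converts disagreement probability to angle.

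For the inductive step, the key idea is to only query labels of points in a thin margin band around $w_k$. Concretely, draw a large pool of unlabeled samples from $D$, discard all $x$ with $|\langle w_k, x\rangle| > b_k$ for a bandwidth $b_k = \Theta(\theta_k)$, and query labels on the survivors. By Fact~\ref{log-concave}(5) the marginal of $D$ along $w_k$ is isotropic log-concave, so by Fact~\ref{log-concave}(6) the band has measure $O(b_k)$. The outside-band error contribution is controlled by a geometric argument: any $x$ on which $w_k$ and $w^\ast$ disagree lies within angle $\theta(w_k, w^\ast)$ of the boundary of $w_k$, so outside a band of width $\Omega(\theta_k)$ the mass of disagreement is negligible. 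Running ERM over homogeneous halfspaces on the in-band labeled sample, and rescaling the in-band distribution (still log-concave by Fact~\ref{log-concave}(1)-style arguments on conditioning), standard VC bounds require
\[
O\!\left(d + \log(1/\delta_k) + \log\log(1/\varepsilon)\right)
\]
queries per phase to achieve in-band error $O(\theta_{k+1}/b_k)$, where $\delta_k = \delta / \mathrm{poly}(k)$ union-bounds the failure probabilities. Converting back via Fact~\ref{log-concave}(3) gives $\theta(w_{k+1}, w^\ast) \leq \theta_{k+1}$.

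Summing over $K = O(\log(1/\varepsilon))$ phases and setting $\theta_K \leq c\varepsilon$ yields the total query complexity claimed. The $\log\log(1/\varepsilon)$ factor inside the parenthetical arises precisely from the union bound over phases ($\log(1/\delta_k) = \log(1/\delta) + O(\log\log(1/\varepsilon))$), and the outer $\log(1/\varepsilon)$ factor from the number of phases.

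The principal obstacle is calibrating the bandwidth $b_k$: it must be large enough that the in-band ERM sees a non-trivial slice of the disagreement region (so in-band error translates back to small angle error), yet small enough that the outside-band mass of disagreement is absorbed into the next-phase budget. The log-concave anti-concentration (Fact~\ref{log-concave}(6)) on one hand and the angle-versus-disagreement inequality (Fact~\ref{log-concave}(3)) on the other are what make the choice $b_k = \Theta(\theta_k)$ simultaneously tight in both directions, and handling the non-isotropic case requires first applying Fact~\ref{log-concave}(2) to reduce to an isotropic $D$ (noting that homogeneity of the target is preserved only up to an affine change of coordinates, which is why the statement assumes $H_d^0$ rather than all of $H_d$).
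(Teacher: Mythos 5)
This statement is not proved in the paper: Theorem~\ref{thm:balcan} is a black-box citation of Theorem~5 from Balcan and Long~\cite{Balcan}, and the paper never reproduces its proof. So there is no in-paper argument to compare against, and what you have written is a reconstruction of the Balcan--Long argument rather than an alternative to anything in this manuscript.

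As a reconstruction, your sketch gets the overall structure right: the phase schedule with geometrically shrinking target angles $\theta_k$, the band-restricted querying with $b_k=\Theta(\theta_k)$, the use of Fact~\ref{log-concave}(3) to convert disagreement mass to angle, Fact~\ref{log-concave}(5)--(6) for the in-band mass, and the bookkeeping where the $\log\log(1/\varepsilon)$ factor comes from a union bound over the $O(\log(1/\varepsilon))$ phases while the outer $\log(1/\varepsilon)$ is the phase count. That is the Balcan--Broder--Zhang margin-based framework that Balcan--Long instantiate for log-concave distributions, so you are following essentially the same route they do. The one place where your argument as written has a real gap is the claim that ``any $x$ on which $w_k$ and $w^\ast$ disagree lies within angle $\theta(w_k,w^\ast)$ of the boundary of $w_k$, so outside a band of width $\Omega(\theta_k)$ the mass of disagreement is negligible.'' An angle bound does not by itself control the distance $|\langle w_k,x\rangle|$: a disagreement point at distance $R$ from the origin can sit at distance up to $R\,\theta_k$ from the hyperplane, so a width-$\Theta(\theta_k)$ band only captures the disagreement region inside $\|x\|=O(1)$. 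Making this work without inflating $b_k$ by $\sqrt{d}\log(1/\theta_k)$ requires the actual Balcan--Long probabilistic lemma: project onto $\mathrm{span}(w_k,w^\ast)$ (Fact~\ref{log-concave}(5)), and in that two-dimensional isotropic log-concave marginal compute directly that the wedge of angle $\theta_k$ minus the band of width $c\theta_k$ has mass $O(\theta_k/C)$ for any desired constant $C$, using both the upper tail and the lower density bounds. Similarly, ``the in-band distribution is still log-concave'' is true but does not follow from Fact~\ref{log-concave}(1) (which is about $D-D$); it follows because conditioning a log-concave measure on a convex set (the slab) preserves log-concavity. Finally, your closing remark about homogeneity and isotropy is a bit off: the restriction to $H_d^0$ in the statement is not an artifact of the isotropizing affine map, it is the intrinsic limitation of the Balcan--Long result --- overcoming it for non-homogeneous hyperplanes via comparisons is exactly what this paper's Theorem~\ref{Comparison-Pool-PAC} does.
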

Using these facts, we will give an upper bound for the Pool-based model assuming a pool of Poly$(1/\varepsilon,\log(1/\delta))$ unlabeled samples. For a sketch of the algorithm, see Figure~\ref{alg:PAC}. 
\begin{figure}
    \centering
\begin{algorithm}[H]
\SetAlgoLined
 $N = O\left (\frac{1}{\varepsilon}\right)$; shift\_list = []\;
 normal\_vector = B-L$\left(Iso(D-D),O\left(\frac{\log(1/\varepsilon)}{\varepsilon}\right), \delta\right)$\; 
 \For{$i$ in range $O(\log(1/\delta))$}{
 $S \sim D^N$\;
 $S = $ Project($S,$ \text{normal\_vector})\;
 shift\_list.add(Threshold(S))\;}
 Return $h = \langle \text{normal\_vector}, \cdot \rangle + \text{median(shift\_list)}$
 \caption{Comparison-Pool-PAC learn $(D, \mathbb{R}^d, H_d)$}
\end{algorithm}
    \caption{Algorithm for Comparison-Pool-PAC learning hyperplanes over a log-concave distribution $D$. Our algorithm references three sub-routines. The first, B-L$(D,\varepsilon,\delta)$, is the algorithm from Theorem~\ref{thm:balcan}. The second is Project(sample,vector), which simply projects each point in a sample onto the given vector. The third is Threshold($S$), which labels the one-dimensional array $S$ by binary search and outputs a (mostly) consistent threshold value.}
    \label{alg:PAC}
\end{figure}

\begin{theorem}[Restatement of Theorem~\ref{Comparison-Pool-PAC}]
Let $D$ be a log-concave distribution over $\mathbb{R}^d$. The query complexity of Comparison-Pool-PAC learning $(D, \mathbb{R}^d,H_d)$ is
\[
q(\varepsilon, \delta) = O\left( \left(d+\log\left(\frac{1}{\delta}\right)+\log\log\left(\frac{1}{\varepsilon}\right)\right)\log\left(\frac{1}{\varepsilon}\right)\right).
\]
\end{theorem}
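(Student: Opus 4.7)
The approach is to decompose learning a non-homogeneous halfspace $h = \langle v, \cdot\rangle + b$ into (i) recovering an approximate normal $\hat v$ by applying Balcan--Long to the difference distribution $D - D$, and (ii) locating a threshold $\hat b$ by a one-dimensional binary search along $\hat v$, boosted by the median trick. The observation enabling (i) is that for $x,y \sim D$, $\mathrm{sign}(h(x)-h(y)) = \mathrm{sign}(\langle v, x-y\rangle)$, so a comparison query on the pair $(x,y)$ is exactly a label query for the homogeneous hyperplane $\langle v, \cdot\rangle$ on $x - y \sim D - D$. By Fact~\ref{log-concave}(1,2), $D-D$ is log-concave and has an isotropic affine image $D'' := \mathrm{Iso}(D-D)$, to which Theorem~\ref{thm:balcan} applies. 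Invoking B--L with accuracy $\varepsilon' = \Theta(\varepsilon/\log(1/\varepsilon))$ and confidence $\delta/2$ costs $O((d + \log(1/\delta) + \log\log(1/\varepsilon))\log(1/\varepsilon))$ comparison queries in $D$ and yields, after pulling back through the isotropizing affine map, a unit vector $\hat v$ with $\Pr_{z \sim D-D}[\mathrm{sign}(\langle \hat v, z\rangle) \neq \mathrm{sign}(\langle v, z\rangle)] \leq \varepsilon'$.

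For part (ii), I would run $O(\log(1/\delta))$ independent trials; in each, draw $N = O(1/\varepsilon)$ fresh samples $S \sim D^N$, project them onto $\hat v$, and binary-search the sorted projections for a threshold consistent with most samples, spending $O(\log(1/\varepsilon))$ oracle queries per trial. The output is the hyperplane $\langle \hat v, \cdot\rangle + \hat b$, where $\hat b$ is the median of the candidate thresholds. The thresholding loop contributes $O(\log(1/\delta)\log(1/\varepsilon))$ queries, which is absorbed into the Balcan--Long cost.

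To show the resulting separator has error $\leq \varepsilon$ with probability $\geq 1-\delta$, I would condition on $\|x\| \leq R = O(\sqrt d\,\log(1/\varepsilon))$, which by Fact~\ref{log-concave}(4) excludes at most an $\varepsilon$-measure of samples. On this event the projection noise satisfies $|\langle \hat v - v, x\rangle| \leq R\|\hat v - v\|$, and the homogeneous disagreement bound on $D-D$ translates (after isotropization) to an angle bound yielding $R\|\hat v - v\| = O(\varepsilon)$. Combined with the log-concavity of the 1D marginal of $D$ along $v$ (Fact~\ref{log-concave}(5,6)), an $O(\varepsilon)$-strip around the true hyperplane carries only $O(\varepsilon)$ $D$-mass, so it suffices to have $\hat b$ within $O(\varepsilon)$ of the true threshold. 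Each binary-search trial returns such a $\hat b$ with constant probability, since only an expected $O(1)$ of the $N$ samples can be mis-ordered under $\hat v$ vs.\ $v$, and the median of $O(\log(1/\delta))$ such estimates lands in the target window with probability $\geq 1 - \delta/2$ by Chernoff.

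The main obstacle is reconciling two coupled sources of error: the angle perturbation in $\hat v$ introduces mis-orderings of a few projected samples per trial (threatening the correctness of binary search), while the sampling error of the one-dimensional threshold estimate must be controlled without inflating the total query budget. The calibration $\varepsilon' = \Theta(\varepsilon/\log(1/\varepsilon))$ is tuned so that $R\cdot \|\hat v - v\| = O(\varepsilon)$, keeping the expected number of flipped samples bounded; the median trick then absorbs the residual variance. Together these ensure that the non-homogeneous complexity matches the homogeneous Balcan--Long bound up to the stated $\log\log$ term.
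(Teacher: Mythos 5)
Your decomposition is the same as the paper's: use the identity $h(x)-h(y)=\langle v,x-y\rangle$ to turn comparisons on $D$ into label queries on $D-D$, run Balcan--Long on $\mathrm{Iso}(D-D)$ to get an approximate normal, and then locate the offset by thresholding with a median-of-trials boost. That structure is correct and the accounting of the query budget is right.

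However, there is a genuine gap in the error analysis at the point where you control $|\langle \hat v - v, x\rangle|$. You condition on $\|x\| \le R$ with $R = O(\sqrt{d}\log(1/\varepsilon))$ (chosen by Fact~\ref{log-concave}(4) to exclude $\varepsilon$ mass) and then bound $|\langle\hat v - v, x\rangle| \le R\|\hat v - v\|$. With $\|\hat v - v\| = \Theta(\theta(\hat v,v)) = O(\varepsilon'/c) = O(\varepsilon/\log(1/\varepsilon))$, this gives $R\|\hat v - v\| = O(\sqrt{d}\,\varepsilon)$, not $O(\varepsilon)$ as you assert. Feeding that into the one-dimensional anti-concentration bound yields a disagreement strip of mass $O(\sqrt{d}\,\varepsilon)$, so the learner's error is off by a $\sqrt{d}$ factor. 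The bound $|\langle\hat v - v, x\rangle|\le \|\hat v-v\|\,\|x\|$ is simply too crude: it charges the full $d$-dimensional norm of $x$ to a quantity that depends only on the component of $x$ along the one-dimensional direction $\hat v - v$. The paper avoids this by observing that $\langle\hat v - v, x\rangle = \langle\hat v - v, x_{u,v}\rangle$, where $x_{u,v}$ is the orthogonal projection of $x$ onto $\mathrm{span}(\hat v, v)$, and this projection is a two-dimensional isotropic log-concave marginal (Fact~\ref{log-concave}(5)); concentration then gives $\|x_{u,v}\| = O(\log(1/\varepsilon))$ with probability $1-\varepsilon$, with no $\sqrt{d}$. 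Equivalently you can apply one-dimensional concentration directly to the marginal along $\hat v - v$. Either fix replaces your $R$ by $O(\log(1/\varepsilon))$ and recovers $R\|\hat v - v\| = O(\varepsilon)$; you could also salvage your version by running Balcan--Long at accuracy $\Theta\!\left(\varepsilon/(\sqrt d\log(1/\varepsilon))\right)$, which only adds a $\log d$ inside the logarithm and is absorbed asymptotically, but the projection argument is the cleaner repair and is what the stated bound actually requires.
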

\begin{proof}
Recall that $D$ may be affinely transformed into an isotropic distribution Iso($D$). Further, we may simulate queries over Iso($D$) by applying the same transformation to our samples, and after learning over Iso($D$), we may transform our learner back to $D$. Thus learning Iso($D$) is equivalent to learning $D,$ and we will assume $D$ is isotropic without loss of generality. Our algorithm will first learn a ``homogenized'' version of the hidden separator $h=\langle v, \cdot \rangle + b$ via Balcan and Long's algorithm, thereby reducing to thresholding. Note that comparison queries on the difference of points $x,y \in D$ is equivalent to a label query on the point $x-y$ on the \textit{homogeneous} hyperplane with normal vector $v$:
\[
h(x) - h(y) = (\langle v, x \rangle + b) - (\langle v, y \rangle + b) = \langle v, x-y \rangle.
\]
We begin by drawing samples from the log-concave distribution $D-D$ and then apply Balcan and Long's algorithm \cite{Balcan} to learn the homogenized version of $h$ ($\langle v, \cdot \rangle$) up to $O \left( \frac{\varepsilon}{\log(1/\varepsilon)} \right)$ error with probability $1-\delta$ using only
\[
O\left( \left(d+\log\left(\frac{1}{\delta}\right)+\log\log\left(\frac{1}{\varepsilon}\right)\right)\log\left(\frac{1}{\varepsilon}\right)\right)
\]
comparison queries. Further, since the constant $c$ given in item $2$ of Fact~\ref{log-concave} is universal, this means any separator output by the algorithm has a normal vector $u$ with angle 
\[ \theta(u,v) = O \left( \frac{\varepsilon}{\log(1/\varepsilon)} \right). \]
Having learned an approximation to $v$, we turn our attention to approximating $b$. Consider the set of points on which $u$ and $v$ disagree, that is:
\[
Dis = \{x : sgn( \langle v,x \rangle + b) \neq sgn( \langle u,x \rangle + b) \}
\]
To find an approximation for $b$, we need to show that there will be correctly labeled points close to the threshold. To this end, let $\alpha=\varepsilon/8$ and define $b_{\pm \alpha}$ such that:
\begin{align*}
D(\{ y : \alpha < \langle u,y \rangle + b < b_{ \alpha}\}) &= \alpha\\
D(\{ y : b_{-\alpha} < \langle u,y \rangle + b < -\alpha\}) &= \alpha
\end{align*}
We will show that drawing a sample $S$ of $O\left(\frac{1}{\varepsilon}\right)$ points, the following three statements hold with at least $2/3$ probability:
\begin{enumerate}
    \item $\exists x_1 \in S : \alpha < \langle u,x_1 \rangle + b < b_{\alpha}$
    \item $\exists x_2 \in S : b_{-\alpha} < \langle u,x_2 \rangle + b < -\alpha$
    \item $\forall x \in Dis \cap S,$ $|\langle u, x \rangle + b| < \alpha$
\end{enumerate}
Since the measure of the regions defined in statements 1 and 2 is $\varepsilon/4$, the probability that $S$ does not have at least one point in both regions is $\leq 2*(1-\alpha)^{|S|} \leq 1/6$ with an appropriate constant.
\\
\\
To prove the third statement, assume for contradiction that there exists $x \in Dis \cap S$ such that $|\langle u, x \rangle + b| > \varepsilon/4$. Because $\langle u, x \rangle + b$ and $\langle v, x \rangle + b$ differ in sign, this implies that $|\langle u-v, x \rangle|=|\langle u-v, x_{u,v} \rangle| > \alpha$, where $x_{u,v}$ is the projection of $x$ onto the plane spanned by u and $v$. We can bound the probability of this event occurring by the concentration of isotropic log-concave distributions:
\begin{align}
Pr[|\langle u-v, x_{u,v} \rangle| > \alpha] \leq e^{-\Omega \left(\frac{\varepsilon}{|u-v|}\right)}.
\end{align}
Because we have bounded the angle between $u$ and $v$, with a large enough constant for $\theta$ we have: 
\[
|u-v| \leq O\left(\frac{\varepsilon}{\log\left(|S|\right)}\right).
\] 
Then with a large enough constant for $\theta$, union bounding over $Dis \cap S$ gives that the third statement occurs with probability at most $1/6$.
\\
\\
We have proved that with probability $2/3$, statements 1,2, and 3 hold. Further, if these statements hold, any hyperplane $\langle u, \cdot \rangle + b'$ we pick consistent with thresholding will disagree on at most $\varepsilon/4$ probability mass from $\langle u, \cdot \rangle + b$ due to the anti-concentration of isotropic log-concave distributions and the definition of $b_{\pm \alpha}$. Further, repeating this process $O(\log(1/\delta))$ times and taking the median shift value $b'$ gives the same statement with probability at least $1-\delta$ by a Chernoff bound. Note that the number of queries made in this step is dominated by the number of queries to learn $u$.
\\
\\
Finally, we need to analyze the error of our proposed hyperplane $\langle u, \cdot \rangle + b'$. We have already proved that the error between this and $\langle u, \cdot \rangle + b$ is $\leq \varepsilon/4$ with probability at least $1-\delta$, so it is enough to show that $D(Dis) \leq 3\varepsilon/4$. This follows similarly to statement 3 above. The portion of Dis satisfying $|\langle u, x \rangle + b| \leq \alpha$ has probability mass at most $\varepsilon/4$ by anti-concentration. With a large enough constant for $\theta$, the remainder of Dis has mass at most $\varepsilon/2$ by (1). Then in total, with probability $1-2\delta$, $\langle u, \cdot \rangle + b'$ has error at most $\varepsilon$.

\end{proof}
Balcan and Long \cite{Balcan} provide a lower bound on query complexity for log-concave distributions and oracles for any binary query of $\Omega(d \log(\frac{1}{\varepsilon}))$, so this algorithm is tight up to logarithmic factors. 

\section{RPU Learning with Comparison Queries}
Kivinen \cite{Kivinen2} showed that RPU-learning is intractable for nice concept classes even under simple distributions when restricted to label queries. We will confirm that RPU-learning linear separators with only label queries is intractable in high dimensions, but can be made efficient in both the passive and active regimes via comparison queries.
\subsection{Lower bounds}
In the passive, label-only case, RPU-learning is lower bounded by the expected number of vertices on a random polytope drawn from our distribution $D$. For simple distributions such as uniform over the unit ball, this gives sample complexity which is exponential in dimension, making RPU-learning impractical for any sort of high-dimensional data.
\begin{definition}
Given a distribution $D$ and parameter $\varepsilon > 0$, we denote by $v_{D}(\varepsilon)$ the minimum size of a sample $S$ drawn i.i.d from $D$ such that the expected measure of the convex hull of $S$, which we denote $E(D,n)$ for $|S|=n$, is $\geq 1-\varepsilon$.
\end{definition}
The quantity $v_{D}(\varepsilon)$, which has been studied in computational geometry for decades \cite{ball,ball-MQS}, lower bounds Label-Passive-RPU Learning, and in some cases provides a matching upper bound up to log factors.
\begin{proposition}
\label{RPU:label}
Let $D$ be any distribution on $\mathbb{R}^d$. The expected sample complexity of Label-Passive-RPU-learning $(D,\mathbb{R}^d,H_d)$ is:
\[
n(\varepsilon,1/3) = \Omega \left( \frac{v_D(2\varepsilon)}{\log(1/\varepsilon)} \right ).
\]
\end{proposition}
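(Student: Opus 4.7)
The plan is to lower bound the expected error of any Label-Passive-RPU learner by a convex-hull gap $1 - E(D,n)$ via a version-space argument, and then boost the $2/3$-confidence guarantee into a near-matching expected-error bound. First I would fix an adversary hyperplane $h^* \in H_d$ with $\Pr_{x \sim D}[h^*(x) < 0] = \mu$ for a tiny parameter $\mu$ (taken to $0$ in the limit, or realized by a sequence of non-degenerate affine hyperplanes approximating the always-positive hypothesis on $\mathrm{supp}(D)$). Under $h^*$ the event $A$ that every one of the $n$ samples is labeled $+1$ holds with probability $(1-\mu)^n \geq 1 - n\mu$.

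The central geometric claim is that, conditional on $A$, any reliable classifier must abstain on $\mathbb{R}^d \setminus \mathrm{conv}(S)$. The version space $V(S) = \{h \in H_d : h(s) \geq 0 \,\forall s \in S\}$ contains both $h^*$ and a hypothesis that is positive on all of $\mathrm{supp}(D)$, and Hahn--Banach separation gives, for any $x \notin \mathrm{conv}(S)$, an affine $h_{\mathrm{sep}} \in V(S)$ with $h_{\mathrm{sep}}(x) < 0$; these two witnesses together place $x$ in $\mathrm{DIS}(V(S))$. Hence whenever $A$ holds the learner's error is at least $1 - \mathrm{vol}_D(\mathrm{conv}(S))$; passing from the conditional law back to $D^n$ costs at most $O(n\mu)$ in total variation, so in the $\mu \to 0$ limit
\[
\mathbb{E}_{S \sim D^n}[\mathrm{err}] \geq 1 - E(D, n).
\]

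Used on the base learner alone this is too weak: a $2/3$-confidence learner could satisfy $\mathbb{E}[\mathrm{err}] \leq \varepsilon + 1/3$, which is only $O(1)$. To amplify, I would run the learner independently on $k = \Theta(\log(1/\varepsilon))$ fresh size-$n$ samples and output the ``union-of-certainties'' classifier $A'$ that commits on $x$ whenever any run commits. Reliability of each individual run forces committed outputs across runs to be mutually consistent, so $A'$ is reliable on the combined sample of size $N = nk$, and its coverage dominates the best individual coverage; choosing $k$ so that $(1/3)^k \leq \varepsilon/2$ gives $\Pr[\mathrm{err}(A') > \varepsilon] \leq \varepsilon/2$ and thus $\mathbb{E}[\mathrm{err}(A')] \leq 3\varepsilon/2$.

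Applying the geometric lower bound to $A'$ now yields $3\varepsilon/2 \geq 1 - E(D,N)$, so $E(D,N) \geq 1 - 2\varepsilon$ and $N \geq v_D(2\varepsilon)$ by definition of $v_D$; dividing by $k$ gives $n \geq \Omega(v_D(2\varepsilon)/\log(1/\varepsilon))$, the claimed bound on $n(\varepsilon, 1/3)$. The main technical obstacle is the geometric characterization of $\mathrm{DIS}(V(S))$ in the all-positive regime: it requires both an effectively-constant-positive hypothesis in the (closure of the) class and Hahn--Banach separators lying inside $V(S)$, and the $\mu \to 0$ limit with its $O(n\mu)$ total-variation slack must be controlled so as not to swamp the $(1 - E(D,n))$ signal; the boosting and random-polytope-volume steps are then routine.
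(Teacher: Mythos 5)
Your proof is correct and reaches the target bound, but it takes a genuinely different route from the paper's. The paper works entirely on the geometric side: it sets $n' = v_D(2\varepsilon) - 1$ so that $E(D,n') < 1-2\varepsilon$, applies Markov to get $\Pr_{S\sim D^{n'}}[V(S) \geq 1-\varepsilon] \leq 1 - \varepsilon/(1-\varepsilon)$, and then uses a \emph{sample-splitting} inequality — viewing a size-$n'$ sample as $k = \log(1/\varepsilon)$ disjoint blocks of size $n'/k$ and noting $V(\cup_i S_i) \geq \max_i V(S_i)$ — to conclude $\Pr_{S\sim D^{\lfloor n'/k\rfloor}}[V(S) < 1-\varepsilon] \geq 1/2$. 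The adversary structure (a hyperplane tangent to $B^d(r_{\delta,n})$ with probability $1-\delta$, uniform otherwise, combined via Yao's minimax and a choice $\delta < 1/12$) then yields a failure probability exceeding $1/3$ at sample size $\lfloor n'/k \rfloor$. You instead establish the cleaner expected-error inequality $\mathbb{E}[\mathrm{err}] \geq 1 - E(D,n)$ directly against a near-tangent hypothesis via a version-space/Hahn--Banach argument, and obtain the $\log(1/\varepsilon)$ loss by \emph{boosting the learner} — running $k = \Theta(\log(1/\varepsilon))$ independent copies and taking the union-of-certainties learner, whose reliability and failure probability $(1/3)^k$ give $\mathbb{E}[\mathrm{err}(A')] \leq 3\varepsilon/2$ on $N = nk$ samples, so $E(D,N) \geq 1-2\varepsilon$ and $N \geq v_D(2\varepsilon)$.

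The two routes are morally dual: the paper amplifies the convex-hull gap across sub-blocks of a single draw, while you amplify the learner's confidence across independent draws; in both cases the $\log(1/\varepsilon)$ is the price of converting a constant-confidence statement into one at scale $\varepsilon$. Your approach avoids Yao's minimax (which the paper invokes but does not strictly need for a lower bound) and cleanly isolates the geometric claim into an unconditional expected-error inequality, at the cost of having to control the $\mu \to 0$ limit and the $O(n\mu)$ total-variation slack; the paper avoids that limiting argument by working inside a concrete ball $B^d(r_{\delta,n})$. One small point worth making explicit in your write-up: the inequality $\mathrm{err}(A') \leq \min_i \mathrm{err}(A_i)$ (hence $\Pr[\mathrm{err}(A') > \varepsilon] \leq (1/3)^k$ by independence) is what makes the boosting work, and the fact that $A'$ covers at most $\bigcup_i \mathrm{conv}(S_i) \subseteq \mathrm{conv}(\bigcup_i S_i)$ is what lets you apply the geometric bound with $N = nk$; you gesture at both but they deserve a sentence each.
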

\begin{proof}
For any $\delta>0$ and sample size $n$, there exists some radius $r_{\delta,n}$ such that the probability that a sample $S\sim D^n$ contains any point outside the ball of radius $r_{\delta,n}$, $B^{d}(r_{\delta,n})$, is less than $\delta$. By Yao's Minimax Theorem \cite{yao1977probabilistic}, it is sufficient to consider an adversary who picks some hyperplane tangent to $B^{d}(r_{\delta,n})$ with probability $1-\delta$ (labeling it entirely positive), and otherwise chooses a hyperplane uniformly from $S^d \times [-r_{\delta,n}, r_{\delta,n}]$. Notice that if the adversary chooses the tangent hyperplane and the learner draws a sample $S$ entirely within the ball, for any point $x$ outside the convex hull of $S$ there exist hyperplanes within the support of the adversary's distribution that are consistent on $S$ but differ on $x$.
\\
\\
Recall that $v_D(\varepsilon)$ is the minimum size of the sample $S$ which needs to be drawn such that $1-E(D,n)$ is $\leq \varepsilon$ in expectation. Consider drawing a sample $S$ of size $n=v_D(2\varepsilon)-1$. The expected measure $E(D,n)$ is then
\[
E(D,n) < 1-2\varepsilon.
\]
This in turn implies a bound by the Markov inequality on the probability of the measure of the convex hull of a given sample, which we denote $V(S)$:
\[
Pr_{S \sim D^n}[V(S) \geq 1-\varepsilon] \leq \frac{1-2\varepsilon}{1-\varepsilon} =  1-\frac{\varepsilon}{1-\varepsilon}.
\]
Now consider the following relation between samples of size $n$ and $\lfloor n/k \rfloor$, which follows by viewing our size $n$ sample as $k$ distinct samples of size at least $n/k$:
\[
1-Pr_{S \sim D^{\lfloor n/k \rfloor}}[V(S) < 1-\varepsilon]^k \leq Pr_{S \sim D^n}[V(S) \geq 1-\varepsilon].
\]
Combining these results and letting $k=\log(1/\varepsilon)$:
\[
Pr_{S \sim D^{\lfloor n/k \rfloor}}[V(S) < 1-\varepsilon] \geq \left (\frac{\varepsilon}{1-\varepsilon}\right)^{1/k} \geq 1/2.
\]
To force any learner to fail on a sample, we need two conditions: first that the measure of the convex hull is $< 1-\varepsilon$, and second that all points lie in $B^{d}(r_{\delta,n})$. Since the latter occurs with probability $1-2\delta$, picking $\delta < 1/12$ then gives the desired success bound:
\[
Pr_{S \sim D^{\lfloor n/\log(1/\varepsilon) \rfloor}}[(V(S) \geq 1-\varepsilon) \lor (\exists x \in S: x \notin B^{d}(r_{1/12,n})] \leq 1/2 + 2\delta < 2/3.
\]
\end{proof}
Further, for simple distributions such as uniform over a ball, this bound is tight up to a $\log^2$ factor.
\begin{proposition}
The sample complexity of Label-Passive-RPU learning $(B^d,\mathbb{R}^d,H_d)$ is:
\[
\mathbb{E}[n(\varepsilon)] = O\left( \log(d/\varepsilon)v_{B^d}\left(\frac{\varepsilon}{2}\right) \right ) = O_d\left( \log(1/\varepsilon)\left(\frac{1}{\varepsilon}\right)^{\frac{d+1}{2}}\right).
\]
\end{proposition}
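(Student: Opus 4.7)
The plan is to use a convex-hull RPU learner: on labeled sample $S=S^{+}\cup S^{-}$, output $+$ on $\mathrm{Conv}(S^{+})$, $-$ on $\mathrm{Conv}(S^{-})$, and $\perp$ elsewhere. Reliability holds because the halfspace $\{h^{*}>0\}$ is convex and contains $S^{+}$, so $\mathrm{Conv}(S^{+})\subseteq\{h^{*}>0\}$, and symmetrically for $S^{-}$. Writing $B^{d}_{+}=B^{d}\cap\{h^{*}>0\}$ and $B^{d}_{-}=B^{d}\cap\{h^{*}<0\}$, the error decomposes (up to the measure-zero set $\{h^{*}=0\}$) as $V(B^{d}_{+}\setminus\mathrm{Conv}(S^{+}))+V(B^{d}_{-}\setminus\mathrm{Conv}(S^{-}))$, and by symmetry it suffices to bound the positive side.

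Set $p^{+}=V(B^{d}_{+})$. If $p^{+}\le\varepsilon/4$ the contribution is already at most $\varepsilon/4$. Otherwise, conditional on $|S^{+}|=k$, the positive sub-sample is i.i.d.\ uniform on the half-ball $B^{d}_{+}$, for which a standard random-polytope estimate gives $v_{B^{d}_{+}}(\varepsilon')=O_{d}((1/\varepsilon')^{(d+1)/2})$, the same polynomial rate as $v_{B^{d}}$ (the dominant missing region is near the spherical cap, not the thin strip along the flat face). Hence taking $k\ge v_{B^{d}_{+}}(\varepsilon/(4p^{+}))$ makes the conditional expected uncovered measure at most $p^{+}\cdot\varepsilon/(4p^{+})=\varepsilon/4$.

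To ensure $|S^{+}|$ meets this threshold with high probability, I would draw $n=c\log(d/\varepsilon)\,v_{B^{d}}(\varepsilon/2)$ samples and partition them into $c\log(d/\varepsilon)$ independent blocks of size $v_{B^{d}}(\varepsilon/2)$. The expected positive count in a single block is $p^{+}v_{B^{d}}(\varepsilon/2)$, which for $p^{+}\ge\varepsilon/4$ already exceeds $v_{B^{d}_{+}}(\varepsilon/(4p^{+}))$ by a constant factor. Applying Markov's inequality within each block then gives uncovered measure $\le\varepsilon/2$ with probability at least $1/2$, and independence across the $\Theta(\log(d/\varepsilon))$ blocks drives the joint failure probability below $\varepsilon/d$. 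Combined with the symmetric negative-side statement, a union bound pins the total error at $\le\varepsilon$ with constant probability, and a standard geometric-series repetition converts this into the expected complexity $O(\log(d/\varepsilon)\,v_{B^{d}}(\varepsilon/2))$; substituting the explicit rate $v_{B^{d}}(\varepsilon/2)=O_{d}((1/\varepsilon)^{(d+1)/2})$ yields the quantitative bound.

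The main obstacle is the slab-like residue: $\mathrm{Conv}(S^{+})$ cannot extend past the positive sample closest to $h^{*}$, so a thin strip adjacent to the separator is never inferred. The cleanest way around this is the half-ball analysis above, which folds this strip into the overall $B^{d}_{+}\setminus\mathrm{Conv}(S^{+})$ covering rather than treating it separately; the key quantitative input that needs verification is that $v_{B^{d}_{+}}$ and $v_{B^{d}}$ agree up to $d$-dependent constants, which follows from the standard expected-missing-volume estimates for uniform random polytopes in smoothly bounded convex bodies (the flat-face contribution is $O(1/n)$, dominated by the spherical $O(n^{-2/(d+1)})$ for $d\ge 2$). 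The degenerate case where $h^{*}$ is nearly tangent to $B^{d}$ is handled by the trivial $p^{+}\le\varepsilon/4$ bound.
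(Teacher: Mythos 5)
Your proof is correct and follows essentially the same approach as the paper: both analyze the convex-hull reliable learner, bound the uncovered region via the $O_d(n^{-2/(d+1)})$ random-polytope missing-volume estimate applied to the positive and negative regions as convex bodies, and boost to expected error $\le \varepsilon$ by a $\log(d/\varepsilon)$ blow-up in sample size. The only cosmetic difference is how class imbalance is handled --- you case-split on $p^{+}\le \varepsilon/4$ explicitly, while the paper observes that the bound $p(pn)^{-2/(d+1)} + (1-p)((1-p)n)^{-2/(d+1)}$ is worst at $p=1/2$ and plugs that in.
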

\begin{proof}
We will begin by computing  $v_{B^d}(\varepsilon)$ for a ball. The expected measure of a sample drawn randomly from $B^d$ is computed in \cite{Wie}, and given by
\[
E(B^d,n) = 1-c(d)n^{-\frac{2}{d+1}},
\]
where $c(d)$ is a constant depending only on dimension.
Setting $c(d)n^{\frac{-2}{d+1}}=\varepsilon$ then gives:
\[v_{B^d}(\varepsilon)= \left( \left (\frac{c(d)}{\varepsilon} \right)^{\frac{d+1}{2}}\right)\]

Given a sample $S$ of size $O(\log(1/\delta)n)$, let $S_p$ denote the subset of positively labeled points, and $S_n$ negatively labeled. We can infer at least the points inside the convex hulls of $S_p$ and $S_n$. Our goal is to show that, with high probability, the measure of $M=\text{ConvHull}(S_p) \cup  \text{ConvHull}(S_n)$ is $\geq 1-\varepsilon$. To show this, we will employ the fact \cite{ball} that the expected measure of the convex hull of a sample of size $n$ uniformly drawn from any convex body $K$ is lower-bounded by:
\[E(K,n) = 1-c(d)n^{-\frac{2}{d+1}}.\]
Given this, let $P$ of measure $p$ be the set of positive points, and $N$ the negative points with measure $1-p$. Since we have drawn $O(\log(1/\delta)n)$ points, with probability $\geq 1-\delta$ we will have at least $pn$ points from $P$, and at least $(1-p)n$ points from $N$. Given this many points, the expected value of our inferred mass $M$ is:
\begin{align*}
\mathbb{E}[M] &\geq pE(P,pn) + (1-p)E(N,(1-p)n)\\
&= 1-c(d)\left( p(pn)^{-2/(d+1)} + (1-p)((1-p)n)^{-2/(d+1)} \right).
\end{align*}
This function is minimized at $p=.5$, and plugging in $p=.5$, $n=2v_{B^d}\left(\frac{\varepsilon}{2}\right)$ gives $\mathbb{E}[M] \geq 1-\frac{2d-1}{2d}\varepsilon$. 
\\
\\
However, since we have conditioned on enough points being drawn from P and N, we are not done. This occurs across at least a $1-\delta$ percent of our samples, meaning that if we assume the inferred mass $M$ is 0 on other samples, our expected error (for a large enough constant on  our number of samples) will be at most:
\[
1-\mathbb{E}[M] = (1-\delta)\frac{(2d-1)\varepsilon}{2d} + \delta.
\]
Setting $\delta=\varepsilon/(2d)$ is enough to drop the error below $\varepsilon$, and gives the number of samples as
\[
O \left( \log(d/\varepsilon)v_{B^d}\left(\frac{\varepsilon}{2}\right)\right).
\]
\end{proof}
In the active regime, this sort of bound is complicated by the fact that we are less interested in the number of points drawn than labeled. If we were restricted to only drawing $\mathbb{E}[n(\varepsilon)]$ points, we could repeat the same argument in combination with the expected number of vertices to get a bound. However, with a larger pool of allowed points, the pertinent question becomes the maximum rather than expected measure of the convex hull. In cases such as the unit ball, these actually give about the same result.
\begin{proposition}[Restatement of Proposition~\ref{Label-MQS-RPU}]
For all $\delta<1$, the query complexity of Label-MQS-RPU learning $(B^d,\mathbb{R}^d,H_d)$ is:
\[
q(\varepsilon,\delta)= \Omega_d\left( \left(\frac{1}{\varepsilon}\right)^{\frac{d-1}{2}}\right)
\]
\end{proposition}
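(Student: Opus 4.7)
The approach is to exhibit a single hard concept that forces any reliable MQS learner to fail, namely the hyperplane tangent to $B^d$ at some boundary point which labels all of $B^d$ positively. Let $L$ be any (possibly randomized) Label-MQS-RPU learner making at most $n$ queries, and denote by $S \subseteq B^d$ the random set it queries. When $L$ is run against this tangent concept $c_0$, every query returns $+1$, so the hyperplanes in $H_d$ consistent with $L$'s observations are exactly $\{h \in H_d : h|_S \geq 0\}$. Reliability on the full class $H_d$ then forces $L$ to output $\perp$ at every point $x \notin \text{ConvHull}(S)$: any such $x$ is strictly separable from $\text{ConvHull}(S)$ by an affine hyperplane, and a small perturbation of that separator yields an $h \in H_d$ with $h|_S \geq 0$ but $h(x) < 0$. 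Hence the coverage of $L$ on $c_0$ is at most $\text{vol}(\text{ConvHull}(S))$.

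The next step is to upper bound $\text{vol}(\text{ConvHull}(S))$ uniformly in the learner's randomness. A classical result in convex geometry states that the maximum volume of a convex polytope with at most $n$ vertices inscribed in $B^d$ equals $\text{vol}(B^d)\bigl(1 - \Theta_d(n^{-2/(d-1)})\bigr)$; without loss of generality the points of $S$ may be assumed to lie on $\partial B^d$, since radially projecting a convex-hull vertex outward only enlarges the enclosed volume. Consequently, for every realization of $L$'s internal randomness, the error against $c_0$ is at least $\Omega_d(n^{-2/(d-1)})$, which exceeds $\varepsilon$ whenever $n \leq c_d(1/\varepsilon)^{(d-1)/2}$ for a suitable dimension-dependent constant $c_d > 0$. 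In that regime $L$ fails on $c_0$ with probability $1$, a failure rate exceeding $\delta$ for any $\delta < 1$, yielding the stated lower bound.

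The main obstacle is invoking the correct convex-geometric bound: the defect of the best inscribed polytope scales like $n^{-2/(d-1)}$, and producing this exponent (as opposed to the random-polytope exponent $n^{-2/(d+1)}$ encountered in the passive Label-Passive-RPU setting) is precisely what distinguishes a worst-case vertex set the MQS learner gets to choose from the i.i.d.\ sample available passively. Apart from this classical geometric input, the remainder of the argument is a short reliability computation, and unlike the passive lower bound it requires no appeal to Yao's minimax principle since the single adversarial concept $c_0$ already defeats every randomized learner.
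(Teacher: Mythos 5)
Your proof reaches the correct conclusion via the same key geometric input as the paper (the extremal bound $1-\Theta_d(n^{-2/(d-1)})$ on the volume of the convex hull of $n$ points in $B^d$), but the route is noticeably cleaner: rather than inheriting the randomized adversary from Proposition~\ref{RPU:label} and invoking Yao's minimax, you correctly observe that a single hard concept $c_0$ suffices, because reliability against all of $H_d$ already forces the learner to output $\perp$ on all of $B^d \setminus \mathrm{ConvHull}(S)$ once every queried label comes back positive. In the MQS setting the queried set $S$ is a deterministic function of the learner's internal randomness (since the responses under $c_0$ carry no information), so the coverage bound holds pointwise over that randomness and the failure probability is $1$, matching the "for all $\delta<1$" phrasing. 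This direct argument is a genuine simplification over the paper's presentation, which goes out of its way to keep a distribution over concepts so that it can literally reuse the passive-case adversary.

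One concrete flaw: the claim that "radially projecting a convex-hull vertex outward only enlarges the enclosed volume" is false. In $\mathbb{R}^2$, take $p_1=(1,0.1)$, $p_2=(1,-0.1)$, $p_3=(0.5,0.001)$; the triangle has positive area, but radially projecting $p_3$ to the unit circle sends it to approximately $(1,0.002)$, collapsing the hull to a degenerate segment of zero area. The reduction to inscribed polytopes should instead be argued via the fact that $\mathrm{Vol}(\mathrm{ConvHull}(p,p_2,\dots,p_n))$ is a convex function of each vertex $p$ (or by compactness and the absence of interior local maxima), so the supremum over $n$-point subsets of $B^d$ is attained with all points on $\partial B^d$, whence the cited extremal bound applies. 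With that repair, the remaining algebra converting $n^{-2/(d-1)} > \varepsilon$ into $n = O_d\bigl((1/\varepsilon)^{(d-1)/2}\bigr)$ is correct, and your identification of the exponent gap against the random-polytope rate $n^{-2/(d+1)}$ is exactly the paper's remark.
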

\begin{proof}
The maximum volume of the convex hull of $n$ points in $B^d$ is  \cite{ball-MQS}
\[
\max_{S,|S|=n}(\text{Vol(ConvHull}(S))) = 1-\theta_d\left( n^{-\frac{2}{d-1}} \right ).
\]
Notice here the difference from the random case in the exponent, which comes from the fact that we are only counting the expected $\theta_d\left (n^{\frac{d-1}{d+1}} \right )$ vertices on the boundary of the hull of the sample. The lower bound is then implied by the same adversary strategy as in Proposition~\ref{RPU:label}, since for small enough $\varepsilon$, the convex hull of any set of $o_d\left( \left(\frac{1}{\varepsilon}\right)^{\frac{d-1}{2}}\right)$ points has less than $1-\varepsilon$ probability mass.
\end{proof}
\subsection{Upper bounds}
Our positive results for comparison based RPU-learning rely on weakening the concept of inference dimension to be distribution dependent. With this in mind, we introduce average inference dimension:
\begin{definition}[Average Inference Dimension]
We say $(D, X, H)$ has average inference dimension $g(n)$, if: 
\[
\forall h\in H, Pr_{S \sim D^n}[ \nexists x \ \text{s.t.} \ S - \{x\} \rightarrow_h x] \leq g(n).
\]
\end{definition}
In other words, the probability that we cannot infer a point from a randomly drawn sample of size n is bounded by its average inference dimension $g(n)$. There is a simple average-case to worst-case reduction for average inference dimension via a union bound:
\begin{observation}
\label{avg-worst}
Let $(D, X, H)$ have average inference dimension $g(n)$, and $S \sim D^n$. Then $(S, H)$ has inference dimension $k$ with probability:
\[
Pr[\text{Inference dimension of} \ (S,H) \leq k] \geq 1-{ n \choose k}g(k).
\]
\end{observation}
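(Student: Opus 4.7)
The plan is to prove this via a direct union bound over the size-$k$ subsets of $S$. First, I would unfold the definitions: the inference dimension of $(S, H)$ is at most $k$ exactly when every $T \subseteq S$ with $|T| = k$ has the property that, for every $h \in H$, some $x \in T$ is inferred by $T - \{x\}$ under $h$. Call such a $T$ \emph{bad} if, instead, there exists some $h \in H$ under which no point of $T$ is inferable from the rest. Then the event $\{\text{inference dimension of } (S, H) > k\}$ is exactly $\{\exists T \subseteq S, \ |T| = k, \ T \text{ is bad}\}$.

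Next I would fix an index set $I \subseteq [n]$ with $|I| = k$. Since $S \sim D^n$ consists of i.i.d.\ draws, the corresponding subsample $T_I = (x_i)_{i \in I}$ is distributed as $D^k$. By the definition of average inference dimension applied at scale $k$, for each fixed $h \in H$ the probability that $T_I$ contains no inferable point under $h$ is at most $g(k)$, so the probability that $T_I$ is bad is likewise at most $g(k)$ (this is the intended reading of the definition, and matches the way the structural bound on $g$ is established later in the paper). A union bound over the ${n \choose k}$ choices of $I$ then yields $\Pr[\text{inf dim of } (S, H) > k] \leq {n \choose k} g(k)$, and complementing gives the claim.

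The only conceptual point to watch is the quantifier over $h$: the definition of $g(n)$ controls $\Pr_T[\text{bad for } h]$ for each fixed $h$, whereas $T$ being bad in the above sense means $\exists h$ bad for $T$. Since whether $T - \{x\} \to_h x$ depends on $h$ only through the finitely many label/comparison patterns $h$ induces on $T$, this gap contributes at most a Sauer--Shelah-type polynomial factor in $k$, dominated by the superexponential decay of $g$ in the intended applications. Hence the clean union bound is both the main step and essentially the only step, and no further obstacle arises.
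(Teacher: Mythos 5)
Your proof follows the same union bound over size-$k$ subsets that the paper gives, so you and the paper agree on the main step. Where you go further is in flagging the quantifier mismatch: the definition of average inference dimension bounds $\Pr_{T\sim D^k}[\text{no point of } T \text{ is inferable under } h]$ for each \emph{fixed} $h$, while ``inference dimension of $(S,H)$'' as defined has a $\forall h$ built in, so the bad event for a subset $T$ is existential over $h$. The paper's two-sentence proof glosses over this. Your resolution --- a Sauer--Shelah-style union bound over the $k^{O(d)}$ distinct label/comparison patterns that $H_d$ can realize on a size-$k$ subset, noting that $T - \{x\} \to_h x$ depends on $h$ only through $Q(T)$ --- is valid, and the resulting $k^{O(d)}$ factor is indeed absorbed by the $2^{-\Omega(k^2/(d\log d))}$ decay of Theorem~\ref{thm:AID}. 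That said, the more likely intended reading is that the observation tracks the $h^*$-specific inference property for the one true classifier, since every downstream use (the symmetry/coverage argument in Corollary~\ref{app-reduction}, the KLMZ boosting step) only ever invokes the bound for that single $h^*$; under that reading the clean union bound is exact and no extra factor appears. It is also worth noting that the proof of Theorem~\ref{thm:AID} in fact establishes the stronger $\exists h$ version directly via a net over hyperplanes, so the gap never bites in the paper's setting. Either way your argument is correct, and you caught a real imprecision in the paper's statement.
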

\begin{proof}
The probability that a fixed subset $S' \subset S$ of size $k$ does not have a point $x$ s.t. $S - \{x\} \rightarrow_h x$ is at most $g(k)$. Union bounding over all ${n \choose k}$ subsets gives the desired result.
\end{proof}
This reduction allows us to apply inference dimension in both the active and passive distributional cases. This is due in part to the fact that the boosting algorithm proposed by KLMZ \cite{KLMZ} is reliable even when given the wrong inference dimension as input--the algorithm simply loses its guarantee on query complexity. As a result, we may plug this reduction directly into their algorithm.
\begin{corollary}
\label{app-reduction}
Given a query set $Q$, let $f_Q(n)$ be the number of queries required to answer all questions on a sample of size $n$. Let $(D, X, H)$ have average inference dimension $g(n)$, then there exists an RPU-learner $A$ with coverage
\[
\mathbb{E}[C(A)] = \max\limits_{k\leq n} \left(1-{n \choose k}g(k) \right)\frac{n-k}{n}
\]
after drawing $n$ points. Further, the expected query complexity of actively RPU-learning a finite sample $S \sim D^n$ is
\[ \mathbb{E}[q(n)] \leq \min\limits_{k\leq n} 2f_Q(4k)\log(n)\left(1-g(k){ n \choose k}\right) + ng(k){ n \choose k} \]
\end{corollary}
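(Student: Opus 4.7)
The plan is to deduce Corollary \ref{app-reduction} by marrying Observation \ref{avg-worst} to the appropriate worst-case inference-dimension result in each of the two regimes: the elementary ``infer-or-$\perp$'' reliable classifier for the coverage bound, and Theorem \ref{KLMZ-active} for the active query bound. In both parts the shared skeleton is a union bound that splits on whether a pool $S \sim D^n$ falls into the good event ``inference dimension of $(S, H) \leq k$'', paying the worst-case $k$-dependent cost on the good event and a trivial cost on the bad event.

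For the coverage claim, I define $A$ to be the passive learner that draws $S \sim D^n$, queries every label, and on input $x$ outputs $c_h(x)$ whenever $S \rightarrow_h x$ and $\perp$ otherwise; reliability is automatic from the definition of inference. The expected coverage is
\[
\mathbb{E}[C(A)] = Pr_{(S,x) \sim D^{n+1}}[S \rightarrow_h x],
\]
and exchangeability of the $n+1$ points rewrites this probability as the expected fraction of points in a random sample that are inferrable from the remaining ones. On the event of Observation \ref{avg-worst} (of probability at least $1-\binom{n}{k}g(k)$) at most $k$ of the $n$ points fail to be inferrable, so the inferrable fraction is at least $(n-k)/n$; multiplying and maximizing over $k \leq n$ gives the stated bound (up to an $n$ vs.\ $n+1$ boundary convention that is absorbed into the choice of learner).

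For the active query bound, I run KLMZ's algorithm from Theorem \ref{KLMZ-active} on the pool $S \sim D^n$ with inference-dimension parameter $k$. The key observation, flagged in the paragraph above the corollary, is that this algorithm remains reliable even when $k$ does not upper-bound the true inference dimension of $S$: it only uses $k$ to decide when enough queries have been made to justify an inference, and every such inference is consistent with the target $h$ by definition. Thus on the event of Observation \ref{avg-worst} the algorithm spends at most $2f_Q(4k)\log(n)$ queries, and on its complement it spends at most $n$ queries by simply labeling the entire pool; taking total expectation and then minimizing over $k \leq n$ yields the claimed expression. The one subtlety I expect to need real care is precisely this reliability-with-wrong-$k$ claim for the KLMZ construction; once that is verified by a careful reading of their boosting algorithm, everything else is a direct union bound combined with the symmetry computation above.
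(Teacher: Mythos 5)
Your proposal is correct and follows essentially the same route as the paper: both parts hinge on Observation~\ref{avg-worst} plus KLMZ's worst-case facts (the $(n-k)/n$ coverage bound via symmetry, and Theorem~\ref{KLMZ-active} for the query count), combined through a case split on whether the sample has inference dimension at most $k$. The reliability-with-wrong-$k$ subtlety you flag is exactly what the paper also notes just before stating the corollary, and your $n$ vs.\ $n+1$ bookkeeping remark is the only (harmless) divergence from the paper's $n-1$ drawn points plus one test point convention.
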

\begin{proof}
For the first fact, we will appeal to the symmetry argument of \cite{KLMZ}. Consider a reliable learner $A$ which takes in a sample $S$ of size $n-1$ and infers all possible points in $D$. To compute coverage, we want to know the probability a random point $x \sim D$ is inferred by $A$. Since $S$ was randomly drawn from $D$, this is the same as computing the probability that any point in $S \cup \{x\}$ can be inferred from $S$. By Observation~\ref{avg-worst}, the probability that $S \cup \{x\}$ has inference dimension $k$ is
\[
\left(1-{n \choose k}g(k) \right ).
\]
Since $x$ could equally well have been any point in $S$ by symmetry, if $S$ has inference dimension $k$ the coverage will be at least $\frac{n-k}{n}$ \cite{KLMZ}. Since this occurs with probability at least $1-{n \choose k}g(k)$ by Observation~\ref{avg-worst}, the expected coverage of $A$ is at least
\[
\mathbb{E}[C(A)] \geq \left(1-{n \choose k}g(k) \right ) \frac{n-k}{n}.
\]
The second statement follows from a similar argument. If $S$ has inference dimension $k$, then by Theorem~\ref{KLMZ-active} the expected query complexity is at most $2f_Q(4k)\log(n)$. For a given $k$, the expected query complexity is then bounded by:
\[
\mathbb{E}[q(n)] \leq 2f_Q(4k)\log(n)\Pr[\text{S has inference dimension} \leq k] + n\Pr[\text{S has inference dimension} > k].
\]
Plugging in Observation~\ref{avg-worst} and minimizing over $k$ then gives the desired result.
\end{proof}
In fact, this lemma shows that RPU-learning $(D, X, H)$ with inverse super-exponential average inference dimension loses only log factors over passive or active PAC-learning. Asking for such small average inference dimension may seem unreasonable, but something as simple as label queries on a uniform distributions over convex sets has average inference dimension $2^{-\Theta(n\log(n))}$ with respect to linear separators \cite{label-aid}.
\begin{corollary}
\label{super-exponential-learning}
Given a query set $Q$, let $f_Q(n)$ be the number of queries required to answer all questions on a sample of size $n$. For any $\alpha>0$, let $(D, X, H)$ have average inference dimension $g(n) \leq 2^{-\Omega(n^{1+\alpha})}$. Then the expected sample complexity of Q-Pool-RPU learning is:
\[
\mathbb{E}[n(\varepsilon)] = O \left ( \frac{\log(\frac{1}{\varepsilon})^{1/\alpha}}{\varepsilon} \right ).
\]
Further, the expected query complexity of actively learning a finite sample $S \sim D^n$ is:
\[
\mathbb{E}[q(n)] \leq 2f_Q\left(O\left(\log^{1/\alpha}(n)\right)\right)\log(n).
\]
\end{corollary}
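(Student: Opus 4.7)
The plan is to apply Corollary~\ref{app-reduction} with a careful choice of the auxiliary parameter $k$. The driving estimate, using ${n \choose k} \leq n^k$ together with the hypothesis $g(n) \leq 2^{-c n^{1+\alpha}}$ for some $c>0$, is
\[
{n \choose k}\, g(k) \ \leq\ 2^{\,k\log n \,-\, c\, k^{1+\alpha}}.
\]
Choosing $k = C(\log n)^{1/\alpha}$ with $C$ a sufficiently large constant depending only on $c$ and $\alpha$ drives the exponent below $-2\log n$, so ${n \choose k}g(k) \leq 1/n^{2}$. This is the single inequality behind both parts of the corollary.

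For the first claim, I would substitute this estimate into the expected-coverage bound of Corollary~\ref{app-reduction},
\[
\mathbb{E}[C(A)] \ \geq\ \left(1-{n \choose k} g(k)\right)\frac{n-k}{n} \ \geq\ 1 - {n \choose k} g(k) - \frac{k}{n},
\]
so the expected error is bounded by $O(1/n^2) + O((\log n)^{1/\alpha}/n)$, which is dominated by the second term. Setting $n = \Theta\!\left(\varepsilon^{-1}\log^{1/\alpha}(1/\varepsilon)\right)$ makes $\log n = \Theta(\log(1/\varepsilon))$, which in turn makes $k/n = \Theta(\varepsilon)$. Hence both error terms are $O(\varepsilon)$ and the stated sample complexity follows.

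For the second claim, I would apply the same choice $k = C(\log n)^{1/\alpha}$ to the query-complexity bound of Corollary~\ref{app-reduction}:
\[
\mathbb{E}[q(n)] \ \leq\ 2 f_Q(4k)\log(n)\bigl(1-g(k){n\choose k}\bigr) + n\, g(k){n\choose k}.
\]
The residual term becomes $n\cdot O(1/n^2) = O(1/n)$, which is dwarfed by the leading term, yielding $\mathbb{E}[q(n)] = O\!\left(f_Q\bigl(O(\log^{1/\alpha}(n))\bigr)\log(n)\right)$ as required. The main thing to monitor is the joint calibration of $k$ and $n$: $k$ must be large enough that the super-exponential decay of $g$ dominates the binomial blow-up $n^k$, yet small enough that $k/n$ is negligible and $f_Q(4k)$ stays within the target regime. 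The precise strength of the hypothesis, $g(n)\leq 2^{-\Omega(n^{1+\alpha})}$ rather than merely $2^{-\Omega(n)}$, is what opens this gap, and the $(1/\alpha)$-exponent appearing in both conclusions is a direct consequence of the exchange $k^{1+\alpha}\rightsquigarrow k\log n$.
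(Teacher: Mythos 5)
Your proposal is correct and follows essentially the same route as the paper: choose $k = \Theta(\log^{1/\alpha}(n))$ so that ${n \choose k}g(k)$ becomes polynomially small in $n$, plug into Corollary~\ref{app-reduction}, and calibrate $n = \Theta(\varepsilon^{-1}\log^{1/\alpha}(1/\varepsilon))$; the only cosmetic difference is that you drive ${n\choose k}g(k)$ to $O(1/n^2)$ where the paper is content with $O(1/n)$, and you spell out the exponent bookkeeping that the paper leaves implicit.
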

\begin{proof}
Both results follow from the fact that setting the average inference dimension $k$ to $O\left(\log(n)^{1/\alpha}\right)$ gives
\[
\left(1-{n \choose k}g(k) \right) = 1 - O \left(\frac{1}{n} \right).
\]
Then for the sample complexity, it is enough to plug this into Corollary~\ref{app-reduction} and let $n$ be
\[
n = O \left ( \frac{\log(\frac{1}{\varepsilon})^{1/\alpha}}{\varepsilon} \right ).
\]
Plugging this into the query complexity sets the latter term from Corollary~\ref{app-reduction} to 1, giving:
\begin{align*}
\mathbb{E}[q(n)] &\leq 2f_Q\left(O\left(\log^{1/\alpha}(n)\right)\right)\log(n).
\end{align*}
\end{proof}
We will show that by employing comparison queries we can improve the average inference dimension of linear separators from $2^{\Omega(-n\log(n))}$ to $2^{-\Omega(n^2)}$, but first we will need to review a result on inference dimension from \cite{KLMZ}.
\begin{theorem}[Theorem 4.7 \cite{KLMZ}]
\label{KLMZ}
Given a set $X \subseteq \mathbb{R}^d$, we define the minimal-ratio of $X$ with respect to a hyperplane $h \in H_d$ as:
\[
\frac{\min_{x \in X} |h(x)|}{\max_{x \in X}|h(x)|}.
\]
In other words, the minimal-ratio is a normalized version of margin, a common tool in learning algorithms. Given $X$, define $H_{d,\eta} \subseteq H_d$ to be the subset of hyperplanes with minimal ratio $\eta$ with respect to $X$. The inference dimension of (X,H) is then:
\[
k \leq 10d\log(d+1)\log(2\eta^{-1}).
\]
\end{theorem}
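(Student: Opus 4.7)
The plan is to work in the lifted space $\mathbb{R}^{d+1}$, where each $x \in X$ becomes $\tilde{x} = (x,1)$ and each hyperplane $h = \langle v, \cdot \rangle + b$ becomes a vector $\tilde{h} = (v,b)$. Both a label query and a comparison query then reduce to a sign query on an inner product: $\text{sgn}(h(x)) = \text{sgn}(\langle \tilde{h}, \tilde{x}\rangle)$, and $\text{sgn}(h(x) - h(x')) = \text{sgn}(\langle \tilde{h}, \tilde{x} - \tilde{x}'\rangle)$. Normalizing $\tilde{h}$ so that $\max_{x \in X}|\langle \tilde{h}, \tilde{x}\rangle| = 1$, the minimal-ratio hypothesis places every value $\langle \tilde{h}, \tilde{x}\rangle$ in the annular range $[-1,-\eta] \cup [\eta, 1]$. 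Inference of $x$ from $S-\{x\}$ translates to the statement that the convex cone of $\tilde{h}$'s consistent with $Q(S-\{x\})$ lies strictly on one side of the hyperplane $\{\tilde{h} : \langle \tilde{h}, \tilde{x}\rangle = 0\}$.

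The core idea is a two-level bucketing argument that uses comparisons to reduce the general margin problem to a bounded-ratio primitive. First I would use comparison queries on $S$ to obtain a total order on $\{h(x_1), \ldots, h(x_k)\}$ and partition the annulus $[-1,-\eta] \cup [\eta, 1]$ into $O(\log(1/\eta))$ dyadic intervals $[\pm 2^{-j}, \pm 2^{-j-1}]$. If $k \geq 10d\log(d+1)\log(2\eta^{-1})$, then by pigeonhole some interval $I$ contains at least $10d\log(d+1)$ points of $S$. Within this interval, all points share the same label and all $h$-values lie within a factor of $2$, so restricted to this subset we effectively have a minimal-ratio of $1/2$.

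With that reduction in hand, the second level is a base-case inference bound for $(X', H_{d, 1/2})$: any set $X'$ of size $\Omega(d\log(d+1))$ with constant minimal-ratio admits an inferable point under label-plus-comparison queries. This is a statement purely about halfspace arrangements in $\mathbb{R}^{d+1}$: the sign patterns on the $O((d\log d)^2)$ query vectors (the lifted points and their pairwise differences) are bounded by a Sauer--Shelah count of order $((d\log d)^2)^{d+1}$, while a volume/$\epsilon$-net argument over the unit sphere in $\mathbb{R}^{d+1}$ certifies that at constant minimal-ratio there are only $2^{O(d\log d)}$ distinguishable hyperplanes up to agreement on $X'$. Comparing these counts, and noting that non-inference at every point would require $|X'|+1$ distinct realized sign patterns, yields the desired inferable point. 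Composing the two levels gives $k = O(d \log d \cdot \log(1/\eta))$, matching the claimed bound up to constants.

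The main obstacle is the base case: tightening the VC/$\epsilon$-net argument in Step two to produce the clean multiplicative $d\log(d+1)$ factor rather than an additive $d\log(d+1) + \log(1/\eta)$, which would yield a weaker bound after composition. The subtlety is that one must use both the labels \emph{and} the comparisons on $S - \{x\}$ inside the bucket to prune the cone of consistent $\tilde{h}$'s: labels alone supply only $|X'|$ halfspace constraints, but each pairwise comparison supplies a constraint on a difference vector, and it is exactly this extra $\binom{|X'|}{2}$ constraints combined with the bounded aspect ratio inside the bucket that forces the consistent cone to lie in a halfspace of $\tilde{x}$ for some $x$. Once this primitive is established, the dyadic bucketing and pigeonhole essentially carry themselves, and the constant $10$ simply absorbs the slack from the net and Sauer--Shelah estimates.
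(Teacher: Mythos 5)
This theorem is imported verbatim from KLMZ and is not proved in the paper, so there is no in-paper proof to compare against. Your outer scaffolding --- lift to $\mathbb{R}^{d+1}$, use comparison answers to sort the $h$-values, bucket the annulus $[\eta,1]\cup[-1,-\eta]$ into $O(\log(1/\eta))$ dyadic intervals, pigeonhole to obtain a bucket of $\Omega(d\log d)$ points of constant ratio, then invoke a constant-ratio base case --- matches the overall shape of the KLMZ argument, and the bucketing reduction itself is sound.

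The genuine gap is the base case, and it is more severe than your closing paragraph concedes. You are right that if no point of $X'$ is inferable then there are at least $|X'|+1$ realized label patterns on $X'$: take $h^*$ together with, for each $i$, a consistent $h_i$ that flips only the label of $x_i$. But both quantities you propose to compare against $|X'|+1$ are of order $2^{\Theta(d\log d)}$ when $|X'|=\Theta(d\log d)$: the Sauer--Shelah bound on sign patterns of the $O(m^2)$ lifted query vectors in $\mathbb{R}^{d+1}$ is $m^{O(d)}$, and a volume/$\epsilon$-net over $S^{d}$ at constant minimal ratio gives $2^{O(d\log d)}$ distinguishable hyperplanes. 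Neither is remotely close to $m+1$; a counting argument of this shape can at best establish a base case with $m=2^{\Theta(d\log d)}$, which after composing with the bucketing gives a bound exponentially worse than the claimed $O(d\log d\log(1/\eta))$. No VC-dimension or covering estimate pushes the count of consistent sign patterns down to linear in $m$, which is what your Step 2 requires.

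The constant-ratio base case in KLMZ is not a counting argument but a constructive, convex-geometric one. After rescaling the lifted points so that $\langle\tilde h^*,\tilde x_j\rangle\in[1/2,1]$, one exhibits, for some index $i$, an explicit nonnegative combination of the generators $\{\tilde x_j\}_{j\neq i}$ and the correctly signed differences $\{\pm(\tilde x_j-\tilde x_l)\}_{j,l\neq i}$ (signs read off from the sorted order) that equals $\tilde x_i$. By LP/Farkas duality such a certificate is precisely inference of $x_i$, and it is the Carath\'eodory-style pigeonhole in $\mathbb{R}^{d+1}$, not any sign-pattern count, that produces it from a set of size $O(d\log d)$. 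You would need to replace your Step 2 with a cone-membership argument of this flavor; the lifting in your preamble and the dyadic bucketing can be retained unchanged.
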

Our strategy to prove the average inference dimension of comparison queries follows via a reduction to minimal-ratio. Informally, our strategy is very simple. We will argue that, with high probability, throwing out the closest and furthest points from any classifier leaves a set with large minimal-ratio. We will show this in three main steps. 
\\
\\
\textbf{Step 1:} Assuming concentration of our distribution, a large number of points are contained inside a ball. We will use this to bound the maximum function value for a given hyperplane when its furthest points are removed.
\\
\\
\textbf{Step 2:} Assuming anti-concentration of our distribution, we will union bound over all hyperplanes to show that they have good margin. In order to do this, we will define the notion of a $\gamma$-strip about a hyperplane h, which is simply h ``fattened'' by $\gamma$ in both directions. If not too many points lie inside each hyperplane's $\gamma$-strip, then we can be assured when we remove the closest points the remaining set will have margin $\gamma$. Since we cannot union bound over the infinite set of $\gamma$-strips, we will build a $\gamma$-net of the objects and use this instead.
\\
\\
\textbf{Step 3:} Combining the above results carefully shows that for any hyperplane, removing the furthest and closest points leaves a subsample of good minimal-ratio. In particular, by making sure the number of remaining points matches the bound on inference dimension given in Theorem~\ref{KLMZ}, we can be assured that one of these points may be inferred from the rest as long as our high probability conditions hold.
\begin{theorem}
\label{thm:AID}
Let $D$ be a distribution over $\mathbb{R}^d$ affinely equivalent to another with the following properties:
\begin{enumerate}
\item $\forall \alpha > 0$, $Pr_{x \sim D}[||x|| > d\alpha] \leq \frac{c_1}{\alpha}$
\item $\forall \alpha > 0,$ $\langle v, \cdot \rangle + b \in H_d$, $Pr_{x \sim D}[|\langle x,v \rangle + b| \leq \alpha] \leq c_2\alpha$
\end{enumerate}
Then for $n=\Omega(d\log^2(d))$, the average inference dimension $g(n)$ of $(D, \mathbb{R}^d, H_d)$ is
\[
g(n) \leq 2^{-\Omega\left(\frac{n^2}{d\log(d)}\right)},
\]
where the constant has logarithmic dependence on $c_1,c_2$.
\end{theorem}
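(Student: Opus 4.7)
My plan is to reduce the desired bound on average inference dimension directly to the worst-case minimal-ratio inference bound of Theorem~\ref{KLMZ}. Given any fixed $h=\langle v,\cdot\rangle+b\in H_d$ with $\|v\|=1$, the goal is to show that with probability at least $1-2^{-\Omega(n^2/(d\log d))}$ over $S\sim D^n$, there is a sub-sample $S'\subseteq S$ of size $|S'|\geq 10d\log(d+1)\log(2/\eta)$ whose minimal ratio under $h$ is at least $\eta$. Theorem~\ref{KLMZ} then forces $(S',H_d)$ to have a point inferable from $S'\setminus\{x\}$, and this point is also inferable from $S\setminus\{x\}$ since additional query answers can only help.

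I take $S'$ to be the points whose $|h(x)|$ value is neither among the top $m_1$ nor the bottom $m_2$ of the sample. The case analysis splits on $|b|$: when $|b|>2R$ and every $x\in S'$ has $\|x\|\leq R$, we have $|h(x)|\geq|b|-R>R$ uniformly, giving minimal ratio $\Omega(1)$; when $|b|\leq 2R$, the bound $|h(x)|\leq\|x\|+|b|\leq 3R$ gives $\max_{x\in S'}|h(x)|\leq 3R$, and combined with a lower bound $\gamma$ on $\min_{x\in S'}|h(x)|$ yields minimal ratio $\Omega(\gamma/d)$ when $R=\Theta(d)$.

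For Step~1 (concentration), condition~1 gives $\Pr[\|x\|>R]\leq c_1 d/R$, so choosing $R=\Theta(c_1 d)$ makes this a small constant; a Chernoff bound then shows at most $m_1$ points exceed the ball with failure probability $2^{-\Omega(m_1)}$. The discarded top-$m_1$ contains every outlying point, so every $x\in S'$ satisfies $\|x\|\leq R$. For Step~2 (anti-concentration), condition~2 gives $\Pr[|h(x)|\leq\gamma]\leq c_2\gamma$; setting $\gamma=\Theta(m_2/(c_2 n))$ makes the expected population of the $\gamma$-strip of $h$ equal to $\Theta(m_2)$, and Chernoff bounds the probability that this population exceeds $m_2$ by $2^{-\Omega(m_2)}$. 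Crucially, the definition of average inference dimension quantifies $h$ before taking probability over $S$, so no net over $H_d$ is needed here---though if one desires a single $S'$ to work for every $h$ simultaneously, a $\gamma$-net of size $2^{O(d\log d)}$ on $(v,b)$ suffices, since a $\gamma$-strip around the true $h$ is contained in a $2\gamma$-strip around a nearby net element for points inside the radius-$R$ ball. For Step~3, combining the two good events gives $S'$ minimal ratio $\eta=\Omega(m_2/(c_2 dn))$, and Theorem~\ref{KLMZ} yields $\text{inference dim}(S',H_d)\leq O(d\log d\cdot\log(dn/m_2))$. Balancing $m_1=m_2\asymp n/2$ against the requirement $|S'|=n-m_1-m_2\geq O(d\log^2 d)$ recovers exactly the hypothesis $n=\Omega(d\log^2 d)$.

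The main obstacle is boosting the basic failure rate $2^{-\Omega(n)}$ from the two Chernoff bounds up to the claimed $2^{-\Omega(n^2/(d\log d))}$. I expect this sharpening comes from bucketing the $|h(x)|$ values of $S$ into $O(\log(dn))$ geometric buckets, each of which automatically has minimal ratio $\geq 1/2$: it then suffices to show that \emph{some} bucket contains $\Theta(d\log d)$ points, and the deviation probability for this event can be multiplied exponentially across $\Theta(n/(d\log d))$ disjoint "slots" one can partition $S$ into. Alternatively, one can exploit that comparison queries effectively operate on the $\binom{n}{2}$ differences $x_i-x_j$, which inherit log-concavity by Fact~\ref{log-concave}; applying the minimal-ratio argument on this quadratically-large (though non-independent) set via a $U$-statistic concentration would yield the desired $n^2$ scaling in the exponent.
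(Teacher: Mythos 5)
Your structural outline matches the paper's: discard the points with the largest and smallest $|h(x)|$, argue the survivors have good minimal ratio, and invoke Theorem~\ref{KLMZ}. Your remark that the net over $H_d$ is not strictly needed is also correct for the theorem \emph{as stated}, since the definition of average inference dimension quantifies $h$ before taking the probability over $S$ (the paper builds the net anyway because it later wants the stronger ``for all $h$ simultaneously'' statement for Theorem~\ref{point-location}). You also correctly diagnose that with your parameter choices the argument stalls at a failure probability of $2^{-\Omega(n)}$.

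However, neither of your proposed rescues closes that gap, and the fix the paper actually uses is different and much simpler than what you suggest. The key is that the inference-dimension bound $k \leq 10d\log(d+1)\log(2/\eta)$ depends only \emph{logarithmically} on $\eta^{-1}$. So instead of taking $R = \Theta(d)$ and $\gamma = \Theta(1/n)$ (which yields $\eta = \Theta(1/(dn))$ and $k = O(d\log d\log(dn))$, far below the $n/3$ budget), the paper sets $r = 2^{\Theta(n/(d\log d))}$ and $\gamma = r^{-2}$, giving $\eta = 2^{-\Theta(n/(d\log d))}$ and hence $k = \Theta(n)$, which is exactly the $n/3$ budget. This choice makes the \emph{per-point} tail probabilities $c_1d/r$ and $c_2\gamma$ themselves equal to $2^{-\Theta(n/(d\log d))}$, so the probability that any \emph{specific} set of $n/3$ points all violate the ball or strip bound is $2^{-\Theta(n^2/(d\log d))}$; the $\binom{n}{n/3} = 2^{O(n)}$ union bound over subsets (and the $2^{O(d\log(1/\gamma))} = 2^{O(n/\log d)}$-size net over strips, when used) are absorbed into this rate. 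Your first rescue (geometric bucketing of $|h(x)|$ into $O(\log(dn))$ buckets and multiplying over $\Theta(n/(d\log d))$ disjoint slots) only yields a product of $\Theta(n/(d\log d))$ constant-sized failure events, i.e.\ $2^{-\Omega(n/(d\log d))}$, which is exponentially weaker than the target; your second rescue (a $U$-statistic over the $\binom{n}{2}$ differences) has the right shape numerically but is not developed into an argument, and crucially does not match what the proof actually does. In short: you are missing the ``spend the logarithm'' trick of driving $r$ and $\gamma^{-1}$ to $2^{\Theta(n/(d\log d))}$ so that the Chernoff/union bound itself delivers $n^2$ in the exponent.
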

\begin{proof}
To begin, note that since inference is invariant to affine transformation we can assume that our distribution $D$ satisfies properties 1 and 2 without loss of generality. Our argument will hinge on the minimal ratio based inference dimension bound of \cite{KLMZ}. Let $k$ denote inference dimension of $(X,H_{d,\eta})$. We begin by drawing a sample $S$ of size $n$, and set our goal minimal-ratio $\eta$ such that $k=n/3$. In particular, it is sufficient to let
\[
\eta = 2^{-\theta \left(\frac{n}{d\log(d)}\right )}.
\]
We will now prove that for all hyperplanes, removing the closest and furthest $k$ points from $S$ leaves the remaining points with minimal-ratio $\eta$ with high probability.
\\
\\
To begin, we will show that with high probability, $n-k$ points lie inside the ball $B$ of radius $r=2^{\theta \left (\frac{n}{d\log(d)} \right )}$ about the origin. By condition 1 on our distribution $D$, we know that the probability any $k=n/3$ size subset lies outside radius $r$ is $\leq \left( \frac{c_1d}{r} \right)^k$. Union bounding over all possible size $k$ subsets then gives:
\begin{align*}
Pr[\exists S'\subseteq S, |S'|=n/3 : \forall x \in S', ||x||\geq r] &\leq {n \choose n/3}2^{-\Omega\left (\frac{n^2}{d\log(d)}\right)+O\left(n\log(dc_1)\right)}\\
%&\leq 2^{-\frac{n^2}{90dlog(d+1)}+n/3+n\log(c_1)+n\log(n)/3}\\
&\leq 2^{-\Omega\left (\frac{n^2}{d\log(d)}\right)},
\end{align*}
where the last step follows with $n=\Omega(d\log^2(d))$ and a large enough constant. Assume then that no such subset exists. What implication does this have for the distance of the $k$ furthest points from any given hyperplane? For a given hyperplane $h$, denote the shortest distance between $h$ and any point in $B$ to be $L$. By removing the furthest $k$ points from $h$, we are guaranteed that the maximum distance is $2r+L$. We will separate our analysis into two cases: $L\leq r$ and $L>r$.
\\
\\
In the case that $L\leq r$, our problem reduces to classifiers which intersect the ball $B_2$ of radius $2r$. This further allows us to reduce our question from one of minimal-ratio to margin, as the minimal-ratio is bounded by:
\[
\eta \geq \gamma/(4r).
%= \gamma2^{-n/90d\log(d+1)-2/3}
\]
Then with the correct parameter setting, it is enough to show that $\gamma \leq r^{-2}$ with high probability for all hyperplanes with $L\leq r$. We will inflate our margin to $\gamma$ by removing the $n/3$ points closest to $h$. It is enough to show that $\forall h$ no subset of $n/3$ points lies in $h \times [-\gamma, \gamma]$, which we will call the $\gamma$-strip, or strip of height $\gamma$, about $h$. Condition 2 gives a bound on this occurring for a given subset of $k$ points and hyperplane $h$, but in this case we must union bound over both subsets and hyperplanes.
\\
\\
Naively, this is a problem, since the set of possible hyperplanes is infinite. However, as we have reduced to hyperplanes intersecting the ball, each is defined by a unit vector $v \in S^d$ and a shift $b \in [-2r,2r]=[-\gamma^{-1/2},\gamma^{-1/2}]$. Our strategy will be to build a finite $\gamma$-net $N$ over these strips and show that each point in the net has $O(\gamma^{1/2})$ measure.
\\
\\
Consider the space of normal vectors to our strips, which for now we assume are homogeneous. This is a $d$-unit sphere, which can be covered by at worst $(3\gamma^{-1})^{d}$ $\gamma$-balls. We can extend this $\gamma$-cover to non-homogeneous strips by placing $4\gamma^{-3/2}$ of these covers at regular intervals along the segment $[-2r,2r]$. Formally, each point in this cover $N$ corresponds to some hyperplane $h=\langle v, \cdot \rangle + b$, and is comprised of the union $\gamma$-strips nearby $h$:
\[
N_{v,b} =\bigcup_{\substack{||v-v'|| \leq  \gamma \\ |b-b'| \leq \gamma}} \ \gamma-\text{strip about} \ \langle v', \cdot \rangle + b'.
\]
\begin{figure}
\centering
\includegraphics[scale=.4]{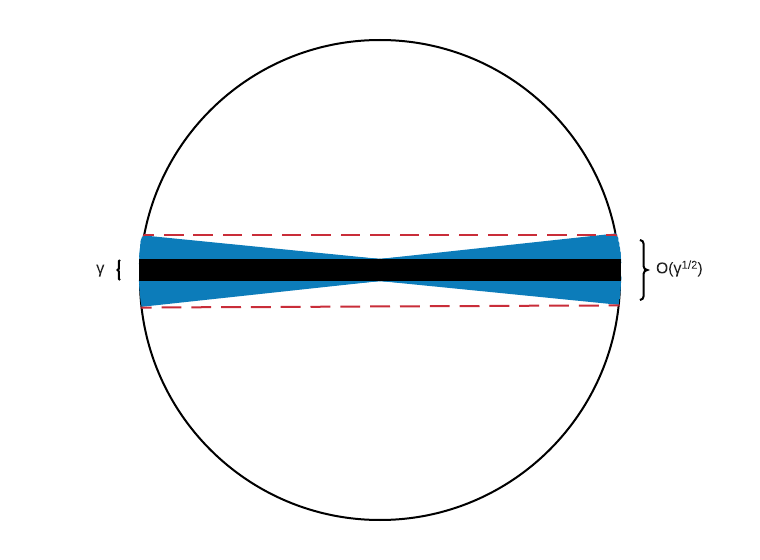}
\caption{The above image is the $\gamma$-ball $N_{(1,0)}$ in the simplified homogeneous case. The black strip corresponds to the central hyperplane, and the blue areas denote the strips with close normal vectors. The red dotted line denotes the larger strip in which the $\gamma$-ball lies.}
\label{fig:net}
\end{figure}
What is the measure of $N_{v,b}$? Note that 
\[N_{v,b} = (N_{v,b} \cap B_2) \cup (N_{v,b} \cap (\mathbb{R}^d \setminus B_2)).\] 
We can immediately bound the measure of latter portion by $\frac{c_1d\sqrt{\gamma}}{2}$ due to concentration. For the former, we will show that $N_{v,b} \cap B_2$ is contained in a small strip with measure bounded by anti-concentration. For a visualization of this, see Figure~\ref{fig:net}. Since the height of a strip is invariant upon translation, we will let $b=0$ for simplicity. Consider any $x'$ in the $\gamma$-strip about some hyperplane $\langle v', \cdot \rangle + b' \in N_{v,0}$. Since $v$ is the center of our ball, by definition we have $||v-v'|| \leq \gamma$, and $|b'| \leq \gamma$. Then for $x'$ in strip $v'$, we can bound $\langle v, x' \rangle$:
\begin{align*}
|\langle v, x' \rangle| &= |\langle v', x' \rangle + \langle v - v', x' \rangle|\\
&\leq 2\gamma +  |\langle v - v', x' \rangle|\\
&\leq 2\gamma + ||v-v'||\cdot||x'||\\
&\leq 2\gamma + \gamma\cdot2r\\
&= 2(\gamma + \sqrt{\gamma})
\end{align*}
In other words, this neighborhood of strips lies entirely within the strip about $v$ of height $2(\gamma + \sqrt{\gamma}),$ which in turn by condition 2 has measure at most $2c_2(\gamma+\sqrt{\gamma})$. 
\\
\\
Finally, note that if no subset of $n/3$ points lies in any $N_{v,b}$, then certainly no such subset lies in a single strip, as $N$ covers all strips. Now we can union bound over subsets and $N$:
\[
Pr[\exists N_{v,b} \in N, S'\subseteq S, |S'|=n/3 : \forall x \in S', x \in N_{v,b}]  \leq {n \choose n/3} \left ( 4(c_1+c_2)d\gamma^{1/2} \right )^{n/3}(4\gamma^{-1})^{d+5/2}.
\]
Recall that $\gamma = 2^{-\theta\left(\frac{n}{d\log(d)}\right)}$. The only term contributing an $n^2$ to the exponent is $\gamma^{n/6}$, and thus plugging in $\gamma$ gives:
\begin{align*}
Pr[\exists N_{v,b} \in N, S'\subseteq S, |S'|=n/3 : \forall x \in S', x \in N_{v,b}] \leq 2^{-\Omega \left (\frac{n^2}{d\log(d)} \right )}.
\end{align*}
The argument for $L>r$ is much simpler. By assuming at least $n-k$ points lie in B, removing the closet k points gives a margin of at least L, and removing the furthest a maximum value of at most $2r+L$. Because $L>r$, the minimal ratio is bounded by:
\[
\eta \geq \frac{r+L}{2r+L} \geq 1/3.
\]
Then in total, assuming $|S \cap B| \geq 2n/3$, the probability over samples $S$ that the subsample $S'$ created from removing the closest and furthest $k$ points has minimal-ratio less than $\eta$ is:
\[
Pr[\exists h \in H_d : \ \text{minimal-ratio of } \ S' < \eta] \leq 2^{-\Omega \left (\frac{n^2}{d\log(d)} \right )}.
\]
Since the probability that $|S \cap B| \geq 2n/3$ is at least $1- 2^{-\Omega \left (\frac{n^2}{d\log(d)} \right )}$, the above bound holds with no assumption on $|S \cap B|$ as well. 
\\
\\
Combining this result together with Theorem~\ref{KLMZ} completes the proof. Let $S'_h$ be the remaining $n/3$ points when the furthest and closest $n/3$ are removed, and assume $S'_h$ has minimal ratio $\eta$. $S'_h$ may thus be viewed as a sample of size $n/3$ from $(S'_h,H_{d,\eta})$. Since $(S'_h,H_{d,\eta})$ has inference dimension $n/3$ for our choice of $\eta$ by Theorem~\ref{KLMZ}, $\forall h$ there must exist $x$ s.t. $Q(S'_h - \{x\})$ infers $x$. Thus the probability that we cannot infer a point is upper bounded by $2^{-\Omega \left (\frac{n^2}{d\log(d)} \right )}$
\end{proof}
Plugging this result into Corollary \ref{app-reduction} gives our desired guarantee on Comparison-Pool-RPU learning query complexity.
\begin{theorem}[Restatement of Theorem~\ref{intro-RPU}]
Let $D$ be a distribution on $\mathbb{R}^d$ which satisfies the conditions of Theorem~\ref{thm:AID}. Then the sample complexity of Comparison-Passive-RPU learning $(D,\mathbb{R}^d,H_d)$ is
\[
n(\varepsilon,\delta) \leq O\left(\frac{d\log(d)\log(d/\varepsilon)\log(1/\delta)}{\varepsilon}\right).
\]
The query complexity of Comparison-Pool-RPU learning $(D,\mathbb{R}^d,H_d)$ is 
\[
q(\varepsilon,\delta) \leq O\left(d\log^2(d/\varepsilon)\log(d)\log(d\log\log(1/\varepsilon))\log(1/\delta) \right ).
\]
\end{theorem}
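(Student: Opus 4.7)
The proof is essentially a plug-and-play application of Theorem~\ref{thm:AID} into Corollary~\ref{app-reduction}, with careful optimization of the threshold parameter $k$ to account for the slight gap between the superexponential bound $g(n) \leq 2^{-\Omega(n^2/(d\log d))}$ we get from Theorem~\ref{thm:AID} and the cleaner $2^{-\Omega(n^{1+\alpha})}$ form assumed by Corollary~\ref{super-exponential-learning}. Concretely, Corollary~\ref{super-exponential-learning} cannot be invoked as a black box because of the extra $d\log d$ factor in the exponent; we must rerun that derivation while tracking dimension.

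\textbf{Step 1: choosing $k$.} By Theorem~\ref{thm:AID}, $g(k) \leq 2^{-ck^2/(d\log d)}$ for some absolute constant $c$. The term $\binom{n}{k} g(k) \leq 2^{k\log n - ck^2/(d\log d)}$ appearing in Corollary~\ref{app-reduction} becomes $\leq \varepsilon/4$ once
\[
k = \Theta\!\left(d\log(d)\log(n/\varepsilon)\right),
\]
as a straightforward calculation shows that choosing the hidden constant large enough makes $ck^2/(d\log d)$ dominate $k\log n + \log(4/\varepsilon)$.

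\textbf{Step 2: passive sample complexity.} Plugging this $k$ into the coverage bound of Corollary~\ref{app-reduction}, the expected coverage of the RPU-learner on a sample of size $n$ is at least
\[
(1 - \varepsilon/4)\,\frac{n-k}{n}.
\]
Forcing $k \leq n\varepsilon/4$ forces $n = \Omega(k/\varepsilon)$, which combined with the value of $k$ from Step~1 gives $n = O\!\left(d\log(d)\log(d/\varepsilon)/\varepsilon\right)$ and expected coverage $\geq 1-\varepsilon/2$. To convert an expectation bound into a high-probability bound, we use the standard reliable-learner boost: run the procedure $O(\log(1/\delta))$ times on fresh samples and output the union of the inferred regions. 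Since each individual learner is reliable (never errs, only outputs $\perp$), the union is reliable too, and a Markov + Chernoff argument gives coverage $\geq 1-\varepsilon$ with probability $\geq 1-\delta$, yielding the stated sample complexity.

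\textbf{Step 3: active query complexity.} For comparison queries, we may take $f_Q(m) = O(m\log m)$ via sorting. Applying the query bound of Corollary~\ref{app-reduction} with the same $k$ and the $n$ from Step~2,
\[
\mathbb{E}[q(n)] \leq 2f_Q(4k)\log n + n\binom{n}{k} g(k) = O(k\log(k)\log n) + O(1),
\]
where we have used $n\binom{n}{k}g(k) = O(1)$ (achievable by inflating the constant in $k$ if necessary). Substituting $k=\Theta(d\log(d)\log(n/\varepsilon))$ and $\log n = \Theta(\log(d/\varepsilon))$ gives the bound $\tilde O\!\left(d\log^2(d/\varepsilon)\log(d)\log(d\log\log(1/\varepsilon))\right)$, and a final $\log(1/\delta)$ blowup from the same boosting argument (redrawing samples on failure) completes the theorem.

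\textbf{Expected obstacle.} Nothing is deep here once Theorem~\ref{thm:AID} is in hand; the only mildly delicate point is verifying that the single choice $k = \Theta(d\log(d)\log(n/\varepsilon))$ simultaneously satisfies the two competing constraints $\binom{n}{k}g(k) \leq \varepsilon/4$ and $k/n \leq \varepsilon/4$, which forces the dependence $n = \Omega(d\log d\log(d/\varepsilon)/\varepsilon)$ self-consistently, and then chasing the resulting $\log\log$ factors through the active-learning bound to land exactly on the stated expression.
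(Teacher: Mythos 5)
Your Steps 1 and 2 reproduce the paper's passive argument essentially verbatim: set $k = \Theta(d\log d\log n)$ so that $\binom{n}{k}g(k)$ is small, balance against $k/n \leq O(\varepsilon)$, solve, and convert expected coverage to high probability by repetition. That part is fine.

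Step 3 has a genuine gap. Corollary~\ref{app-reduction}'s second bullet bounds the expected number of queries to actively RPU-learn a \emph{finite sample} $S\sim D^n$, i.e.\ to determine the labels of all $n$ drawn points. It says nothing about the coverage, over the full distribution $D$, of the RPU-learner you obtain from the (partial) query information $Q(S_1'\cup\cdots\cup S_T')$ gathered by the active algorithm. The first bullet of Corollary~\ref{app-reduction} gives coverage $(1-\binom{n}{k}g(k))\frac{n-k}{n}$, but only for the \emph{passive} learner that knows the \emph{entire} $Q(S)$ — all labels and all pairwise comparisons of the $n$ sampled points. The active algorithm queries a strict subset of $Q(S)$, so you cannot invoke that coverage bound for free. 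Your proof simply plugs the finite-sample query bound in and declares the theorem proved; this is precisely the step the paper flags as ``a bit more nuanced.'' The paper closes the gap with a symmetry-style argument (a variant of the one in \cite{hopkins2020noise}): it rewrites the expected coverage as $\Pr_{x_1,\ldots,x_{n+1}\sim D^{n+1}}[A(x_1,\ldots,x_n)\to x_{n+1}]$, shows this is at least the probability that a uniformly random point in a sample of size $n+1$ is inferred by $A$ but never queried by it, and then uses the fact that with probability $\geq 1-2\varepsilon/3$ the active algorithm infers all of $S$ while querying only an $\varepsilon/3$ fraction of its points. That chain of inequalities — in particular, the coupling argument that running $A$ on $S'\setminus\{x_{n+1}\}$ can only make the configuration of queried subsets more favorable — is absent from your proof and is not a formality; it is the actual content of the active bound. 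Relatedly, the paper takes a somewhat larger pool $n = O(d\log(d)\log^2(d/\varepsilon)/\varepsilon)$ for the active part (an extra $\log(d/\varepsilon)$ over the passive $n$), precisely so that both ``$S$ has inference dimension $\leq k$'' and ``the algorithm queries at most an $\varepsilon/3$ fraction'' hold simultaneously with the right probabilities; your Step 3 reuses the Step 2 value of $n$, which does not accommodate that.
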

\begin{proof}
Recall from Corollary~\ref{app-reduction} that with $n$ samples we can build an RPU learner $A$ with expected coverage:
\[
\mathbb{E}[C(A)] \geq \left(1-{n \choose k}g(k) \right ) \frac{n-k}{n}.
\]
By Theorem~\ref{thm:AID}, letting $k=O(d\log(d)\log(n))$ simplifies this to
\[
\mathbb{E}[C(A)] \geq \left(1-\frac{1}{n} \right ) \frac{n-k}{n}
\]
as long as $n \geq \Omega(\max(c_1+c_2,k))$. Setting the right hand side to $1-\varepsilon$ and solving for n gives
\[
n = O\left(\frac{d\log(d)\log(d/\varepsilon)}{\varepsilon}\right),
\]
and a Chernoff bound gives the desired dependence on $\delta$. 
\\
\\
Bounding the query complexity is a bit more nuanced. Since the above analysis requires knowing both comparisons and labels for all sampled points, we cannot simply draw $n(\varepsilon,\delta)$ points and actively learn their labels as we would do in the PAC case. Instead, consider the following algorithm for learning a finite sample $S$ of size $n$ drawn from $D$. 
\begin{enumerate}
    \item Subsample $S' \subset S$, $|S'|=O(d\log(d)\log(n))$.
    \item Query labels and comparisons on $S'$.
    \item Infer labels in $S$ implied by $Q(S')$.
    \item Restrict to the set of uninferred points, and repeat $T=O(\log(n/\varepsilon))$ times.
\end{enumerate}
KLMZ \cite{KLMZ} proved that if $S$ has inference dimension at most $O(d\log(d)\log(n))$, with the right choice of constants each round of the above algorithm infers half of the remaining points with probability at least a half. Since we repeat the process $T$ times, a Chernoff bound gives that all of $S$ is learned with probability at least $1-\varepsilon/3$. Further, notice that if $n$ is sufficiently large:
\[
n = O\left(\frac{d\log(d)\log^2(d/\varepsilon)}{\varepsilon}\right),
\]
Theorem~\ref{thm:AID} and Observation~\ref{avg-worst} imply $S$ has inference dimension at most $O(d\log(d)\log(n))$ with probability at least $1-\varepsilon/3$, and further the algorithm queries only a $\varepsilon/3$ fraction of points in the sample.
\\
\\
We argue that any algorithm with such guarantees is sufficient to learn the entire distribution. A variation of this fact is proved in \cite{hopkins2020noise}, but we repeat the argument here for completeness. Notice that the expected coverage of $A$ over the entire distribution may be rewritten as the probability that $A$ infers some additionally drawn point, that is:
\[
\underset{S \sim D^n}{\mathbb{E}}[C(A)] = \Pr_{x_1,\ldots,x_{n+1} \sim D^{n+1}}[A(x_1,\ldots x_n) \rightarrow x_{n+1} ]
\]
We argue that the righthand side is bounded by the probability that a point is inferred but not queried by $A$ across samples of size $n+1$. To see this, recall that $A$ operates on $S'=\{x_1,\ldots,x_{n+1}\}$ by querying a set of subsets $S_1',\ldots,S_T' \subset S'$, where each $S_{i+1}'$ is drawn uniformly at random from points not inferred by $S_1',\ldots,S_i'$. If $x_{n+1}$ is learned but not queried by $A$, it must be inferred by some subset $S_i$. Such a configuration of subsets is only more likely to occur when running $A(S' \setminus \{x_{n+1}\})$, since the only difference is that at any step where $x_{n+1}$ has not yet been inferred, $A(S')$ might include $x_{n+1}$ in the next sample. Finally, recall that our algorithm infers all of $S'$ in only $\varepsilon|S'|/3$ queries with probability at least $2\varepsilon/3$. Since $x_{n+1}$ is just an arbitrary point from $D$, the probability it is inferred but not queried is then at least $1-\varepsilon$, which gives the desired coverage.
\\
\\
All that remains is to analyze the query complexity. The total number of queries made is $O(Tk\log(k))$, and repeating this process $\log(1/\delta)$ times returns the desired RPU learner by a Chernoff bound. Thus the total query complexity is:
\[
q(\varepsilon,\delta) \leq O\left(d\log^2(d/\varepsilon)\log(d)\log(d\log\log(1/\varepsilon))\log(1/\delta) \right ) 
\]

%To bound the query complexity, we draw a finite sample of size $n=n(\varepsilon,\delta)$ and use Corollary~\ref{app-reduction} to bound the expected query complexity for the sample:
%\begin{align*} 
%q(\varepsilon,\delta) &\leq \mathbb{E}[q(n)]\log(1/\delta)\\
%&\leq O\left(f_Q(4k)\log(n)\log(1/\delta)\right).
%\end{align*}
%Since we are using comparison queries, $f_Q(4k) \leq O(k\log(k))$, giving us
%\begin{align*}
%\mathbb{E}[q(n)] &\leq O(k\log(k)\log(n))\\
%&= O \left( d\log^2(d)\log\log(d)\log^2(n)\log\log(n) \right).
%\end{align*}
\end{proof}
The necessary conditions in Theorem~\ref{thm:AID} are satisfied by a wide range of distributions. The concentration bound is satisfied by any distribution whose norm has finite expectation, and the anti-concentration bound is satisfied by many continuous distributions. Log-concave distributions, for instance, easily satisfy the conditions.
\begin{proposition}
log-concave distributions satisfy the conditions of Theorem~\ref{thm:AID} with $c_1=c_2=O(1)$.
\end{proposition}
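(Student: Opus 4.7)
The plan is to reduce everything to the one-dimensional isotropic case and then quote the items of Fact~\ref{log-concave} more or less off the shelf. First I would invoke item~2 of Fact~\ref{log-concave}, which says that any log-concave $D$ is affinely equivalent to an isotropic log-concave distribution. Since Theorem~\ref{thm:AID} requires only affine equivalence to a distribution satisfying the two conditions, this lets me assume without loss of generality that $D$ is isotropic log-concave.

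For condition~1, I would apply item~4 of Fact~\ref{log-concave} with $a = d\alpha$, yielding
\[
\Pr_{x \sim D}[\|x\| > d\alpha] \leq e^{1 - d\alpha/\sqrt{d}} = e \cdot e^{-\sqrt{d}\,\alpha}.
\]
Since $e^{-t} \leq 1/t$ for every $t > 0$, and $d \geq 1$, this is at most $e/(\sqrt{d}\,\alpha) \leq e/\alpha$, giving $c_1 = e = O(1)$.

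For condition~2, I would first reduce to one dimension: for a unit vector $v$, the projection $\langle x, v\rangle$ is a one-dimensional marginal of $D$, hence itself isotropic log-concave by item~5 of Fact~\ref{log-concave}. The event $|\langle x, v\rangle + b| \leq \alpha$ is precisely $\langle x, v\rangle \in [-b-\alpha,\, -b+\alpha]$, an interval of length $2\alpha$. Item~6 of Fact~\ref{log-concave}, applied to the isotropic log-concave marginal, bounds this probability by $2\alpha$, so $c_2 = 2 = O(1)$. For a general $v$ not of unit norm, one rescales by $\|v\|$, which does not affect the sign queries involved and so does not affect the membership of $\langle v,\cdot\rangle + b$ in $H_d$; alternatively, one observes that rescaling only shrinks the relevant interval.

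The only mildly delicate point is that shifting by $b$ breaks isotropy, so one cannot invoke item~6 on $\langle x,v\rangle + b$ directly. This is handled by applying item~6 to the unshifted marginal on the translated interval $[-b-\alpha, -b+\alpha]$, exploiting the fact that item~6 bounds the mass of an arbitrary interval by its length, not merely intervals centered at the origin. With both conditions verified, the proof is complete.
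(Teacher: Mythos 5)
Your proof is correct and follows essentially the same path as the paper's: reduce to the isotropic case via affine equivalence, verify condition~1 from the norm concentration bound (the paper obtains $c_1=1$ by maximizing $\alpha e^{1-\sqrt{d}\alpha}$ directly rather than via $e^{-t}\le 1/t$, but both give $O(1)$), and verify condition~2 by noting that the mass of the strip equals the mass of an interval of length $2\alpha$ under the one-dimensional isotropic log-concave marginal. Your explicit remark that item~6 applies to arbitrary intervals, so the shift $b$ is harmless, is the right way to close the small gap the paper leaves implicit.
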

\begin{proof}
Any log-concave distribution $D$ is affinely equivalent to an isotropic log-concave distribution $D'$. Isotropic log-concave distributions have the following properties \cite{Balcan}:
\begin{enumerate}
\item $\forall a > 0$, $P_{x \in D'}[||x|| \geq a] \leq e^{-a/\sqrt{d}+1}$
\item All marginals of $D'$ are isotropic log-concave.
\item If $d=1, P_{x \in D'}[x \in [a,b]] \leq |b-a|$
\end{enumerate}
We want to show that these three properties satisfy the two conditions of Theorem~\ref{thm:AID}. Property 1 satisfies condition 1 with constant $c_1=1$. Properties 2 and 3 imply condition 2 with constant $c_2=2$, as the probability mass of a strip is equivalent to the probability mass of the one dimensional marginal along the normal vector.
\end{proof}
With significant additional work, Balcan and Zhang \cite{s-concave} show that an even more general class of distributions satisfies these properties, $s$-concave distributions. 
\begin{proposition}[Theorems 5,11 \cite{s-concave}]
s-concave distributions satisfy the conditions of Theorem 3.10\footnote{Condition 1, however, must be changed to $\forall \alpha > 16$... rather than $0$, which does not affect the proof.} for $s\geq-\frac{1}{2d+3}$ and:
\[
c_1 = \frac{4\sqrt{d}}{c}, c_2 = 4
\]
for some absolute constant $c>0$. 
\end{proposition}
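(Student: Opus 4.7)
The plan is to reduce to the isotropic case and then directly invoke the analogues of the log-concave facts (Theorems 5 and 11 of \cite{s-concave}) for isotropic s-concave distributions. Since both conditions of Theorem~\ref{thm:AID} are invariant under affine transformations (the statement only requires affine equivalence to a distribution satisfying them), I would first replace $D$ by its isotropic representative $D'$, which remains s-concave for $s \geq -\tfrac{1}{2d+3}$ in the regime where isotropic transformation is well-defined. Condition 1 then reduces to a tail bound on $\|x\|$ under $D'$, and Condition 2 reduces to an upper bound on the density of any one-dimensional linear marginal of $D'$.

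For Condition 1, I would apply the concentration result of \cite{s-concave} (their Theorem 5), which gives a tail estimate of the form $\Pr_{x \sim D'}[\|x\| \geq t\sqrt{d}] \leq e^{-ct + O(1)}$ valid once $t$ is at least some absolute constant (this is why the statement is adjusted to $\alpha > 16$, as mentioned in the footnote). Substituting $t = \alpha\sqrt{d}$ and comparing the exponential tail against $\tfrac{c_1}{\alpha}$ yields Condition 1 with $c_1 = \tfrac{4\sqrt{d}}{c}$ after absorbing the $O(1)$ multiplicative factor into the constant.

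For Condition 2, I would project to the one-dimensional marginal in the direction $v$: by the projection theorem for s-concave distributions (Theorem 11 of \cite{s-concave}), this marginal is itself isotropic $s'$-concave for a suitable $s' \geq s$, and hence has a density uniformly bounded by an absolute constant (in fact by $2$ in the relevant parameter range, paralleling fact 6 of Fact~\ref{log-concave}). Integrating this density over the interval $[-\alpha + c, \alpha + c]$ (where $c$ accounts for the shift $b$) gives a bound of $4\alpha$, yielding $c_2 = 4$.

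The main obstacle is just tracking constants through the s-concave concentration and marginal-density bounds of \cite{s-concave} to confirm they produce exactly $c_1 = 4\sqrt{d}/c$ and $c_2 = 4$; the qualitative argument is a verbatim translation of the log-concave case. A minor technical point is that the tail bound of Theorem 5 of \cite{s-concave} only kicks in once $\alpha$ exceeds a small absolute constant (hence the modification $\alpha > 16$ in the footnote), but this does not affect the proof of Theorem~\ref{thm:AID}: the small-$\alpha$ regime only appears inside union bounds where the trivial inequality $\Pr[\cdot] \leq 1$ already suffices, since the relevant parameter $\gamma$ in that proof is exponentially small in $n/(d\log d)$ and therefore well below any fixed threshold for the ranges of $n$ considered.
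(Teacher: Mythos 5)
The paper gives no explicit proof of this proposition; it is stated as a direct consequence of Theorems~5 and~11 of \cite{s-concave}, with the log-concave case just before it serving as a template. Your high-level reconstruction (pass to the isotropic representative, use the concentration theorem for Condition~1, use the marginal anti-concentration theorem for Condition~2, note that the threshold $\alpha>16$ is harmless since the proof of Theorem~\ref{thm:AID} only ever invokes Condition~1 at exponentially large radii $r=2^{\Theta(n/(d\log d))}$) matches the intended argument, and your observation about the footnote is exactly right.

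However, there is a concrete error in your Condition~1 step. You write the tail bound of Theorem~5 of \cite{s-concave} as $\Pr_{x\sim D'}[\|x\|\ge t\sqrt{d}]\le e^{-ct+O(1)}$, but this exponential form holds only at $s=0$ (log-concave; cf.\ item~4 of Fact~\ref{log-concave}). For $s<0$, $s$-concave distributions are fat-tailed, and the correct bound has the polynomial shape $(1-cst)^{1/s}$ (or a close variant); for $s=-\frac{1}{2d+3}$ this is $\bigl(1+\tfrac{ct}{2d+3}\bigr)^{-(2d+3)}$, which decays polynomially in $t$. This is not a cosmetic point: the polynomial decay is precisely \emph{why} Condition~1 of Theorem~\ref{thm:AID} is stated as the weak $\Pr[\|x\|>d\alpha]\le c_1/\alpha$ rather than as an exponential tail. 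Your derivation that "comparing the exponential tail against $c_1/\alpha$ yields $c_1=4\sqrt{d}/c$" does not go through as written: an exponential tail would dominate $c_1/\alpha$ for essentially any choice of $c_1$, so the specific value $4\sqrt{d}/c$ (with $\sqrt{d}$ in the numerator rather than the denominator) cannot be recovered that way. To get the stated constant you need to read the polynomial tail bound of \cite{s-concave} directly and substitute $t=\sqrt{d}\alpha$, which is where both the $\sqrt{d}$ factor and the need for the threshold $t>16$ actually come from.

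Your Condition~2 argument is fine in outline: one-dimensional marginals of isotropic $s$-concave distributions are themselves $s'$-concave with bounded density (Theorem~11 of \cite{s-concave}), and integrating that density over a width-$2\alpha$ interval gives the strip bound with $c_2=4$.
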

Theorem~\ref{thm:AID} provides a randomized comparison LDT for solving the point location problem. Because our method involves reducing to worst case inference dimension, we may use the derandomization technique (Theorem 1.8) of \cite{K-sum} to prove the existence of a deterministic LDT.
\begin{corollary}[Restatement of Theorem~\ref{point-location}]
Let $D$ be a distribution satisfying the criterion of Theorem~\ref{intro-RPU}, $x \in \mathbb{R}^d$, and $h_1,\ldots,h_n \sim D^n$. Then for $n\geq \Omega(d\log^2(d))$ there exists an LDT using only label and comparison queries solving the point location problem with expected depth
\[
O(d\log(d)\log(d\log(n))\log^2(n)).
\]
\end{corollary}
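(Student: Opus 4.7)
The plan is to dualize the point location problem into the active RPU setting of Theorem~\ref{intro-RPU} and then invoke the average inference dimension machinery essentially as a black box. First, I would observe that by symmetry of the inner product, a label query $\mathrm{sgn}(h_i(x))$ and a comparison query $\mathrm{sgn}(h_i(x)-h_j(x))$ can equally be read with the roles of points and hyperplanes reversed: $h_1,\ldots,h_n \sim D^n$ is an i.i.d.\ sample from the instance distribution $D$, and the (unknown) point $x$ plays the role of the hidden hypothesis whose labels on the sample we must deduce. Under this reinterpretation, building an LDT that solves point location on $\{h_1,\ldots,h_n\}$ is exactly the problem of actively learning all labels of an i.i.d.\ sample of size $n$ under comparison and label queries against the hidden hypothesis $x$, so Theorem~\ref{thm:AID} applies and yields average inference dimension $g(n) \leq 2^{-\Omega(n^2/(d\log d))}$.

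Second, I would set the inference-dimension threshold $k = \Theta(d\log(d)\log n)$. Observation~\ref{avg-worst} combined with $\binom{n}{k}\le (en/k)^k$ shows that, for $n \geq \Omega(d\log^2 d)$ and a suitable constant in $k$, the random sample $\{h_1,\ldots,h_n\}$ has (worst-case) inference dimension at most $k$ with probability $1 - o(1/n)$. Conditional on this good event, Theorem~\ref{KLMZ-active} provides an active algorithm that answers every label and comparison on the sample using, in expectation, at most $2 f_Q(4k)\log n$ queries; since comparison and label queries on $k$ points can be resolved in $f_Q(k) = O(k\log k)$ queries via sorting, this gives expected depth
\[
O\bigl(k\log(k)\log n\bigr) \;=\; O\bigl(d\log(d)\log(d\log n)\log^2 n\bigr),
\]
exactly matching the target bound.

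Third, I would lift this randomized guarantee to a deterministic LDT per arrangement. For each realization of $H$ in the good event, either a simple averaging over the KLMZ algorithm's internal coins or, more explicitly, the derandomization in Theorem 1.8 of \cite{K-sum} produces a deterministic comparison-LDT $T_H$ whose depth is at most the expectation above. For the $o(1/n)$ bad event where the inference dimension exceeds $k$, the trivial tree that queries all $n$ labels and sorts uses $O(n\log n)$ queries and contributes only $o(\log n)$ to the overall expected depth. Adding the two contributions yields the claim.

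The step I expect to be the main obstacle is making the point/hyperplane duality airtight: one has to check that Theorem~\ref{thm:AID}'s concentration and anti-concentration hypotheses on $D$ survive the reinterpretation of $D$ as a distribution over hyperplanes, and that the inference relation under comparison queries is genuinely symmetric under the swap so that the AID bound transfers without loss. Once that is pinned down, the binomial union bound, the sorting cost $f_Q$, and the derandomization are essentially a routine assembly of results already proved earlier in the paper.
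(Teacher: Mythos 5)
Your proposal is correct and follows essentially the paper's argument: invoke Theorem~\ref{thm:AID} together with the average-to-worst-case reduction (Observation~\ref{avg-worst}) to show the random sample has inference dimension $O(d\log(d)\log n)$ except with probability $o(1/n)$, apply Theorem~\ref{KLMZ-active} to get a randomized LDT of expected depth $O(d\log(d)\log(d\log n)\log^2 n)$, derandomize via Theorem~1.8 of \cite{K-sum}, and absorb the negligible bad event. The duality you flag as the ``main obstacle'' is not actually one in the paper's setup: the $h_i$ are drawn directly from $D$ as points in $\mathbb{R}^d$ and the hidden classifier determined by $x$ is an element of $H_d$, so the concentration and anti-concentration hypotheses of Theorem~\ref{thm:AID} are imposed on $D$ as given and no reinterpretation or transfer of assumptions is required.
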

\section{Experimental Results}
To confirm our theoretical findings, we have implemented a variant of our reliable learning algorithm for finite samples. Our simulations were run on a combination of the Triton Shared Computing Cluster supported by the San Diego Supercomputer Center, and the Odyssey Computing Cluster supported by the Research Computing Group at Harvard University. For a given sample size or dimension, the query complexity we present is averaged over $500$ trials of the algorithm.
\subsection{Algorithm}
We first note a few practical modifications. First, our algorithm labels finite samples drawn from the uniform distribution over the unit ball in $d$-dimensions. Second, to match our methodology in lower bounding Label-Pool-RPU learning, we will draw our classifier uniformly from hyperplanes tangent to the unit ball. Finally, because the true inference dimension of the sample might be small, our algorithm guesses a low potential inference dimension to start, and doubles its guess on each iteration with low coverage. 
\\
\\
Our algorithm will reference two sub-routines employed by the original inference dimension algorithm in \cite{KLMZ}, Query$(Q,S)$, and Infer$(S,C)$. Query$(Q,S)$ simply returns $Q(S)$, the oracle responses to all queries on $S$ of type $Q$. Infer$(S,C)$ builds a linear program from constraints $C$ (solutions to some Query$(Q,S)$), and returns which points in $S$ are inferred.
\\
\\
\begin{algorithm}[H]
\label{alg:practice}
\SetAlgoLined
\KwResult{Labels all points in sample $S\sim (B^d)^N$ using query set $Q$}
 S $\sim (B^d)^N$; 
 Classifier $\sim S^d, B^1$\;
 Subsample\_Size $= d+1$; 
 Uninferred $= S$; 
 Subsample\_List = []\;
 \While{size(Uninferred) $ > g(\text{Subsample\_Size})$}{
  Subsample $\sim$ Uninferred[Subsample\_Size]\;
  Subsample\_List.extend(Subsample)\;
  Inferred\_Points = Infer(Uninferred, Query(Q, Subsample\_List))\;
  \If{size(Inferred\_Points) $ <$ size(Uninferred)$/2$ }{
   Subsample\_Size $*= 2$;
   }
   Uninferred.remove(Inferred\_Points)
 }
 Query(Label,Uninferred)
 \caption{Perfect-Learning$(N,Q,d,g(n))$}
\end{algorithm}
Note that this algorithm is efficient. The while loop runs at most $\log(N)$ times, and each loop solves at most $N$ linear programs with $O(f_Q(N))$ constraints in $d+1$ dimensions. Thus the total running time of Algorithm~\ref{alg:practice} is Poly$(N,d)$. Further note for simplicity we have chosen $g(n)=1$ for labels and $g(n)=2$ for comparisons and will drop this parameter in the following. 
\subsection{Query Complexity}
Our theoretical results state that for an adversarial choice of classifier, the number of queries Perfect-Learning($N$, Comparison, $d$) performs is logarithmic compared to Perfect-Learning($N$, Labels, $d$). The left graph in Figure~\ref{graph} shows this correspondence for uniformly drawn hyperplanes tangent to the unit ball and sample values ranging from $1$ to $2^{10}$ in log-scale. In particular, it is easy to see the exponential difference between the Label query complexity in blue, and the Comparison query complexity in orange. Further, our results suggest that Perfect-Learning($N$, Comparison, $d$) should scale near linearly in dimension. The right graph in Figure~\ref{graph} confirms that this is true in practice as well.
\begin{figure}[h!]
    \centering
    \includegraphics[scale=.5]{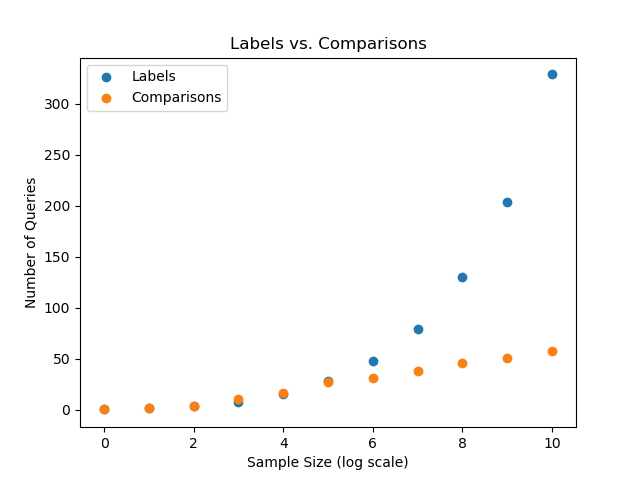}
    \includegraphics[scale=.5]{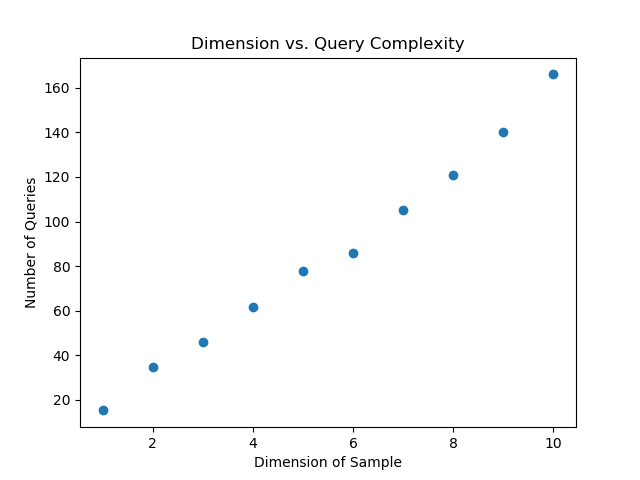}
    \caption{The left graph shows a log-scale comparison of Perfect-Learning($N$, Label, $3$) and Perfect-Learning($N$, Comparison, $3$). The right graph shows how Perfect-Learning($256$, Comparison, $d$) scales with dimension.}
    \label{graph}
\end{figure}

\section{Further Directions}
\subsection{Average Inference Dimension and Enriched Queries}
KLMZ \cite{KLMZ} propose looking for a simple set of queries with finite inference dimension $k$ for $d$-dimensional linear separators. In particular, they suggest looking at extending to t-local relative queries, questions which ask comparative questions about $t$ points. Unfortunately, simple generalizations of comparison queries seem to fail, but the problem of analyzing their average inference dimension remains open. When moving from $1$-local to $2$-local queries, our average inference dimension improved from:
\[
2^{-\tilde{O}(n)} \to 2^{-\tilde{O}(n^2)}
\]
If there exist simple relative t-local queries with average inference dimension $2^{-\tilde{O}(n^t)}$ over some distribution $D$, then it would imply a passive RPU-learning algorithm over $D$ with sample complexity
\[
n(\varepsilon,\delta) = O\left ( \frac{\log\left ( \frac{1}{\varepsilon}\right )^{1/(t-1)}}{\varepsilon} \log \left (\frac{1}{\delta} \right ) \right )
\]
and query complexity
\[
q(\varepsilon,\delta) \leq O\left(2f_Q \left(4\log^{1/(t-1)}(n) \right)\log\left (\frac{1}{\delta} \right)\log(n) \right)
\]
One such candidate 3-local query given points $x_1,x_2$, and $x_3$ is the question: is $x_1$ closer to $x_2$, or $x_3$? KLMZ suggest looking into this query in particular, and other similar types of relative queries are studied in \cite{Xing, schultz, agarwal, mcfee, huang, tamuz, qian}.
\subsection{Average Inference Dimension $\implies$ Lower Bounds}
We showed in this paper that average inference dimension provides upper bounds on passive and active RPU-learning, but to show average inference dimension characterizes the distribution dependent model, we would need to show it provides a matching lower bound. The first step in this process would require examining the tightness of our average to worst case reduction. 
\begin{observation}
\label{wta-tightness}
Let $(D,X,H)$ have average inference dimension $\omega(g(k))$. Then the probability that a random sample $S \sim D^n$ has inference dimension $\leq k$ is:
\[
1- g(k){n \choose k} \leq \Pr[\text{inference dimension of} \ S \leq k ] \leq (1-g(k))^{n/k} 
\]
\end{observation}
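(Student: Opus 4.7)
I would prove the two inequalities separately, since they require different ideas.

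The lower bound $1 - g(k)\binom{n}{k} \leq \Pr[\text{inference dim of } S \leq k]$ follows immediately from Observation~\ref{avg-worst}, which itself is just a union bound: by the definition of average inference dimension, each of the $\binom{n}{k}$ size-$k$ subsets of $S$ fails to contain an inferable point with probability at most $g(k)$, so union bounding across subsets gives the claim. No new work is needed beyond invoking the earlier observation.

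For the upper bound $\Pr[\text{inference dim of } S \leq k] \leq (1-g(k))^{n/k}$, the plan is to exploit independence across a fixed partition of $S$. The structural observation is that if the inference dimension of $S$ is at most $k$, then \emph{every} size-$k$ subset of $S$ must contain a point inferable from the remaining $k-1$. In particular, if we fix any partition of $S$ into $\lfloor n/k \rfloor$ disjoint blocks of size exactly $k$, the event in question forces each of these blocks to contain an inferable point. Because $S$ is drawn i.i.d.\ from $D$, the blocks of this fixed partition are mutually independent size-$k$ samples, so the probability that all of them simultaneously contain an inferable point factors as a product. Interpreting the hypothesis ``average inference dimension $\omega(g(k))$'' as saying that $g(k)$ is a (matching) lower bound on the probability that a size-$k$ sample contains no inferable point for some worst-case $h \in H$, each block contains an inferable point with probability at most $1-g(k)$, and multiplying across the $\lfloor n/k \rfloor$ independent blocks yields the claim.

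The main obstacle is interpretive rather than mathematical: the original definition of average inference dimension places $g$ as an \emph{upper} bound on the failure probability, so for the right-hand inequality to have any content one must read the hypothesis as also providing a matching lower bound on that failure probability for the appropriate worst-case hypothesis. Once this is pinned down, the argument itself is elementary, consisting of a union bound on one side and a product over independent blocks on the other. Notably, the approach does nothing to close the potential gap between $1 - g(k)\binom{n}{k}$ and $(1-g(k))^{n/k}$; tightening the observation beyond the two crude bounds would presumably require a substantially different technique, perhaps correlating overlapping subsets rather than restricting attention to a single disjoint partition.
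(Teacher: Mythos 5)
The paper gives no proof of Observation~\ref{wta-tightness}; it appears without argument in the speculative ``Further Directions'' section, so there is no official proof to compare against. Your approach is the natural one and is essentially correct. The left inequality is indeed just Observation~\ref{avg-worst} applied verbatim. For the right inequality, your key observation---that inference dimension at most $k$ for $S$ forces every size-$k$ subset, and in particular every block of a fixed disjoint partition of $S$, to contain a point inferable from the rest (using monotonicity of inference under supersets to pass from the inference-dimension-sized subsets to size-$k$ subsets)---is correct, and the independence of the blocks then factors the probability as you describe. The same disjoint-block device actually appears in the paper's proof of Proposition~\ref{RPU:label}, in a different context, so your technique is clearly in the spirit of what the authors intended.

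You are also right to flag the interpretive tension: as stated, ``average inference dimension $\omega(g(k))$'' is incoherent with the paper's definition (which bounds the failure probability from above), and the two inequalities in the observation require $g(k)$ to be simultaneously an upper and a lower bound on $\sup_{h} \Pr_{S\sim D^k}[\nexists x:\,S-\{x\}\rightarrow_h x]$. Reading the hypothesis as asserting tightness of $g$ is the only way the statement has content, and making that explicit, as you do, is the right call. Two small technical remarks. First, the block argument literally yields $(1-g(k))^{\lfloor n/k\rfloor}$, which is weaker than the stated $(1-g(k))^{n/k}$ since the base is less than $1$; the observation is evidently being informal about the floor. Second, one should fix a single worst-case $h^*$ realizing the lower bound on the failure probability and then observe that $\{\text{inference dimension of }S\le k\}$ is contained in the event that every block has an inferable point \emph{with respect to that particular $h^*$}; your write-up implicitly does this but it is worth stating, since the definition of inference dimension carries a $\forall h$ quantifier that must be specialized before independence can be invoked.
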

Even with a tight version of Observation~\ref{wta-tightness}, it is an open problem to apply such a result as a lower bound technique for the PAC or RPU models.
\subsection{Noisy and Agnostic Learning}
The models we have proposed in this paper are unrealistic in the fact that they assume a perfect oracle. RPU-learning in particular must be noiseless due to its zero-error nature. This raises a natural question: \textit{can inference dimension techniques be applied in a noisy or non-realizable setting?}. Hopkins, Kane, Lovett, and Mahajan \cite{hopkins2020noise} recently made progress in this direction, introducing a relaxed version of RPU-learning called Almost Reliable and Probably Useful learning. They are able to provide learning algorithms under the popular \cite{xu2017noise,Balcan, awasthi2015efficient, wang,ramdas2013optimal, xiang2011classification, awasthi2016learning} Massart \cite{massart2006risk} and Tsybakov noise \cite{tsybakov,castro2006upper} models. 

However, many problems in this direction remain completely open, such as agnostic or more adversarial settings. It remains unclear whether inference based techniques are robust to these settings, since small adversarial adjustments to the inference LP can cause substantial corruption to its output.

\subsection{Further Applications of RPU-learning}
In this paper we offer the first set of positive results on RPU-learning since the model was introduced by Sloan and Rivest \cite{Rivest}. RPU-learning has potential for both practical and theoretical applications. On the practical side, positive results on RPU-learning, or a slightly relaxed noisy model, may allow us to build predictors with better confidence levels. On the theoretical side efficient RPU-learners have potential applications for circuit lower bounds \cite{circuit}.
\subsubsection*{Acknowledgments}
The authors would like to thank Sanjoy Dasgupta for helpful discussions throughout the process and in particular pointing us to previous work in RPU-learning.

\medskip

\bibliography{sample}

\end{document}